\theoremstyle{plain}
\theoremstyle{definition}
\newtheorem{definition}{Definition} % Independent numbering
\theoremstyle{remark}
\newtheorem{example}{Example}
\definecolor{mydarkblue}{rgb}{0,0.08,0.45}
\newcommand{\rvline}{\hspace*{-\arraycolsep}\vline\hspace*{-\arraycolsep}}
\icmltitlerunning{Nonparametric Identification of Latent Concepts}
\begin{document}

\doparttoc % Tell to minitoc to generate a toc for the parts
\faketableofcontents % Run a fake tableofcontents command for the partocs

\twocolumn[
\icmltitle{Nonparametric Identification of Latent Concepts}

% It is OKAY to include author information, even for blind
% submissions: the style file will automatically remove it for you
% unless you've provided the [accepted] option to the icml2025
% package.

% List of affiliations: The first argument should be a (short)
% identifier you will use later to specify author affiliations
% Academic affiliations should list Department, University, City, Region, Country
% Industry affiliations should list Company, City, Region, Country

% You can specify symbols, otherwise they are numbered in order.
% Ideally, you should not use this facility. Affiliations will be numbered
% in order of appearance and this is the preferred way.
\icmlsetsymbol{equal}{*}

\begin{icmlauthorlist}
\icmlauthor{Yujia Zheng}{cmu,equal}
\icmlauthor{Shaoan Xie}{cmu,equal}
\icmlauthor{Kun Zhang}{cmu,mbzuai}
\end{icmlauthorlist}

\icmlaffiliation{cmu}{Carnegie Mellon University}
\icmlaffiliation{mbzuai}{Mohamed bin Zayed University of Artificial Intelligence}

% \icmlcorrespondingauthor{Yujia Zheng}{yujiazh@cmu.edu}
% \icmlcorrespondingauthor{Firstname2 Lastname2}{first2.last2@www.uk}

% You may provide any keywords that you
% find helpful for describing your paper; these are used to populate
% the "keywords" metadata in the PDF but will not be shown in the document
\icmlkeywords{Machine Learning, ICML}

\vskip 0.3in
]

% this must go after the closing bracket ] following \twocolumn[ ...

% This command actually creates the footnote in the first column
% listing the affiliations and the copyright notice.
% The command takes one argument, which is text to display at the start of the footnote.
% The \icmlEqualContribution command is standard text for equal contribution.
% Remove it (just {}) if you do not need this facility.

%\printAffiliationsAndNotice{}  % leave blank if no need to mention equal contribution
\printAffiliationsAndNotice{\icmlEqualContribution} % otherwise use the standard text.

\begin{abstract}
We are born with the ability to learn concepts by comparing diverse observations. This helps us to understand the new world in a compositional manner and facilitates extrapolation, as objects naturally consist of multiple concepts. In this work, we argue that the cognitive mechanism of comparison, fundamental to human learning, is also vital for machines to recover true concepts underlying the data. This offers correctness guarantees for the field of concept learning, which, despite its impressive empirical successes, still lacks general theoretical support. Specifically, we aim to develop a theoretical framework for the identifiability of concepts with multiple classes of observations. We show that with sufficient diversity across classes, hidden concepts can be identified without assuming specific concept types, functional relations, or parametric generative models. Interestingly, even when conditions are not globally satisfied, we can still provide alternative guarantees for as many concepts as possible based on local comparisons, thereby extending the applicability of our theory to more flexible scenarios. Moreover, the hidden structure between classes and concepts can also be identified nonparametrically. We validate our theoretical results in both synthetic and real-world settings.
\end{abstract}

% \vspace{-0.3em}
\section{Introduction}
% \vspace{-0.3em}

Humans possess an innate ability to learn concepts, i.e., conceptual factors underlying the observational world, by comparing diverse classes of observations, a process foundational to cognitive development \citep{rosch1973natural, fodor1988connectionism}. For example, a child distinguishes between different types of animals not by memorizing each species separately, but by observing and comparing differences between various species, thereby identifying the unique concepts that define each group (e.g., Fig. \ref{fig:example}). This mechanism of learning through comparison has been extensively studied and verified across various fields, including psychology and neuroscience, affirming its universality and effectiveness \citep{bruner1957study}.

\begin{figure}
    \centering
    \includegraphics[width=0.4\textwidth]{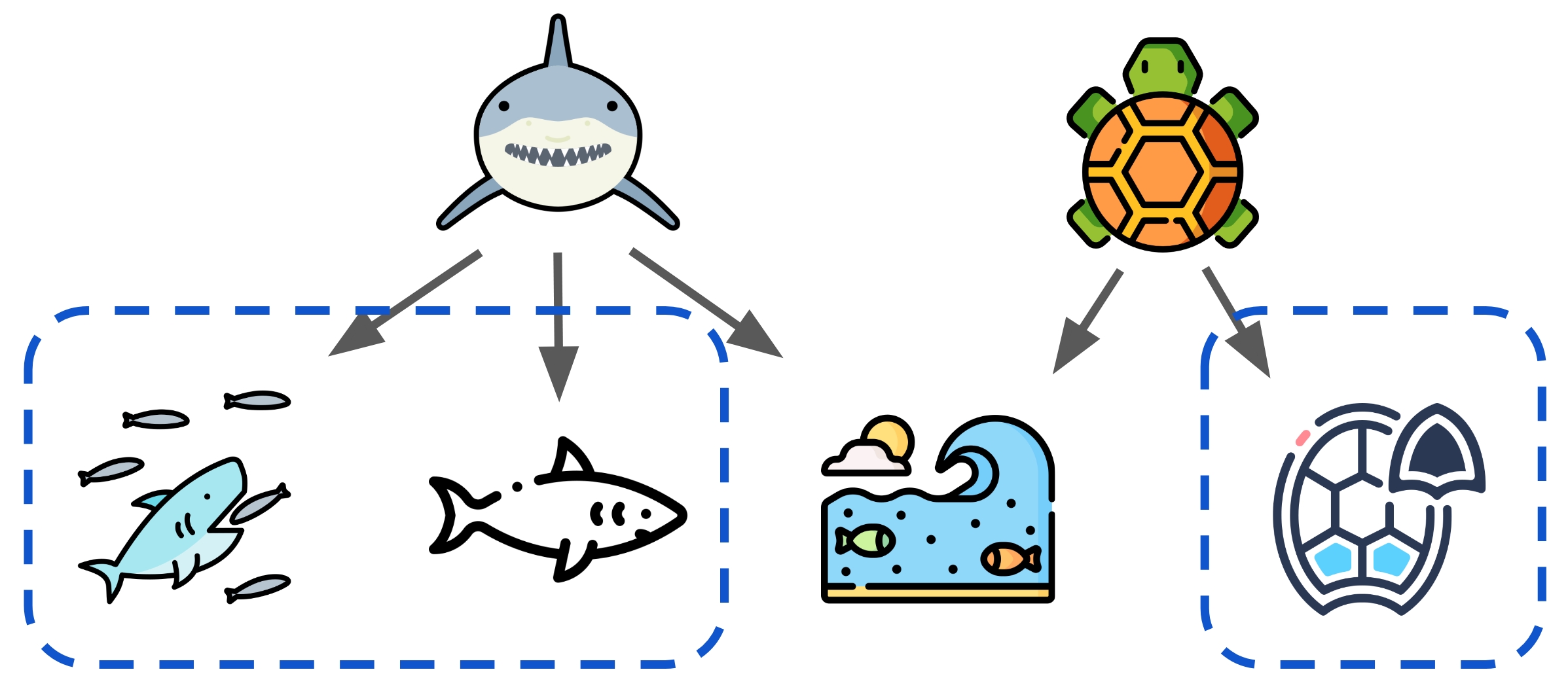}
    \caption{The class ``shark \includegraphics[width=2.5mm]{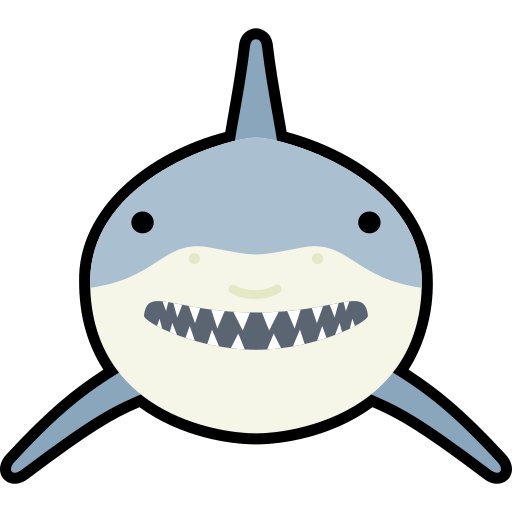}'' has concepts like ``predator \includegraphics[width=2.5mm]{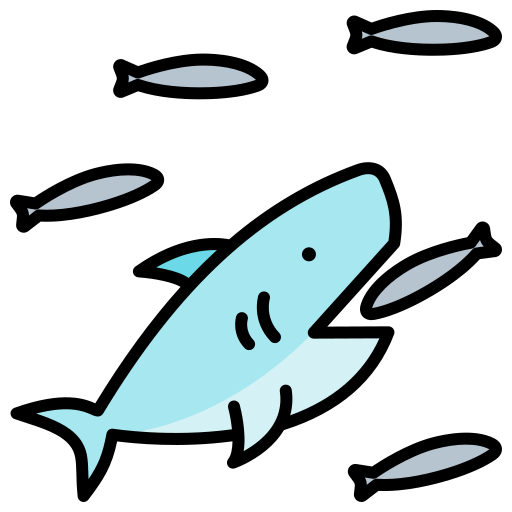},'' ``sleek body \includegraphics[width=2.5mm]{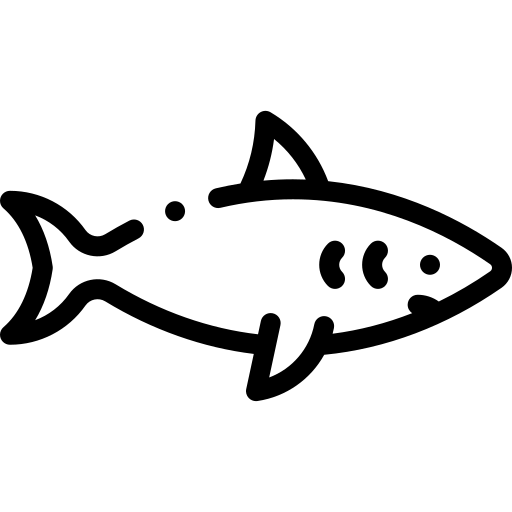},'' and ``ocean \includegraphics[width=2.5mm]{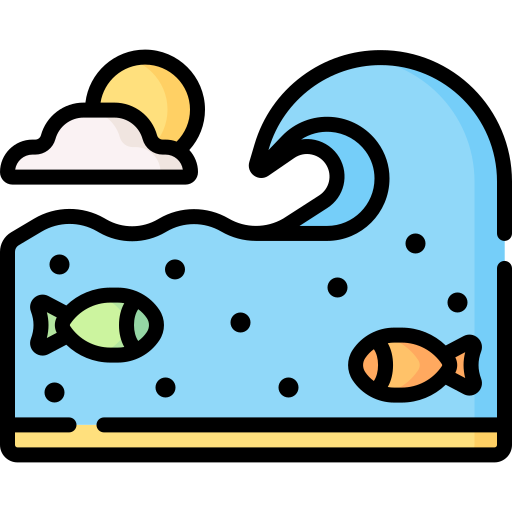}.'' The class ``turtle \includegraphics[width=2.5mm]{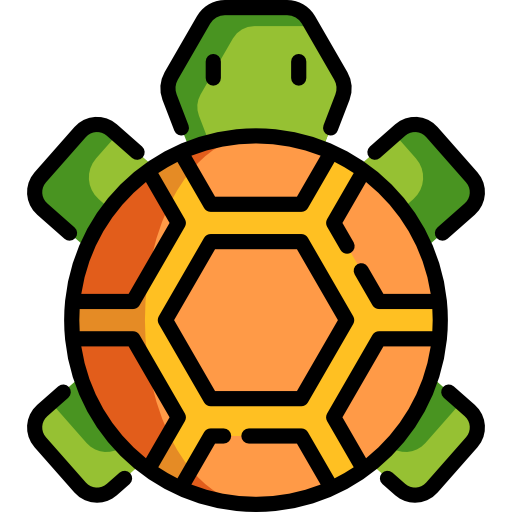}'' has concepts like ``shell \includegraphics[width=2.5mm]{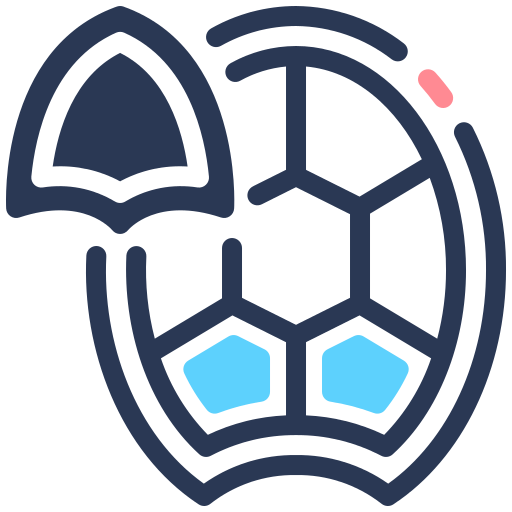}'' and ``ocean \includegraphics[width=2.5mm]{figures/ocean.png}.'' A child may learn to distinguish between these classes by focusing on the unique concepts specific to each—such as "predator \includegraphics[width=2.5mm]{figures/predator.png}" and "sleek body \includegraphics[width=2.5mm]{figures/shape.png}" for "shark \includegraphics[width=2.5mm]{figures/shark.png}," and "shell \includegraphics[width=2.5mm]{figures/shell.png}" for "turtle \includegraphics[width=2.5mm]{figures/turtle.png}."}
\label{fig:example}
\vspace{-1.6em}
\end{figure}

% \begin{wrapfigure}{r}{0.3\textwidth}
% \vspace{-1.5em}
%   \begin{center}
%     \includegraphics[width=0.3\textwidth]{figures/example.png}
%   \end{center}
%   \vspace{-0.6em}
%   \caption{The class ``shark'' has concepts like ``predator,'' ``sleek body,'' and ``ocean.''}
% \label{fig:example}
% \vspace{-1em}
% \end{wrapfigure}

Meanwhile, in machine learning, the extraction of conceptual features is crucial for the development of robust and interpretable models, illustrating the integration of cognitive principles into machine intelligence \citep{valiant1984theory, mitchell1997machine}. Recent research has achieved notable success in deriving human-interpretable concepts from various data modalities with different formulations of the problem \citep{bau2017network, radford2017learning, alvarez2018towards, kim2018interpretability, zhou2018interpretable, yeh2020completeness, koh2020concept, du2021unsupervised,
bai2022concept, achtibat2022towards, crabbe2022concept, liu2023unsupervised, park2023linear, jiang2024origins}. These concepts have proven beneficial for tasks such as extrapolation \citep{janner2022planning, lachapelle2023additive, du2024compositional}, explanation \citep{alvarez2018towards, sreedharan2020bridging, leemann2023post, poeta2023concept}, and decision-making \citep{grupen2022concept, zabounidis2023concept, delfosse2024interpretable}. Furthermore, advancements in this domain have significantly contributed to scientific discovery, particularly in healthcare \citep{jia2022role}.

While numerous methods have been developed to extract concepts from data, most provide only empirical support and lack theoretical guarantees concerning the correctness of the recovered concepts \citep{marconato2024not}. With the help of specific parametric assumptions, a few studies have explored the identifiability of concept learning. For example, with the linear representation hypothesis that concepts are linearly related, recent research \citep{rajendran2024causal, reizinger2024cross, marconato2024all} has shown that the concept space can be identified up to a linear transformation. Another line of research has tackled object-centric learning, attempting to identify individual objects as groups of pixels (slots), such as trees or dogs, while excluding more abstract concepts like lighting and styles. In addition to these concept type restrictions, further assumptions are also required for identifiability, such as no occlusion between objects (each observed variable is influenced by only one concept) \citep{brady2023provably, wiedemer2024provable} or the additivity of the generating process \citep{lachapelle2023additive, wiedemer2024provable}. These studies mark significant exploration toward understanding concept learning. At the same time, the constraints imposed on concept types and functional relationships may limit the confidence to fully account for the empirical success observed in concept learning from real-world scenarios. Therefore, despite significant empirical progress, a fundamental question in concept learning remains unanswered: 
\begin{center}
    \vspace{-0.5em}
	\textit{In the general setting, which concepts can we reliably recover with theoretical guarantees?} \\
    \vspace{-0.1em}
\end{center}
We address this question by grounding our approach in a fundamental cognitive mechanism: humans learn concepts by contrasting diverse classes of observations. Classic studies have shown that concept formation relies on detecting distinctions across examples: \citet{bruner1957study} emphasized learning through contrasts between exemplars and non-exemplars; Gibson \citep{gibson1963perceptual, gibson1969principles} proposed differentiation as a basis of perceptual learning in infants; and \citet{gentner1999comparison} demonstrated that direct comparison enables children to abstract category-defining features. This principle is further supported by extensive literature across cognitive science, reinforcing that it is only through discerning the differences between classes that humans can unravel and understand previously unseen concepts. As a result, in the most general setting, the essential information for provably learning hidden concepts must pertain to the diversity present among different classes.

Inspired by this cognitive process of learning by comparison, we establish a set of theoretical guarantees on concept learning in the general setting. We show that hidden concepts can be identified without relying on assumptions about the nature of the concepts or specific parametric models, provided there is sufficient diversity across classes. Specifically, we first prove that for any pair of classes, the unique part of the concepts for each class can be disentangled from the remaining concepts (Thm. \ref{thm:pair}). This pairwise comparison\footnote{Learning by comparison serves as an inspiration for identifiability theory, rather than being a specific estimation method.} serves as a foundational prototype for learning concepts, enabling the flexible identifiability of as many concepts as possible, given that they exhibit enough diversity, even when others do not. We then extend the pair-wise identifiability to learn unique concepts from an arbitrary subset of classes (Cor. \ref{cor:multi}). Given that most works rely on global assumptions for all concepts and fail to offer guarantees when assumptions are partially violated for some concepts, the proposed flexible identifiability by local comparisons provides unique practical value, since real-world scenarios often do not perfectly conform to ideal conditions for all concepts. 

Furthermore, with sufficient diversity across different classes of observations, we prove the nonparametric identifiability for all class-related hidden concepts up to an element-wise transformation and permutation (Thm. \ref{thm:changing}). For other invariant background concepts, such as ``chromatic" that remain consistent across all classes, we can also identify them under appropriate structural diversity conditions (Prop. \ref{prop:complete}). Consequently, we introduce, to the best of our knowledge, one of the first frameworks for concept identifiability in the general setting that does not confine itself to specific concept types or parametric generative models. Moreover, the connective structure between classes and concepts can also be recovered in a nonparametric way (Prop. \ref{prop:structure}). Our theoretical results are substantiated through empirical validation on synthetic data and five different real-world datasets.

% \vspace{-0.5em}
\section{Preliminaries}
\label{sec:prelim}
% \vspace{-0.5em}

% \begin{wrapfigure}{r}{0.33\textwidth}
% \vspace{-1.6em}
%   \begin{center}
%     \includegraphics[width=0.33\textwidth]{figures/visual_aid_compact.png}
%   \end{center}
%   \vspace{-1em}
%   \caption{The problem setting.}
% \label{fig:visual}
% % \vspace{-1em}
% \end{wrapfigure}

\begin{figure}
    \centering    
    \vspace{0.5em}
    \includegraphics[width=0.43\textwidth]{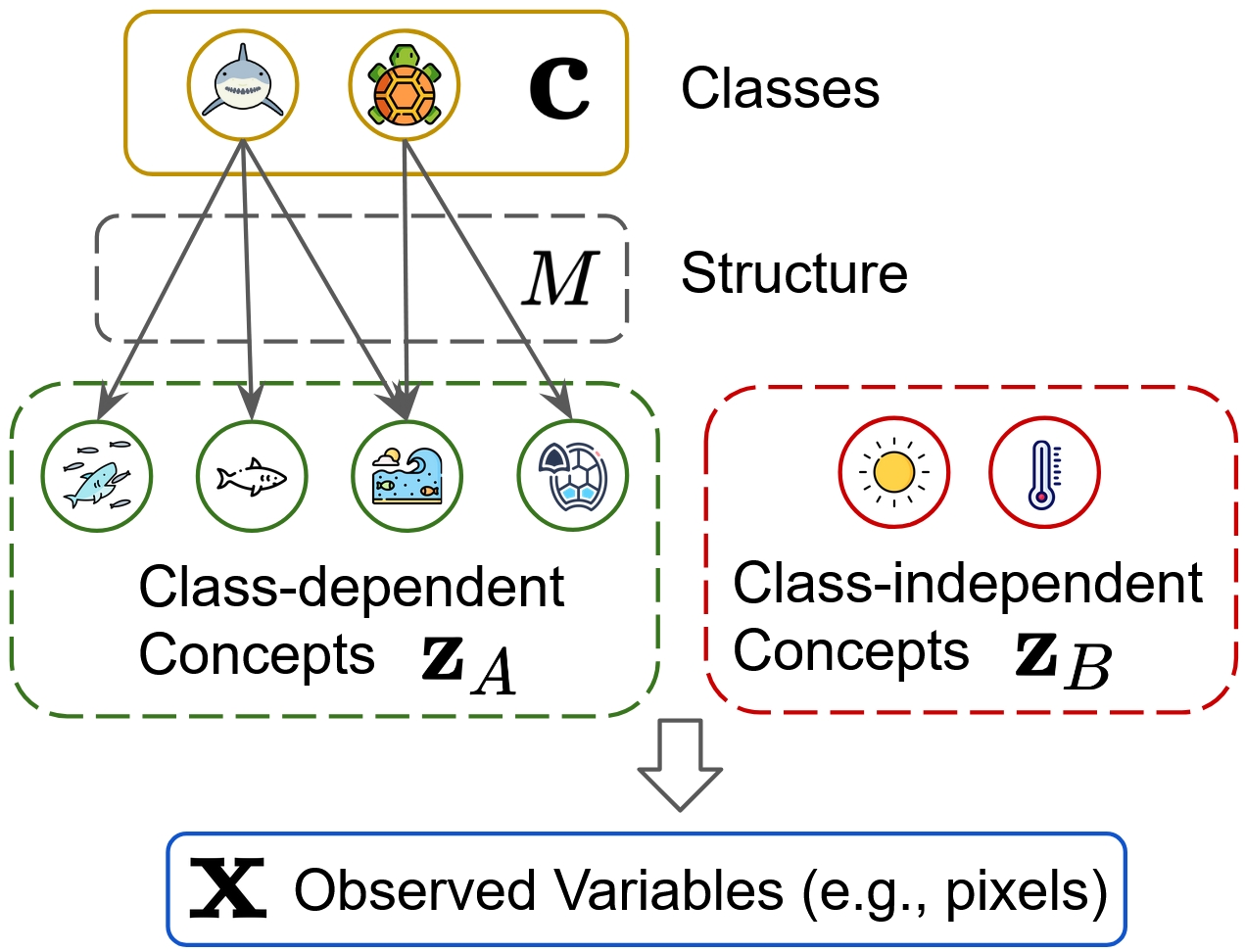}
    \caption{The problem setting. Consider images of aquatic animals, where the observed variables $\mathbf{x}$ represent image pixels. The different animal types (e.g., “shark \includegraphics[width=2.5mm]{figures/shark.png}” and “turtle \includegraphics[width=2.5mm]{figures/turtle.png}”) correspond to class variables $\mathbf{c}$. Class-dependent concept variables $\mathbf{z}_A$ might include attributes like “predator \includegraphics[width=2.5mm]{figures/predator.png},” “sleek body \includegraphics[width=2.5mm]{figures/shape.png},” “ocean \includegraphics[width=2.5mm]{figures/ocean.png}” and “shell \includegraphics[width=2.5mm]{figures/shell.png}” (see, e.g., Fig. \ref{fig:example}), while class-independent concept variables $\mathbf{z}_B$ could be “lighting \includegraphics[width=2.5mm]{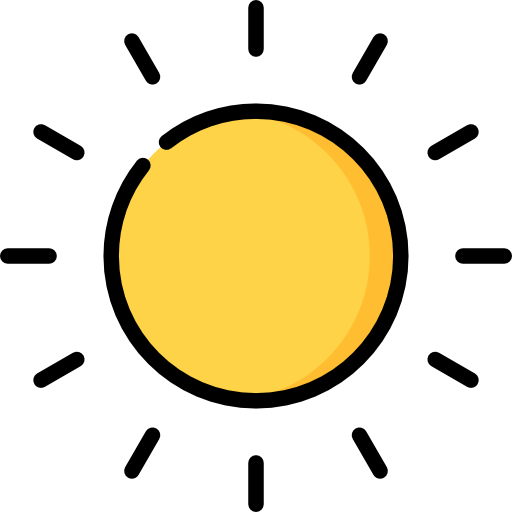}” and “temperature \includegraphics[width=2.5mm]{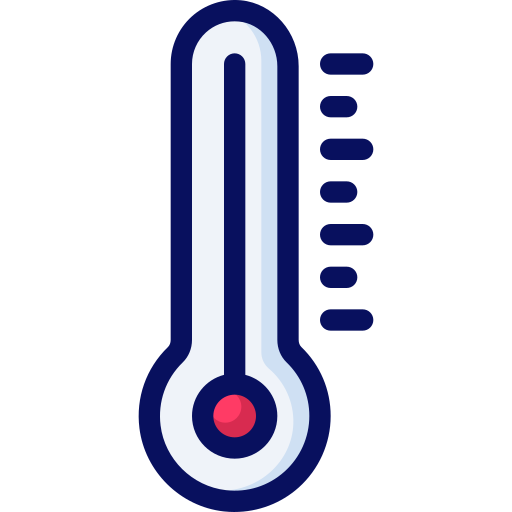}.” The hidden generative process of each image depends on all of these concepts, though only some are specific to each class (encoded by the structure $M$, which is a binary adjacency matrix). The goal is to identify $\mathbf{z}$ based on observed variables $\mathbf{x}$ and classes $\mathbf{c}$.}
    \vspace{-1em}
\label{fig:visual}
\end{figure}

We first introduce the problem setting as well as some essential notation. Fig. \ref{fig:visual} illustrates the key notation and relations of the considered setting. We also provide a structured summary of notation in Appx. \ref{sec:notation_ap} for a quick reference.

\textbf{Data-generating process.} \
Let $\mathbf{x} = (\mathbf{x}_1, \ldots, \mathbf{x}_m) \in \mathcal{X} \subseteq \mathbb{R}^{m}$ be a vector representing observed variables. We assume that the observation $\mathbf{x}$ is generated by latent \textit{concept} variables $\mathbf{z} = (\mathbf{z}_A, \mathbf{z}_B)\in \mathcal{Z} \subseteq \mathbb{R}^{n}$. The generating process is as follows, with Fig. \ref{fig:visual} serving as a concrete example:
\vspace{-0.2em}
\begin{equation}
    \mathbf{x} \coloneqq f(\mathbf{z}),\label{eq:generating_function}
\end{equation}
% \vspace{-0.2em}
% \looseness=-1
where $\mathbf{z}$ consists of class-dependent part $\mathbf{z}_A = (\mathbf{z}_1, \ldots, \mathbf{z}_{n_A}) \in \mathcal{Z}_A \subseteq \mathbb{R}^{n_A}$ and class-independent part $\mathbf{z}_B = (\mathbf{z}_{n_A+1}, \ldots, \mathbf{z}_{n}) \in \mathcal{Z}_B \subseteq \mathbb{R}^{n_B}$, $n_B=n-n_A$. These two parts are conditionally independent given observed \textit{class} variables $\mathbf{c} = (\mathbf{c}_1, \ldots, \mathbf{c}_u) \subseteq \mathbb{R}^{u}$, i.e., $p(\mathbf{z}|\mathbf{c}) = p(\mathbf{z}_A|\mathbf{c}) p(\mathbf{z}_B)$. Since $\mathbf{z}_A$ depends on the classes $\mathbf{c}$, we represent $\mathbf{z}_A \coloneqq g(\mathbf{c}, \theta)$, where $\theta$ denotes other factors including potential noise. All densities are smooth and positive, and domains are path-connected. 

The generating function $f$ is an \textit{unknown} diffeomorphism onto its image, capturing complex mixing in observational data without imposing specific distributional assumptions, such as Gaussian, on latent concepts $\mathbf{z}$. This allows for a broad formulation that encompasses diverse concepts and nonparametric generative models. Equation \ref{eq:generating_function} can also be extended to the additive noise setting through a standard deconvolution technique \citep{khemakhem2020variational}.

\textbf{Technical notation.} \
Throughout this work, for any matrix \( S \), we use \( S_{i, \cdot} \) for its \( i \)-th row and \( S_{\cdot, j} \) for its \( j \)-th column. We denote the first \( k \) dimensions of the \( i \)-th row as \( S_{i, \cdot k} \) and the remaining as \( S_{i, k+1\cdot} \). Similarly, the first \( k \) dimensions of the \( j \)-th column are \( S_{\cdot k, j} \) and the remaining \( S_{k+1\cdot, j} \).
% Throughout this work, for any matrix $S$, we use $S_{i,\cdot}$ to denote its $i$-th row ($S_{i,\cdot k}$ for the first $k$ dimensions of the $i$-th row), and $S_{\cdot,j}$ to denote its $j$-th column ($S_{\cdot k,j}$ for the first $k$ dimensions of the $j$-th column). 
For any set of indices $\mathcal{I} \subset\{1, \ldots, m\} \times\{1, \ldots, n\}$, analogously, we have $\mathcal{I}_{i,\cdot}\coloneqq\{j \mid(i, j) \in \mathcal{I}\}$ and $\mathcal{I}_{\cdot,j}\coloneqq\{i \mid(i, j) \in \mathcal{I}\}$. 
We define the support of \( S \in \mathbb{R}^{a \times b} \) as \( \operatorname{supp}(S) \coloneqq \{(i,j) \mid S_{i,j} \neq 0\} \), and also extend \( \operatorname{supp}(\cdot) \) to a matrix-valued function \( \mathbf{S}(\cdot) \), defining $\operatorname{supp}(\mathbf{S})\coloneqq\left\{(i,j) \mid \exists \theta \in \Theta, \mathbf{S}(\theta)_{i,j} \neq 0 \right\}$.
% We also denote the support of the matrix $S \in \mathbb{R}^{a \times b}$ as $\operatorname{supp}(S)\coloneqq\left\{(i,j) \mid S_{i,j} \neq 0 \right\}$.
% With a slight abuse of notation, we reuse $\operatorname{supp}(\cdot)$ to denote the support of a matrix-valued function $\mathbf{S}(\cdot)$, i.e., $\operatorname{supp}(\mathbf{S})\coloneqq\left\{(i,j) \mid \exists \theta \in \Theta, \mathbf{S}(\theta)_{i,j} \neq 0 \right\}$. 
Then we define $\mathcal{D}$ as the support of $D_\mathbf{c} g$, where $D_\mathbf{c} g$ represents the partial derivative of $g$ w.r.t. $\mathbf{c}$. 

% \begin{tcolorbox}
\begin{example}
Consider the matrices  
\[
S =  
\begin{bmatrix}  
    1 & 0 & 3 \\  
    0 & 0 & 5 \\  
    4 & 0 & 0  
\end{bmatrix},  
\quad
\mathbf{S}(\theta) =  
\begin{bmatrix}  
    \theta & 0 & 3 \\  
    0 & 0 & 5\theta \\  
    4 & 0 & 0  
\end{bmatrix}.
\]
The support of the matrix \( S \) is  
\(
\operatorname{supp}(S) = \{(1,1), (1,3), (2,3), (3,1)\}.
\)
For the matrix-valued function \(\mathbf{S}(\theta)\), the support $\operatorname{supp}(\mathbf{S})$ remains the same as long as some \(\theta \neq 0\) makes those entries nonzero.
\end{example}
% \end{tcolorbox}

\textbf{Connective structure.} \
\looseness=-1
Based on these, we define the \textit{structure} $M$ as a binary matrix with the support $\mathcal{D}_{\cdot n_A,\cdot}$. The class-dependent part $\mathbf{z}_A$ can be further represented as 
\begin{equation}
    p(\mathbf{z}_A|\mathbf{c}) = \prod_{i=1}^{n_A}  p(\mathbf{z}_{i}|M_{i,\cdot} \odot \mathbf{c}),
\end{equation}
where $M_{i,\cdot}$ is the $i$-th row of $M$. The operator $\odot$ denotes the element-wise (Hadamard) product. Since classes $\mathbf{c}$ are not connected to class-independent part $\mathbf{z}_B$, $M$ illustrates the connective structure between classes $\mathbf{c}$ and concepts $\mathbf{z}$. The conditional independence provides a form of modularity commonly adopted in prior work on identifiable latent variable models \citep{hyvarinen2016unsupervised, khemakhem2020variational, sorrenson2020disentanglement, lachapelle2021disentanglement, hyvarinen2024identifiability}. It may be particularly natural in our class-concept framework; for example, while the concepts ``wings'' and ``feathers'' are related, they become conditionally independent given the class variable ``bird.'' Moreover, we define $A_i$ as the index set of concepts corresponding to class $\mathbf{c}_i$, with the associated concepts represented as $\mathbf{z}_{A_i}$. Likewise, $\mathbf{z}_{A_i \setminus A_j}$ refers to the difference in the concept sets between classes $\mathbf{c}_i$ and $\mathbf{c}_j$. 

% \begin{tcolorbox}
\begin{example}
Consider the example in Fig. \ref{fig:visual}, we have $\mathbf{z}_A = (\mathbf{z}_1 \includegraphics[width=2.5mm]{figures/predator.png}, \mathbf{z}_2 \includegraphics[width=2.5mm]{figures/shape.png}, \mathbf{z}_3 \includegraphics[width=2.5mm]{figures/ocean.png}, \mathbf{z}_4 \includegraphics[width=2.5mm]{figures/shell.png})$ and classes $\mathbf{c} = (\mathbf{c}_1 \includegraphics[width=2.5mm]{figures/shark.png}, \mathbf{c}_2 \includegraphics[width=2.5mm]{figures/turtle.png})$. Then we have $\mathbf{z}_{A_1} = \{\mathbf{z}_1, \mathbf{z}_2, \mathbf{z}_3\}$ and $\mathbf{z}_{A_2} = \{\mathbf{z}_3, \mathbf{z}_4\}$, and $\mathbf{z}_{A_1 \setminus A_2} = \{\mathbf{z}_1, \mathbf{z}_2\}$. The connective structure $\mathbf{M}$ is a binary matrix:
\begin{equation*}
    \mathbf{M} =
\begin{bmatrix}
    1 & 1 & 1 & 0 \\
    0 & 0 & 1 & 1
\end{bmatrix}^\top.
\end{equation*}\
\end{example}
% \end{tcolorbox}

\textbf{Observational equivalence.} \ All of our identifiability results are rooted in the observational equivalence between the ground truth and the estimated model. This equivalence can be achieved in the large-sample limit using maximum likelihood estimation, which is \textit{estimator-agnostic} and can be facilitated by methods such as normalizing flows.

\begin{definition}[Observational Equivalence]\label{def:obs}
    Two models $(g, p_{\mathbf{z}}, M)$ and $(\hat{g}, p_{\hat{\mathbf{z}}}, \hat{M})$ are observationally equivalent if and only if $p_{\hat{\mathbf{x}}|\mathbf{c}}(x|c) = p_{\mathbf{x}|\mathbf{c}}(x|c)$.
\end{definition}

\vspace{-0.5em}
\section{Identifiability Theory}
\vspace{-0.2em}

Without any assumptions on specific concept types, functional relations, or parametric generative models, to what extent can we provably learn hidden concepts from diverse classes of observations?

\looseness=-1
To answer this, in Sec. \ref{sec:3.1}, we first prove that the unique concepts in any pair of classes can be disentangled from the remaining ones (Thm. \ref{thm:pair}). Based on this, we can fully leverage the diversity in the data and provide flexible identifiability for any subset of concepts, as long as there exists sufficient diversity for local comparison (Cor. \ref{cor:multi}). For the global identification, in Sec. \ref{sec:3.2}, we prove the nonparametric identifiability for all class-dependent hidden concepts (Thm. \ref{thm:changing}) under the structural diversity condition (Assump. \ref{assum:structure}). Together with a sparsity condition for the remaining class-independent part, all hidden concepts can be identified up to trivial indeterminacy (Prop. \ref{prop:complete}). Furthermore, in Sec. \ref{sec:3.3}, we recover the hidden connective structure between classes and concepts (Prop. \ref{prop:structure}), providing further insights into the latent compositional relations.

\begin{wrapfigure}{r}{0.2\textwidth}
\vspace{-1.6em}
  \begin{center}
    \includegraphics[width=0.2\textwidth]{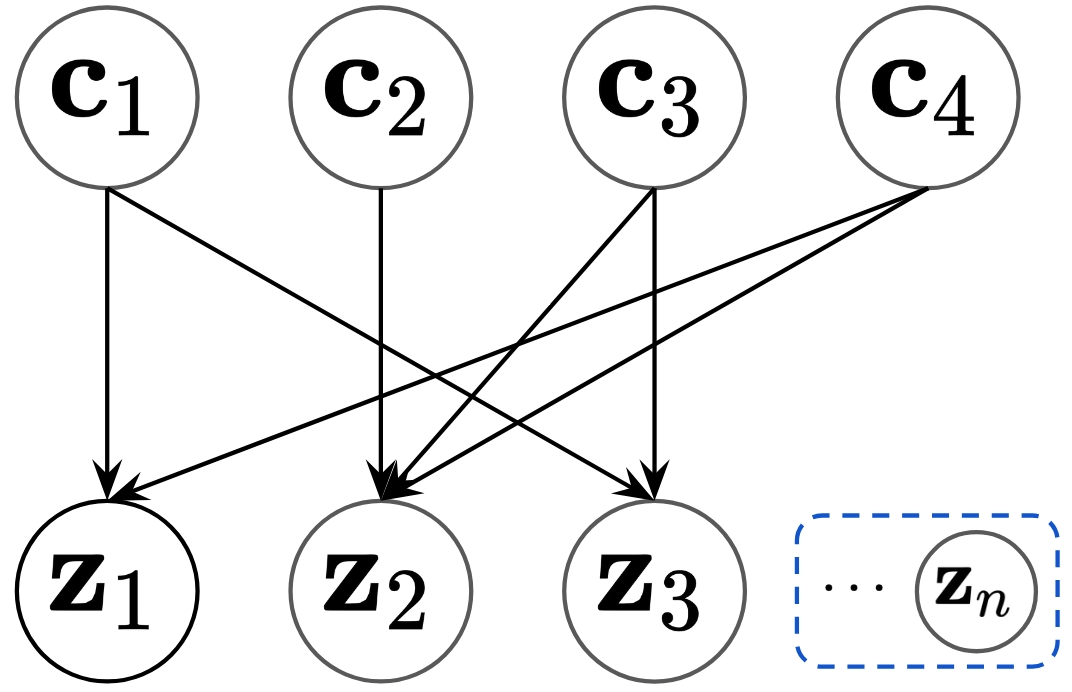}
  \end{center}
  \vspace{-0.9em}
  \caption{Structure in Running Example \ref{exam:running}.}
\label{fig:running}
\vspace{-1.2em}
\end{wrapfigure}

To enhance technical understanding, we provide a detailed discussion in each section on connections to techniques from the broader identifiability literature. Additional concrete examples and discussions are available in Appx. \ref{sec:diss_ap}.

\begin{tcolorbox}
\begin{example}\label{exam:running}
        We introduce a running example (Fig. \ref{fig:running}) to illustrate key implications and insights throughout the paper. The lines depict the connective structure between classes and concepts, distinguishing class-dependent concepts, $\mathbf{z}_A=(\mathbf{z}_1,\mathbf{z}_2,\mathbf{z}_3)$, from class-independent variables, $\mathbf{z}_B$, enclosed within the blue dotted square.
    \end{example}
\end{tcolorbox}

\subsection{Learning Concepts by Local Comparison}
\label{sec:3.1}

\looseness=-1
Humans learn concepts by leveraging the diversity across classes. We argue that the fundamental mechanism in this cognitive process is learning through pair-wise comparison, since any two classes can only be distinguished by identifying their unique concepts. Pairwise comparison thus serves as the basic unit for concept learning across multiple classes, as comparisons among any set of classes can be reduced to pairs. In the following theorem, we prove that the unique concepts between any pair of classes can be disentangled from the remaining concepts. The proof is in Appx. \ref{sec:proof_pair}.

We first introduce some additional notation. Let us define $\mathrm{T}$ as a matrix with the same support of $\mathbf{T}(\cdot)$ in ${D_{\mathbf{c}} \hat{g}} = \mathbf{T} {D_\mathbf{c} g}$, where $\mathbf{T}(\cdot)$ is a matrix-valued function. In addition, given a subset $\mathcal{S} \subseteq \{1, \ldots, n\}$, the subspace $\mathbb{R}_{\mathcal{S}}^{n}$ is defined as:
\begin{equation}
    \mathbb{R}_{\mathcal{S}}^n \coloneqq \{ s \in \mathbb{R}^n \mid s_i = 0 \text{ if } i \notin \mathcal{S} \},
\end{equation}
where $s_i$ is the $i$-th element of the vector $s$.

% \begin{tcolorbox}
\begin{example}
Intuitively, $\mathbb{R}_{\mathcal{S}}^n$ consists of all vectors in $\mathbb{R}^n$ where only the coordinates indexed by $\mathcal{S}$ can vary, while the remaining coordinates are fixed at zero. For Example \ref{exam:running}, if $n = n_A = 3$ and $\mathcal{S} = \hat{\mathcal{D}}_{\cdot:i} = \{1,3\}$, then $\mathbb{R}_{\hat{\mathcal{D}}_{\cdot:i}}^{n_A} = \mathbb{R}_{\{1,3\}}^{3}$ consists of vectors of the form  
\(
(s_1, 0, s_3),
\)
where $s_1, s_3 \in \mathbb{R}$ are free to take any real values.
\end{example}
% \end{tcolorbox}

\begin{restatable}[Learning by pairwise comparison]{theorem}{Pair}
\label{thm:pair}
Consider two \textbf{observationally equivalent} (Defn. \ref{def:obs}) models \((g, p_{\mathbf{z}}, M)\) and \((\hat{g}, p_{\hat{\mathbf{z}}}, \hat{M})\) as in Sec. \ref{sec:prelim}. Suppose, with an $\ell_0$ regularization on $D_{\mathbf{c}} \hat{g}$ ($|\hat{\mathcal{D}}| \leq |\mathcal{D}|$), there exist a set of points $\{(c, \theta)^{(\ell)}\}_{\ell=1}^{|\mathcal{D}_{\cdot,i}|}$, such that 
\vspace{-0.5em}
\begin{enumerate}
\vspace{-0.3em}
    \item $\{D_\mathbf{c} g ((c, \theta)^{(\ell)})_{\cdot,i}\}_{\ell=1}^{|\mathcal{D}_{\cdot,i}|}$, are linearly independent;
    \vspace{-0.3em}
    \item $\left[ \mathrm{T} {D_\mathbf{c} g((c, \theta)^{(\ell)})} \right]_{\cdot,i} \in \mathbb{R}_{\hat{\mathcal{D}}_{\cdot,i}}^{n_A}$.
    \vspace{-0.3em}
\end{enumerate}
Then for any pair of classes $\mathbf{c}_i$ and $\mathbf{c}_j$, there exists a permutation $\pi$ such that the unique concepts of each class are disentangled with other concepts, i.e., $\frac{\partial \hat{\mathbf{z}}_{\pi(A_i \setminus A_j)}}{\partial \mathbf{z}_{A_j}}$ and $\frac{\partial \hat{\mathbf{z}}_{\pi(A_j \setminus A_i)}}{\partial \mathbf{z}_{A_i}}$ equal to zero.
\end{restatable}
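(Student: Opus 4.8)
The plan is to turn observational equivalence into a linear relation between the two class-dependent coordinate systems, read the two hypotheses as a column-wise support constraint on the associated mixing matrix, and then use invertibility together with the $\ell_0$ budget to pin down exact cardinalities, from which the permutation is assembled by block matching. First I would invoke Defn.~\ref{def:obs}: since $\mathbf{x}=f(\mathbf{z})=\hat f(\hat{\mathbf{z}})$ with $f,\hat f$ diffeomorphisms, the latents are related by $h=\hat f^{-1}\circ f$, i.e. $\hat{\mathbf{z}}=h(\mathbf{z})$. The modularity $p(\mathbf{z}\mid\mathbf{c})=p(\mathbf{z}_A\mid\mathbf{c})p(\mathbf{z}_B)$ and its hatted analogue force $h$ to respect the class-dependent/class-independent split, so that $\hat{\mathbf{z}}_A=h_A(\mathbf{z}_A)$ is itself a diffeomorphism with invertible Jacobian $\mathbf{T}(\theta)=\partial h_A/\partial\mathbf{z}_A$. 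Differentiating $\hat g(\mathbf{c},\theta)=h_A(g(\mathbf{c},\theta))$ in $\mathbf{c}$ and using $\mathbf{z}_B\indep\mathbf{c}$ reproduces exactly the relation $D_{\mathbf{c}}\hat g=\mathbf{T}\,D_{\mathbf{c}}g$ of the statement, with $\mathrm{T}=\operatorname{supp}(\mathbf{T}(\cdot))$. Writing $A_i=\mathcal{D}_{\cdot,i}$ and $\hat A_i=\hat{\mathcal{D}}_{\cdot,i}$ for the concept supports of class $\mathbf{c}_i$, the target $\partial\hat{\mathbf{z}}_{\pi(A_i\setminus A_j)}/\partial\mathbf{z}_{A_j}$ is precisely the submatrix $\mathbf{T}(\theta)_{\pi(A_i\setminus A_j),\,A_j}$.

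\textbf{Conditions imply column-wise support control.} Fix a class $i$. By Condition~1 the $|\mathcal{D}_{\cdot,i}|$ vectors $D_{\mathbf{c}}g((c,\theta)^{(\ell)})_{\cdot,i}$ are linearly independent and each lies in $\mathbb{R}^{n_A}_{A_i}$ (of dimension $|A_i|$), hence form a basis of $\mathbb{R}^{n_A}_{A_i}$; in particular every coordinate $k\in A_i$ is nonzero in at least one of them. Condition~2 forces each product $\mathrm{T}\,D_{\mathbf{c}}g((c,\theta)^{(\ell)})_{\cdot,i}$ into $\mathbb{R}^{n_A}_{\hat A_i}$. Since the support of a support-matrix applied to a vector accumulates the supports of its active columns, an activated column $\mathrm{T}_{\cdot,k}$ cannot escape $\hat A_i$, yielding $\operatorname{supp}(\mathrm{T}_{\cdot,k})\subseteq\hat A_i$ for every $k\in A_i$. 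Because $\operatorname{supp}(\mathbf{T}(\theta))\subseteq\mathrm{T}$, this transfers cleanly to the actual Jacobian, giving $\mathbf{T}(\theta)\bigl(\mathbb{R}^{n_A}_{A_i}\bigr)\subseteq\mathbb{R}^{n_A}_{\hat A_i}$ for every $\theta$.

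\textbf{Exact cardinalities, then the permutation.} Invertibility of $\mathbf{T}(\theta)$ makes the inclusion dimension-preserving, so $|A_i|\le|\hat A_i|$ for each $i$. Summing gives $|\mathcal{D}|=\sum_i|A_i|\le\sum_i|\hat A_i|=|\hat{\mathcal{D}}|$, which against the $\ell_0$ budget $|\hat{\mathcal{D}}|\le|\mathcal{D}|$ forces $|A_i|=|\hat A_i|$ for all $i$ and upgrades each inclusion to $\mathbf{T}(\theta)(\mathbb{R}^{n_A}_{A_i})=\mathbb{R}^{n_A}_{\hat A_i}$. As an injective linear map $\mathbf{T}(\theta)$ commutes with intersections of coordinate subspaces, so $\mathbf{T}(\theta)(\mathbb{R}^{n_A}_{A_i\cap A_j})=\mathbb{R}^{n_A}_{\hat A_i\cap\hat A_j}$ and hence $|A_i\cap A_j|=|\hat A_i\cap\hat A_j|$; consequently the four blocks $A_i\cap A_j$, $A_i\setminus A_j$, $A_j\setminus A_i$, $(A_i\cup A_j)^{c}$ have the same sizes as their hatted counterparts. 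I then take $\pi$ mapping each ground-truth block bijectively onto the matching estimated block, so in particular $\pi(A_i\setminus A_j)=\hat A_i\setminus\hat A_j$, which is disjoint from $\hat A_j$. Since every column of $\mathbf{T}(\theta)$ indexed by $k\in A_j$ is supported in $\hat A_j$, its entries in rows $\pi(A_i\setminus A_j)$ vanish identically, i.e. $\partial\hat{\mathbf{z}}_{\pi(A_i\setminus A_j)}/\partial\mathbf{z}_{A_j}=0$; the symmetric assignment gives $\partial\hat{\mathbf{z}}_{\pi(A_j\setminus A_i)}/\partial\mathbf{z}_{A_i}=0$.

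\textbf{Main obstacle.} I expect the crux to be the passage in Step~2--3 rather than the final bookkeeping: converting the two analytic variability conditions into the clean statement $\operatorname{supp}(\mathrm{T}_{\cdot,k})\subseteq\hat A_i$ (which hinges on a no-cancellation reading of the support product, justified by the genericity of the nonzero magnitudes and the spanning property of Condition~1), and then squeezing it with the $\ell_0$ budget to obtain \emph{exact} cardinality equalities, including $|A_i\cap A_j|=|\hat A_i\cap\hat A_j|$. Without exact equality the subspace inclusions remain strict and the block-matching permutation cannot be formed. The structural preliminary in Step~1---that equivalence plus modularity block-diagonalizes $h$, so $\mathbf{T}$ is a genuine invertible $n_A\times n_A$ matrix rather than a block of a larger Jacobian---is the other place demanding care, though it is the standard entry point in this literature.
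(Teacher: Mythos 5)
Your proposal is correct, and its first half coincides with the paper's proof: observational equivalence gives $\hat{\mathbf{z}}=t(\mathbf{z})$, differentiating in $\mathbf{c}$ gives $D_{\mathbf{c}}\hat{g}=\mathbf{T}\,D_{\mathbf{c}}g$ with $\mathbf{T}=\partial\hat{\mathbf{z}}_A/\partial\mathbf{z}_A$, and Conditions~1--2 yield the column-support control $\mathcal{T}_{\cdot,k}\subseteq\hat{\mathcal{D}}_{\cdot,i}$ for every $k\in\mathcal{D}_{\cdot,i}$, which is exactly the paper's Eq.~\eqref{eq:connection}. Where you genuinely diverge is in how this is turned into the permutation. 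The paper expands $\det\mathbf{T}\neq 0$ to extract a permutation $\sigma$ with $\sigma(i)\in\mathcal{T}_{\cdot,i}$ for all $i$, deduces $\tilde{\pi}(\mathcal{D})\subseteq\hat{\mathcal{D}}$, squeezes against the $\ell_0$ budget to get the exact set equality $\hat{\mathcal{D}}=\tilde{\pi}(\mathcal{D})$, and finishes with an entrywise contradiction argument. You instead argue by dimension counting: invertibility makes each inclusion of images of coordinate subspaces dimension-preserving, so $|\mathcal{D}_{\cdot,i}|\le|\hat{\mathcal{D}}_{\cdot,i}|$; summing over classes and squeezing against $|\hat{\mathcal{D}}|\le|\mathcal{D}|$ forces per-class equalities and subspace equalities; injectivity preserves intersections of coordinate subspaces, so all four blocks of the pair $(A_i,A_j)$ match in size; and the permutation is assembled by block matching, after which the vanishing of the target submatrices is immediate. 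Both routes are valid. The paper's combinatorial route buys the stronger global statement $\hat{\mathcal{D}}=\tilde{\pi}(\mathcal{D})$ (one permutation aligning all classes simultaneously), which it reuses verbatim in Cor.~\ref{cor:multi} and Prop.~\ref{prop:structure}; your subspace route is shorter for the pairwise claim and yields the intersection equalities $|A_i\cap A_j|=|\hat{\mathcal{D}}_{\cdot,i}\cap\hat{\mathcal{D}}_{\cdot,j}|$ for free, which the paper never states explicitly.

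Two points need tightening. First, the support-control step should not be framed as a ``no-cancellation'' property of the support product justified by genericity; cancellation is simply irrelevant. Since by Condition~1 the vectors $D_{\mathbf{c}}g((c,\theta)^{(\ell)})_{\cdot,i}$ form a basis of $\mathbb{R}_{\mathcal{D}_{\cdot,i}}^{n_A}$, each one-hot vector $e_k$ with $k\in\mathcal{D}_{\cdot,i}$ is a linear combination of them, so $\mathrm{T}_{\cdot,k}=\mathrm{T}e_k$ is a linear combination of the vectors $\left[\mathrm{T}\,D_{\mathbf{c}}g((c,\theta)^{(\ell)})\right]_{\cdot,i}$, each of which lies in $\mathbb{R}_{\hat{\mathcal{D}}_{\cdot,i}}^{n_A}$ by Condition~2; closure of a subspace under linear combinations finishes the step. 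This is precisely the paper's argument, and it needs no genericity. Second, your Step~1 claim that modularity alone forces $D_{\mathbf{z}}t$ to block-diagonalize (hence $\mathbf{T}$ invertible) is not free: in the paper that block structure is obtained from the steps of Kong et al., which rely on the distributional-variability assumption that appears only in Thm.~\ref{thm:changing}, not here. Note that the relation $D_{\mathbf{c}}\hat{g}=\mathbf{T}\,D_{\mathbf{c}}g$ itself requires only $\partial\mathbf{z}_B/\partial\mathbf{c}=0$; it is the invertibility of the block $\mathbf{T}$ that the paper simply asserts, and you are on firmer ground stating it as an assumption, as the paper effectively does, rather than deriving it from modularity.
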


\begin{tcolorbox}
    \textbf{Insights.} \ For Example \ref{exam:running}, given classes \(\mathbf{c}_1\) and \(\mathbf{c}_3\), Thm. \ref{thm:pair} ensures that for any pair of classes, such as \(\mathbf{c}_1\) and \(\mathbf{c}_2\), the concept unique to \(\mathbf{c}_1\), \(\mathbf{z}_1\), can be disentangled from the other concepts \(\{\mathbf{z}_2, \mathbf{z}_3\}\). Similarly, the concept unique to \(\mathbf{c}_3\), \(\mathbf{z}_2\), can be disentangled from the other concepts \(\{\mathbf{z}_1, \mathbf{z}_3\}\), up to a standard permutation indeterminacy.
\end{tcolorbox}

We have also extended the guarantees for pairwise comparisons to arbitrary class sets, facilitating more efficient learning in complex scenarios with its proof in Appx. \ref{sec:proof_multi}

\begin{restatable}[Learning by local comparison]{corollary}{Multi}
\label{cor:multi}
Consider two \textbf{observationally equivalent} (Defn. \ref{def:obs}) models \((g, p_{\mathbf{z}}, M)\) and \((\hat{g}, p_{\hat{\mathbf{z}}}, \hat{M})\) as in Sec. \ref{sec:prelim}. Suppose that the assumptions in Thm. \ref{thm:pair} hold. Then, for a set of classes $\mathbf{c}_{I}$ and its corresponding concept sets $\mathbf{z}_{A_I}$ with a set of indices $I$, there exists a permutation $\pi$ such that the unique concepts for the class $\mathbf{c}_i$ are disentangled with concepts associated with other classes, i.e., $\frac{\partial \hat{\mathbf{z}}_{\pi(A_i \setminus A_{I \setminus i})}}{\mathbf{z}_{A_{I \setminus i}}} = 0$.
\end{restatable}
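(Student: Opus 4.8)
The plan is to derive the multi-class statement directly from the pairwise guarantee of Thm.~\ref{thm:pair}, exploiting a set-theoretic decomposition of the ``unique-to-$i$'' concept index set together with the observation that the disentangling permutation is global rather than pair-specific.

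First I would record the elementary identities relating the multi-class index sets to the pairwise ones. For the index set $I$ and a distinguished $i \in I$, write $A_{I\setminus i} \coloneqq \bigcup_{j \in I,\, j\neq i} A_j$ for the concepts attached to the remaining classes. Then the concepts unique to $\mathbf{c}_i$ within the group decompose as
\[
A_i \setminus A_{I\setminus i} \;=\; \bigcap_{j \in I,\, j\neq i} (A_i \setminus A_j),
\]
since a concept lies in $A_i$ but in none of the other $A_j$ exactly when it is unique in every individual pairwise comparison. In particular, every $k \in A_i \setminus A_{I\setminus i}$ satisfies $k \in A_i \setminus A_j$ for all $j \in I\setminus i$, so it is a member of the ``unique'' set of each pair $(i,j)$.

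Next I would invoke the pairwise result with a single, shared permutation. The decisive point is that the permutation delivered by Thm.~\ref{thm:pair} is not an artifact of the particular pair $(i,j)$: it is read off from the support of the global transformation $\mathrm{T}$ (equivalently, the support of the Jacobian $\tfrac{\partial \hat{\mathbf{z}}}{\partial \mathbf{z}}$), which is a single object fixed once the $\ell_0$-constrained support matching $\hat{\mathcal{D}} \subseteq \mathcal{D}$ is enforced. Hence one and the same $\pi$ simultaneously realizes the conclusion of Thm.~\ref{thm:pair} for every pair drawn from $I$. For a fixed $k \in A_i \setminus A_{I\setminus i}$ and any $j \in I\setminus i$, applying Thm.~\ref{thm:pair} to the pair $(i,j)$ then yields $\frac{\partial \hat{\mathbf{z}}_{\pi(k)}}{\partial \mathbf{z}_{A_j}} = 0$, because $k \in A_i \setminus A_j$.

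Finally I would assemble these pairwise vanishing blocks into the claimed multi-class block. Since $A_{I\setminus i}$ is the union of the $A_j$ over $j \in I\setminus i$, and the derivative of $\hat{\mathbf{z}}_{\pi(k)}$ vanishes against each $\mathbf{z}_{A_j}$ separately, it vanishes against $\mathbf{z}_{A_{I\setminus i}}$ as a whole; letting $k$ range over $A_i \setminus A_{I\setminus i}$ gives $\frac{\partial \hat{\mathbf{z}}_{\pi(A_i\setminus A_{I\setminus i})}}{\partial \mathbf{z}_{A_{I\setminus i}}} = 0$. The only genuine obstacle is the consistency of the permutation across pairs: if one treated the $\pi$ in Thm.~\ref{thm:pair} as pair-dependent, the naive union argument would break down, because conflicting row relabelings could not be combined. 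The argument therefore hinges on tracing the construction of $\pi$ in the proof of Thm.~\ref{thm:pair} and confirming that it depends only on the global support of $\mathrm{T}$, so that a single relabeling aligns all unique concepts at once.
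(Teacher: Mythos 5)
Your proof is correct, and it is organized genuinely differently from the paper's. The paper does not use Thm.~\ref{thm:pair} as a black box: its proof of the corollary reaches into the \emph{proof} of Thm.~\ref{thm:pair} for the intermediate support-level facts---the equality $\hat{\mathcal{D}} = \Tilde{\pi}(\mathcal{D})$ and the inclusion $\mathcal{T}_{\cdot,k}\times\{q\}\subset\hat{\mathcal{D}}$ whenever $(k,q)\in\mathcal{D}$---and then reruns the contradiction argument directly on the multi-class index sets: for $v\in A_i\setminus A_{I\setminus i}$ and $k\in A_q$ with $q\in I$, $q\neq i$, supposing $(\pi(v),k)\in\mathcal{T}$ forces $(v,q)\in\mathcal{D}$, contradicting the uniqueness of $v$ within the group. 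You instead apply the \emph{statement} of Thm.~\ref{thm:pair} pairwise and combine the pairs through the identity $A_i\setminus A_{I\setminus i}=\bigcap_{j\in I,\,j\neq i}(A_i\setminus A_j)$ together with the union structure of $A_{I\setminus i}$; the one place where you must look inside the proof of Thm.~\ref{thm:pair} is to confirm that its permutation is the single global one defined by $\hat{\mathcal{D}}=\{(\pi(i),j)\mid (i,j)\in\mathcal{D}\}$ rather than pair-dependent. That check is exactly right and is indispensable: the literal statement of Thm.~\ref{thm:pair}, whose existential quantifier over $\pi$ sits inside the universal quantifier over pairs, would not by itself license a common $\pi$, so the naive reduction would be incomplete without it. What your route buys is that Cor.~\ref{cor:multi} becomes a true corollary (modulo that one globality fact), with no duplication of the support machinery; what the paper's route buys is that it never needs to reinterpret or strengthen the pairwise statement, at the cost of essentially re-deriving the same contradiction a second time.
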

\vspace{-0.3em}

\begin{tcolorbox}
    \textbf{Insights.} \ For Example \ref{exam:running}, given a set of classes $\mathbf{c}_I = \{\mathbf{c}_2,\mathbf{c}_3,\mathbf{c}_4\}$, Cor. \ref{cor:multi} ensures that for any class in the set, e.g., $\mathbf{c}_4 \in \mathbf{c}_I$, the unique concept $\mathbf{z}_1$ can be disentangled from the concepts associated with others, i.e., $\{\mathbf{z}_2,\mathbf{z}_3\}$, where $\mathbf{c}_I$ can be a arbitrary subset of all classes.
\end{tcolorbox}

\looseness=-1
\textbf{Discussion on nondegenerate sample space.} \ \ 
A key imprint of the nonparametric generating process in observational data is its structure. The assumption helps ensure the possibility of capturing the connective structure by the Jacobian in the nonlinear cases, following the similar spirit in \citep{lachapelle2021disentanglement, zheng2022identifiability}. In general, it avoids pathological cases where all samples originate from highly restricted sub-populations that only cover a degenerate subspace. The first part makes sure that there are at least $|\mathcal{D}_{\cdot n_A,i}|$ data points such that the Jacobian function spans the support space, which is almost always guaranteed asymptotically. The second part $\left[ \mathrm{T} {D_\mathbf{c} g((\mathbf{c}, \theta)^{(\ell)})} \right]_{\cdot,i} \in \mathbb{R}_{\hat{\mathcal{D}}_{\cdot,i}}^{n_A}$ is also mild since $\hat{\mathcal{D}}_{\cdot,i} = \mathbf{T} {D_\mathbf{c} g((\mathbf{c}, \theta)^{(\ell)})}$ always resides in $\mathbb{R}_{\hat{\mathcal{D}}_{\cdot,i}}^{n_A}$. Even in some rare cases where the matrix does not fit the support due to some generic combination of values, the assumption is still almost always satisfied asymptotically, since it only necessitates the existence of one matrix in the entire space with the same support of $\mathbf{T}(\cdot)$. Meanwhile, the non-degeneracy assumption excludes linear transformations with constant Jacobians, as linear Gaussian models are non-identifiable and linear non-Gaussian models have already been extensively studied. An illustrative example is provided as Example \ref{exa:support} in Appx. \ref{sec:diss_ap_exa}.

\looseness=-1
\textbf{Partial identifiability benefits applicability.} \ \ Besides being the foundation for the learning process, the principles of local comparisons in both Thm. \ref{thm:pair} and Cor. \ref{cor:multi} also enable partial identifiability for a subset of concepts when diversity is not universally satisfied for all. Previous theoretical studies on concept learning often assume universal conditions like linearity or additivity. Nevertheless, such assumptions rarely hold for all in complex, unpredictable real-world scenarios and fail to guarantee identifiability under any degree of violation.
This challenge of handling partial assumption violations persists in the broader identifiability literature \citep{hyvarinen2016unsupervised,khemakhem2020variational,taeb2022provable,zheng2022identifiability,hyvarinen2024identifiability,zhang2024causal,zheng2025nonparametric}. Fortunately, with the proposed theory based on local comparisons (Thm. \ref{thm:pair} and Cor. \ref{cor:multi}), we can leverage the structure to recover the hidden system as much as possible, even when the degree of diversity does not support global identifiability. For instance, while shared concepts across similar classes remain unidentifiable, sufficient diversity ensures identifiability for other concepts. Crucially, these new flexible guarantees require no additional restrictive assumptions on concept types, functional forms, or parametric models.

\subsection{Learning Concepts by Global Comparison}
\label{sec:3.2}

Inspired by the mechanism of local comparison, we have shown that it is possible to fully leverage the diversity among different classes of observations to recover hidden concepts as much as possible. This naturally leads us to consider the conditions required for identifying all hidden concepts in a global manner. We first prove that, under the condition of \textit{Structural Diversity} (Assump. \ref{assum:structure}), all class-dependent concepts are identifiable up to a composition of a permutation and an element-wise invertible transformation (Thm. \ref{thm:changing}). The proof is included in Appx. \ref{sec:proof_changing}.

\begin{restatable}{assumption}{Assum} (Structural Diversity)
\label{assum:structure}
For any $\mathbf{z}_i \in \mathbf{z}_A$, there exists a set of indices $J$ ($|J|>1$) and $j \in J$ where $M_{i, j} \neq 0$ and $M_{i, k} = 0$ for all $k \in J$, $k \neq j$, and $M_{i, J \setminus \{j\}}$ is the only row with all zero entries in $M_{\cdot, J \setminus \{j\}}$.
\end{restatable}
\vspace{-0.3em}

\begin{restatable}[Learning by global comparison]{theorem}{Changing}
\label{thm:changing}
Consider two models \((g, p_{\mathbf{z}}, M)\) and \((\hat{g}, p_{\hat{\mathbf{z}}}, \hat{M})\) as in Sec. \ref{sec:prelim}. Under the assumptions in Thm. \ref{thm:pair} and Assump. \ref{assum:structure}, suppose that for any set $A_{\mathbf{z}} \subseteq \mathcal{Z}$ with non-zero probability measure and which cannot be expressed as $B_{\mathbf{z}_B} \times \mathbf{z}_A$ for any $B_{\mathbf{z}_B} \subseteq \mathcal{Z}_B$, there exist two values of $\mathbf{c}$, i.e., $c^{(k)}$ and $c^{(v)}$ (which may vary across different $A_{\mathbf{z}}$), that
\vspace{-0.3em}
\begin{equation*}
    \int_{\mathbf{z} \in A_{\mathbf{z}}} p(\mathbf{z} \mid c^{(k)}) d \mathbf{z} \neq \int_{\mathbf{z} \in A_{\mathbf{z}}} p(\mathbf{z} \mid c^{(v)}) d \mathbf{z}.
\end{equation*} 
If \((g, p_{\mathbf{z}}, M)\) and \((\hat{g}, p_{\hat{\mathbf{z}}}, \hat{M})\) are \textbf{observationally equivalent} (Defn. \ref{def:obs}), then:
\vspace{-0.3em}
\begin{enumerate}
\vspace{-0.3em}
    \item (Element-wise Identifiability) For any $\mathbf{z}_i \in \mathbf{z}_A$, there exists a permutation \(\pi\) and invertible functions \(h_i: \mathbb{R} \to \mathbb{R}\) s.t. \(\hat{\mathbf{z}}_i = h_i(\mathbf{z}_{\pi(i)})\);
    \vspace{-0.3em}
    \item (Block-wise Identifiability) For $\mathbf{z}_B$, there exists an invertible function $h: \mathbb{R}^{n_B} \to \mathbb{R}^{n_B}$ s.t. $\hat{\mathbf{z}}_B = h(\mathbf{z}_B)$.
    \vspace{-0.3em}
\end{enumerate}
\end{restatable}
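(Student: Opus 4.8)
The plan is to translate observational equivalence into a statement about a single diffeomorphism between latent spaces, peel off the invariant block $\mathbf{z}_B$ to obtain Part 2, and then sharpen the class-dependent block to an element-wise correspondence for Part 1 using the local-comparison machinery of \Cref{thm:pair}. First I would note that, since $f$ and $\hat{f}$ are diffeomorphisms onto their (common) image and the models are observationally equivalent, $\phi := \hat{f}^{-1}\circ f$ is a diffeomorphism with $\hat{\mathbf{z}}=\phi(\mathbf{z})$, and the change-of-variables formula gives, for every $\mathbf{c}$, $\log p(\mathbf{z}_A\mid \mathbf{c})+\log p(\mathbf{z}_B)=\log \hat{p}(\phi_A(\mathbf{z})\mid \mathbf{c})+\log \hat{p}(\phi_B(\mathbf{z}))+\log|\det J\phi(\mathbf{z})|$, writing $\phi=(\phi_A,\phi_B)$. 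The two structural facts I will exploit throughout are that in both models the $\mathbf{z}_A$-factors are conditionally independent given $\mathbf{c}$ while $\mathbf{z}_B$ is $\mathbf{c}$-invariant, so the Hessian of each side in $\mathbf{z}$ has all cross-terms involving an $\mathbf{z}_A$-coordinate vanishing.

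\textbf{Part 2 (invariant block).} For the block-wise claim I would run a cylinder-set argument driven by the stated variability condition. In the hatted model $\hat{\mathbf{z}}_B$ is independent of $\mathbf{c}$, so for any measurable $\hat{S}\subseteq\hat{\mathcal{Z}}_B$ the preimage $A_{\mathbf{z}}=\phi_B^{-1}(\hat{S})$ satisfies $\int_{A_{\mathbf{z}}}p(\mathbf{z}\mid\mathbf{c})\,d\mathbf{z}=\Pr(\hat{\mathbf{z}}_B\in\hat{S}\mid\mathbf{c})$, which is constant in $\mathbf{c}$. The contrapositive of the hypothesis then forces every such positive-measure $A_{\mathbf{z}}$ to be of product form $B_{\mathbf{z}_B}\times\mathcal{Z}_A$, i.e. free in the $\mathbf{z}_A$ coordinates; since this holds for all $\hat{S}$, the map $\phi_B$ must factor through the projection onto $\mathbf{z}_B$, giving $\hat{\mathbf{z}}_B=h(\mathbf{z}_B)$. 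Applying the identical argument to $\phi^{-1}$ yields $\mathbf{z}_B$ as a function of $\hat{\mathbf{z}}_B$, so $h$ is invertible, establishing Part 2.

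\textbf{Part 1 (element-wise).} I would next separate the changing block: differencing the change-of-variables identity at two class values $c,c'$ cancels both $\log p(\mathbf{z}_B)$ and $\log|\det J\phi|$, leaving $\log p(\mathbf{z}_A\mid c)-\log p(\mathbf{z}_A\mid c')=\log \hat{p}(\phi_A(\mathbf{z})\mid c)-\log \hat{p}(\phi_A(\mathbf{z})\mid c')$. The left side is independent of $\mathbf{z}_B$; differentiating the right side in $\mathbf{z}_B$ and using that the variability condition makes the score differences $\nabla_{\hat{\mathbf{z}}_A}[\log \hat{p}(\cdot\mid c)-\log \hat{p}(\cdot\mid c')]$ span $\mathbb{R}^{n_A}$ forces $\partial\phi_A/\partial\mathbf{z}_B=0$, so $\hat{\mathbf{z}}_A=\phi_A(\mathbf{z}_A)$ is itself a diffeomorphism of $\mathcal{Z}_A$ and $J\phi$ is block-diagonal. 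With this reduction in hand, $\mathrm{T}$ in \Cref{thm:pair} is exactly the $A$-block Jacobian. For each concept $\mathbf{z}_i$, Structural Diversity (\Cref{assum:structure}) supplies an index set $J$ in which $\mathbf{z}_i$ is the unique concept attached to a single class $\mathbf{c}_j$ while no other concept is isolated from $J\setminus\{j\}$; feeding this witness set into \Cref{thm:pair}/\Cref{cor:multi} makes the image coordinate $\hat{\mathbf{z}}_{\pi(i)}$ have vanishing partials with respect to every $\mathbf{z}_k$, $k\neq i$. Collecting these statements over all $i$ shows the Jacobian is a permuted diagonal, and since $\phi_A$ is a diffeomorphism each diagonal entry is non-vanishing, yielding $\hat{\mathbf{z}}_i=h_i(\mathbf{z}_{\pi(i)})$ with invertible $h_i$, which is Part 1.

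\textbf{Main obstacle.} I expect the delicate step to be the gluing in Part 1: verifying that the per-concept applications of the local-comparison theorem assemble into a \emph{single} permutation $\pi$ that is a genuine bijection, so that each $\mathbf{z}_i$ is isolated exactly once with no two image coordinates collapsing onto the same source. This is precisely what the uniqueness clause of \Cref{assum:structure} (``$M_{i,J\setminus\{j\}}$ is the only all-zero row in $M_{\cdot,J\setminus\{j\}}$'') is designed to guarantee, and making that link airtight is the crux. A secondary technical point is justifying that the score-difference spanning used in the block-separation step follows from the integral form of the stated variability condition rather than having to be assumed independently, together with the usual care about null sets and smoothness when passing from set-level to function-level conclusions.
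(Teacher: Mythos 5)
Your overall architecture matches the paper's: form $\phi=\hat f^{-1}\circ f$ via observational equivalence, kill the lower-left block of $D_{\mathbf z}\phi$ using the variability condition (your cylinder-set argument is essentially the content of steps 1--3 of Theorem 4.2 in Kong et al.\ (2022), which the paper cites rather than reproduces), then reduce to the class-dependent block and combine the support machinery of \cref{thm:pair}/\cref{cor:multi} with the uniqueness clause of \cref{assum:structure} to force a permuted-diagonal Jacobian. Your use of \cref{cor:multi} with the witness set $J$ is a legitimate repackaging of what the paper re-derives inline, and the ``main obstacle'' you flag is not actually an obstacle: in the paper the permutation $\pi$ is defined once and globally by the support equality $\hat{\mathcal D}=\tilde\pi(\mathcal D)$, obtained from the nondegeneracy assumptions together with the $\ell_0$ constraint $|\hat{\mathcal D}|\le|\mathcal D|$, so every per-concept conclusion automatically refers to the same bijection, and invertibility of $\mathbf T$ supplies the nonvanishing diagonal entries exactly as you say.

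The genuine gap is in your block-separation step for Part 1. You derive $\partial\phi_A/\partial\mathbf z_B=0$ by requiring the score differences $\nabla_{\hat{\mathbf z}_A}\bigl[\log\hat p(\cdot\mid c)-\log\hat p(\cdot\mid c')\bigr]$ to span $\mathbb R^{n_A}$ pointwise. That spanning property does not follow from the stated integral variability condition: the latter constrains probabilities of sets, and it can hold even when score differences degenerate on an open region (already for $n_A=1$, two class-conditional densities whose ratio is locally constant on an interval can still differ as distributions), and on such a region your argument yields no conclusion. The paper closes this step without any extra assumption: once $\hat{\mathbf z}_B=h_B(\mathbf z_B)$ with $h_B$ invertible is established, the estimated model's own conditional-independence structure $p(\hat{\mathbf z}\mid\mathbf c)=p(\hat{\mathbf z}_A\mid\mathbf c)\,p(\hat{\mathbf z}_B)$ forces $\partial\hat{\mathbf z}_A/\partial\mathbf z_B=0$, making $D_{\mathbf z}\phi$ block-diagonal, after which ${D_{\mathbf c}\hat g}=(D_{\mathbf z}\phi)_{\cdot n_A,\cdot n_A}\,D_{\mathbf c}g$ and the support argument proceeds. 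You already record this independence fact in your overview, so the repair is simply to use it in place of the score argument; with that substitution your proof goes through and coincides with the paper's.
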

\vspace{-1.8em}
\begin{tcolorbox}
    \textbf{Insights.} \ For Example \ref{exam:running}, element-wise identifiability ensures all class-dependent concepts can be identified up to permutation, i.e., concept $\mathbf{z}_1$, $\mathbf{z}_2$, and $\mathbf{z}_3$ can be individually disentangled from each other. The block-wise indeterminacy makes sure that the class-independent concepts $\mathbf{z}_B = (\mathbf{z}_4,\ldots,\mathbf{z}_n)$ as a block (group of variables) can be disentangled from the class-dependent concepts $\mathbf{z}_A$. These two types of identifiabilities are standard in the literature \citep{hyvarinen2017nonlinear, lachapelle2021disentanglement, von2021self, zheng2023generalizing}. 
\end{tcolorbox}

\textbf{Discussion on structural diversity.} \ \ 
Assumption \ref{assum:structure}, termed \textit{Structural Diversity}, ensures sufficient diversity across different classes of observations for the nonparametric identifiability of all class-dependent concepts. Without imposing parametric constraints on concept types, functional relations, or generative models, the only exploitable information is the inherent connective structure between classes and concepts. As discussed earlier, when classes lack diversity on their corresponding concepts, identifying individual class-dependent concepts becomes impossible without extra knowledge. Thus, \textit{Structural Diversity} is crucial for guaranteeing correctness across all concepts without relying on parametric assumptions or auxiliary information.

\begin{tcolorbox}
    \textbf{Insights.} \ Structural diversity suggests that for each class-dependent concept $\mathbf{z}_i$, there exists a set of classes such that $\mathbf{z}_i$ is unique to one of these classes. For Example \ref{exam:running}, the corresponding matrix $M$ is as Fig. \ref{fig:structure}. Consider $i=1$ ($\mathbf{z}_1$), there exists a set of class indices $J = \{1, 3\}$ s.t. $M_{1,1} \neq 0$ and $M_{1,3} = 0$. Meanwhile, $M_{i, J \setminus \{j\}} = M_{1, 3}$ is the only row with all zero entries in $M_{\cdot, J \setminus \{j\}} = M_{\cdot, 3}$. Thus, the structural diversity holds for concept $\mathbf{z}_1$.
\end{tcolorbox}

\begin{wrapfigure}{r}{0.2\textwidth}
\vspace{-0.5em}
  \begin{center}
    \includegraphics[width=0.2\textwidth]{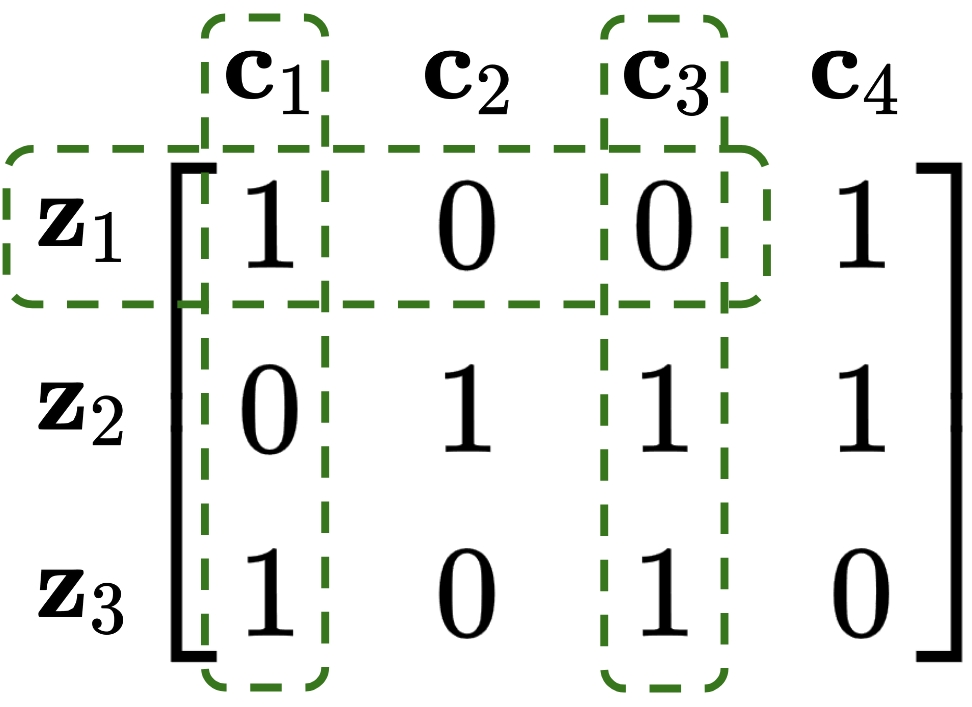}
  \end{center}
  \vspace{-0.9em}
  \caption{Example of \textit{Structural Diversity}.}
\label{fig:structure}
\vspace{-1em}
\end{wrapfigure}

The structural difference in the example above implies that $\mathbf{z}_1$ can be distinguished by considering these classes. Simultaneously, we have sufficient information for all the remaining concepts, as the submatrix $M_{\cdot, J \setminus {1}}$ encompasses the other concepts. Then it is possible to uniquely identify $\mathbf{z}_1$ among all the class-dependent hidden concepts. Coupled with this sufficient diversity for other concepts, we have the \textit{Structural Diversity} assumption for the nonparametric identifiability of all class-dependent hidden concepts in a structural view.

% In general, the proposed assumption necessitate the existence of diversity across classes in a structural way.

\textit{Comparison with pervious conditions.} \ \ Different from various assumptions encouraging the sparsity of the structure in the identifiability literature \citep{rhodes2021local, moran2021identifiable, zheng2022identifiability, zheng2023generalizing}, our assumption only ensures necessary variability on the dependency structure and could also hold true with relatively dense connections. At the same time, we permit arbitrary structures between the class-dependent hidden concepts and the observed variables, while previous work has to assume a sparse structure on the generating process between latent and observed variables. This flexibility accommodates a general generative process, thereby distinguishing our assumptions from others. Additionally, another standard identification strategy requires $2n_A + 1$ distinct domains or classes to achieve latent variable identifiability (e.g., \citep{hyvarinen2017nonlinear, khemakhem2020variational, kong2022partial}), a condition we do not impose.

\looseness=-1
\textit{Limitations.} \ \ Of course, since we aim for the general nonparametric identifiability for all class-dependent concepts, there are scenarios where it is impossible to fully recover every hidden concept, even with the help of the \textit{Structural Diversity} condition. For example, if all dog breeds are defined by overlapping features like "barks" and "furry," a learner could not distinguish breeds without unique distinguishing traits. Here, the lack of unique concepts violates \textit{Structural Diversity}, preventing identifiability from observational data alone. In such scenarios, prior assumptions from provable concept learning—like disjoint concept representations (e.g., non-overlapping Jacobians), linearity, or additive generating functions—remain essential to resolve ambiguities \citep{brady2023provably, lachapelle2023additive, wiedemer2024provable}. Therefore, our assumption does \textit{not} supersede the previous ones; rather, it offers a new direction that can be helpful for learning hidden concepts with minimal prior knowledge about the system.

\textbf{Discussion on distributional variability.} \ \ The other assumption introduced in Thm. \ref{thm:changing} requires the existence of distributional variability for the block-wise identifiabilty of $\mathbf{z}_B$. It necessitates at least two classes with differing conditional distributions. As discussed and empirically verified in \cite{kong2022partial}, the likelihood of \textit{all} classes having identical probability measures is exceedingly slim. Interestingly, these two classes may also vary across different $A_{\mathbf{z}}$. Therefore, this assumption is highly likely to be satisfied in real-world scenarios, as it is virtually impossible for the measures corresponding to \textit{all} classes (e.g., all kinds of animals in a zoo) to be almost identical. A concrete example (Example \ref{exa:vari}) is provided in Appx. \ref{sec:diss_ap_exa}. Furthermore, prior proof techniques typically require at least $n_A+1$ domains or classes to establish block-wise identifiability \citep{zheng2023generalizing,li2024subspace}, and \citet{kong2022partial} require $2n_A+1$. In contrast, \textit{two} domains often suffice in Thm. \ref{thm:changing}, an interesting technique that can also benefit other tasks.

\textbf{Generalized concept learning and beyond.} \ \ Extending the results on a subset of concepts (Thm. \ref{thm:pair} and Cor. \ref{cor:multi}), Thm. \ref{thm:changing} provides guarantees for learning all class-dependent hidden concepts. Unlike prior work restricted to parametric constraints (e.g., disjointness, linearity), our framework relies on \textit{Structural Diversity} between classes and concepts, enabling broader applicability with sufficient diversity. This aligns with cognitive processes of learning by comparison, ensuring nonparametric recovery of latent structures. Beyond hidden concept learning, our theory also offers insights into latent variable models without prior knowledge, as it builds solely on their basic generative structure. Consequently, some findings may interest disentanglement \citep{hyvarinen2024identifiability}, causal representation learning \citep{scholkopf2021toward}, object-centric learning \citep{mansouri2024object}, and generalization \citep{du2024compositional}.

\looseness=-1
\textbf{Identifying class-independent concepts.} \ \ After identifying class-dependent concepts, one may still be interested in how to provably uncover the remaining class-independent concepts, even though they may not stand out in the cognitive process due to their invariance. To address this, we present the following \textit{informal} result with its formal version and proof in Appx. \ref{sec:proof_complete}, which identifies all concepts—both class-dependent and class-independent—in a nonparametric manner.

\begin{restatable}[Learning class-independent concepts; \textbf{Informal}]{proposition}{Complete}
\label{prop:complete}
Consider two \textbf{observationally equivalent} (Defn. \ref{def:obs}) models \((g, p_{\mathbf{z}}, M)\) and \((\hat{g}, p_{\hat{\mathbf{z}}}, \hat{M})\) as in Sec. \ref{sec:prelim}. Under the assumptions in Thm. \ref{thm:changing}, further assume structural conditions on the connective structure between $\mathbf{z}_B$ and $\mathbf{x}$ and an $\ell_0$ regularization. Then for any $\mathbf{z}_i \in \mathbf{z}$, there exists a permutation \(\pi\) and invertible functions \(h_i: \mathbb{R} \to \mathbb{R}\) s.t. \(\hat{\mathbf{z}}_i = h_i(\mathbf{z}_{\pi(i)})\).
\end{restatable}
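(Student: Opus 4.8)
The plan is to upgrade the block-wise identifiability of $\mathbf{z}_B$ supplied by Thm.~\ref{thm:changing} to element-wise identifiability, since the element-wise guarantee for $\mathbf{z}_A$ is already in hand. By Thm.~\ref{thm:changing}, the indeterminacy map $h \coloneqq \hat{f}^{-1}\circ f$ (where $f,\hat{f}$ are the true and estimated mixing functions with $\mathbf{x} = f(\mathbf{z}) = \hat{f}(\hat{\mathbf{z}})$) is block-diagonal between the class-dependent and class-independent coordinates: it acts element-wise after a permutation on $\mathbf{z}_A$, while on $\mathbf{z}_B$ it is only an invertible block map $h_B \colon \mathcal{Z}_B \to \mathcal{Z}_B$ with $\hat{\mathbf{z}}_B = h_B(\mathbf{z}_B)$, so that $\partial\hat{\mathbf{z}}_A/\partial\mathbf{z}_B = 0$ and $\partial\hat{\mathbf{z}}_B/\partial\mathbf{z}_A = 0$. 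Because the element-wise structure on $\mathbf{z}_A$ forces the permutation to map $A$-indices to $A$-indices, the remaining task reduces entirely to the $\mathbf{z}_B$ block: I would show that the assumed structural conditions on the connective structure between $\mathbf{z}_B$ and $\mathbf{x}$, together with the $\ell_0$ regularization, force $h_B$ to be a permutation composed with coordinate-wise invertible maps.

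To make the structure of $f$ operative, I would differentiate the observational-equivalence identity $f(\mathbf{z}) = \hat{f}(h(\mathbf{z}))$ and restrict to the $\mathbf{z}_B$ columns. Using the block-diagonality of $h$, the chain rule collapses to
\begin{equation*}
    D_{\mathbf{z}_B} f = \bigl(D_{\hat{\mathbf{z}}_B}\hat{f}\bigr)\, D h_B ,
\end{equation*}
so, writing $C \coloneqq (D h_B)^{-1}$ (invertible since $h_B$ is a diffeomorphism), we get $D_{\hat{\mathbf{z}}_B}\hat{f} = (D_{\mathbf{z}_B} f)\,C$. The $\ell_0$ regularization enforces $|\operatorname{supp}(D_{\hat{\mathbf{z}}_B}\hat{f})| \le |\operatorname{supp}(D_{\mathbf{z}_B} f)|$, i.e., the estimated mixing is at least as sparse as the true one. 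This recasts the problem as a structural-sparsity factorization $A = BC$ with $A = D_{\hat{\mathbf{z}}_B}\hat{f}$, $B = D_{\mathbf{z}_B} f$, $C$ invertible, and $|\operatorname{supp}(A)| \le |\operatorname{supp}(B)|$.

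I would then invoke a structural-sparsity argument in the same spirit as the Jacobian-support analysis behind Thm.~\ref{thm:pair}. The assumed structural conditions on $\operatorname{supp}(D_{\mathbf{z}_B} f)$ should guarantee that each coordinate of $\mathbf{z}_B$ is pinned down by a distinctive pattern of observed variables, so that no nontrivial right multiplication can reduce or preserve the support while mixing columns. Arguing by contradiction: if $C$ (equivalently $D h_B$) were not a permutation composed with a diagonal nonsingular matrix, i.e., if it mixed two coordinates of $\mathbf{z}_B$, then the product $(D_{\mathbf{z}_B} f)\,C$ would introduce support entries strictly exceeding those allowed by $\operatorname{supp}(D_{\mathbf{z}_B} f)$, contradicting $|\operatorname{supp}(A)| \le |\operatorname{supp}(B)|$. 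Hence $D h_B$ is permutation-scaling at every point; since $h_B$ is a diffeomorphism on a path-connected domain, this upgrades to $h_B$ being globally a permutation followed by coordinate-wise invertible functions. Combining with the element-wise identifiability of $\mathbf{z}_A$ from Thm.~\ref{thm:changing}, and using that the $A$--$B$ block separation prevents the two permutations from mixing, yields a single permutation $\pi$ and maps $h_i$ valid for every $\mathbf{z}_i \in \mathbf{z}$.

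The main obstacle is the structural-sparsity step: one must state precisely the structural conditions on $\operatorname{supp}(D_{\mathbf{z}_B} f)$ under which the $\ell_0$-minimal factorization is unique up to permutation-scaling, and then carry out the combinatorial argument that any off-diagonal mixing in $C$ strictly inflates the support. A secondary subtlety is that support-based reasoning is pointwise, so I would additionally require nondegeneracy/variability of $D_{\mathbf{z}_B} f$ across the domain (paralleling the nondegenerate-sample-space discussion) to ensure that the generic support pattern—rather than an accidental cancellation at isolated points—controls the argument, and to legitimately propagate the local permutation-scaling conclusion to a global element-wise transformation.
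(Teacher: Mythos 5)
Your proposal follows essentially the same route as the paper's proof: reduce to the $\mathbf{z}_B$ block via the block-diagonal Jacobian from Thm.~\ref{thm:changing}, obtain the factorization $D_{\hat{\mathbf{z}}}\hat{f} = D_{\mathbf{z}} f \cdot \mathbf{T}_f$ restricted to the class-independent columns, and use the $\ell_0$ support comparison together with a contradiction argument to force $\mathbf{T}_f$ to be a permutation composed with a diagonal scaling. The ``precise structural conditions'' you flag as the main obstacle are exactly what the paper's formal version (Prop.~\ref{prop:complete_formal}) supplies—the intersection-of-supports condition $\bigcap_{k \in \mathcal{C}_{i}} \operatorname{supp}(D_{\mathbf{z}_i} f)_{i,n_A+1\cdot}=\{i\}$ from \citet{zheng2022identifiability}, plus the linear-independence/nondegeneracy requirement on $D_{\mathbf{z}} f$ across sample points that you also anticipate.
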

\vspace{-0.3em}

\looseness=-1
To avoid introducing parametric assumptions, we still mainly rely on conditions on the connective structures. Since classes $\mathbf{c}$ are not connected to those class-independent concepts $\mathbf{z}_B$, the proposed structural condition on $M$ does not help identify $\mathbf{z}_B$. Thus, we leverage the structural condition between these concepts and the observed variables, as proposed in \citep{zheng2022identifiability}.  Consequently, if needed, we can provide nonparametric guarantees under appropriate structural conditions for both $\mathbf{z}_A$ and $\mathbf{z}_B$ in general settings.

\vspace{-0.5em}
\subsection{Learning Structure Between Classes and Concepts}\label{sec:3.3}

Furthermore, we show that the hidden structure $M$, which encodes the dependency relations between classes and concepts, can also be identified based on multiple classes of observations (Prop. \ref{prop:structure}). This process parallels human learning, where distinguishing between classes involves recovering underlying structures, such as aligning concepts with their corresponding classes. While hidden structure identification in complex systems remains an open challenge \citep{spirtes2000causation}, our findings provide potential insights toward its resolution. The proof is included in Appx. \ref{sec:proof_structure}. 

\begin{restatable}[Learning class-concept structure]{proposition}{Structure} 
\label{prop:structure}
Consider two \textbf{observationally equivalent} (Defn. \ref{def:obs}) models \((g, p_{\mathbf{z}}, M)\) and \((\hat{g}, p_{\hat{\mathbf{z}}}, \hat{M})\) as in Sec. \ref{sec:prelim}. Suppose all assumptions in Thm. \ref{thm:pair} hold, except Assump. \ref{assum:structure}. Then $\hat{M} = P M$ for a permutation matrix $P$.
\end{restatable}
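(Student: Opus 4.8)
The plan is to translate observational equivalence into a linear-algebraic relation between the two structures and then reduce the claim to a support-matching argument. From Definition~\ref{def:obs}, observational equivalence yields a diffeomorphism between the two latent spaces, and differentiating the induced reparametrization with respect to $\mathbf{c}$ gives the identity $D_{\mathbf{c}}\hat{g} = \mathbf{T}(\cdot)\,D_{\mathbf{c}} g$ already fixed in the statement of Thm.~\ref{thm:pair}, where $\mathbf{T}(\cdot)$ is the (square, invertible) Jacobian block acting on the class-dependent coordinates and $\mathrm{T}$ is its support. Recalling that $M$ and $\hat{M}$ are exactly the supports of $D_{\mathbf{c}} g$ and $D_{\mathbf{c}}\hat{g}$ on the $n_A$ class-dependent rows, i.e. $\operatorname{supp}(M_{j,\cdot})=\mathcal{D}_{j,\cdot}$ and $\operatorname{supp}(\hat{M}_{k,\cdot})=\hat{\mathcal{D}}_{k,\cdot}$, the target $\hat{M}=PM$ is equivalent to showing that $\hat{\mathcal{D}}$ is obtained from $\mathcal{D}$ by permuting rows.

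First I would work column by column. Fix a class index $i$. By assumption~(1) of Thm.~\ref{thm:pair}, the $|\mathcal{D}_{\cdot,i}|$ sampled columns $\{D_{\mathbf{c}} g((c,\theta)^{(\ell)})_{\cdot,i}\}$ are linearly independent and each lies in $\mathbb{R}^{n_A}_{\mathcal{D}_{\cdot,i}}$, so they form a basis of that $|\mathcal{D}_{\cdot,i}|$-dimensional coordinate subspace. Assumption~(2) then says that $\mathrm{T}$ carries each basis vector into $\mathbb{R}^{n_A}_{\hat{\mathcal{D}}_{\cdot,i}}$, hence by linearity $\mathrm{T}\big(\mathbb{R}^{n_A}_{\mathcal{D}_{\cdot,i}}\big)\subseteq \mathbb{R}^{n_A}_{\hat{\mathcal{D}}_{\cdot,i}}$. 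Since $\mathrm{T}$ is injective, comparing dimensions gives $|\mathcal{D}_{\cdot,i}|\le|\hat{\mathcal{D}}_{\cdot,i}|$ for every $i$; summing over $i$ yields $|\mathcal{D}|\le|\hat{\mathcal{D}}|$. Combined with the $\ell_0$ constraint $|\hat{\mathcal{D}}|\le|\mathcal{D}|$, all these inequalities become equalities, so $|\mathcal{D}_{\cdot,i}|=|\hat{\mathcal{D}}_{\cdot,i}|$ and $\mathrm{T}$ restricts to a bijection $\mathbb{R}^{n_A}_{\mathcal{D}_{\cdot,i}}\to\mathbb{R}^{n_A}_{\hat{\mathcal{D}}_{\cdot,i}}$ for each class $i$.

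Next I would convert these simultaneous subspace bijections into a permutation structure on the rows. The containment $\mathrm{T}(\mathbb{R}^{n_A}_{\mathcal{D}_{\cdot,i}})\subseteq\mathbb{R}^{n_A}_{\hat{\mathcal{D}}_{\cdot,i}}$ is equivalent to the entrywise implication that $\mathrm{T}_{k,j}\neq 0$ together with $j\in\mathcal{D}_{\cdot,i}$ forces $k\in\hat{\mathcal{D}}_{\cdot,i}$, for every $i$; collecting this over all $i$ gives $\mathcal{D}_{j,\cdot}\subseteq\hat{\mathcal{D}}_{k,\cdot}$ whenever $\mathrm{T}_{k,j}\neq 0$. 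Because $\mathbf{T}(\cdot)$ is invertible it has nonzero determinant, so by the Leibniz expansion there is a permutation $\sigma$ with $\mathrm{T}_{k,\sigma(k)}\neq 0$ for every $k$; this yields $\mathcal{D}_{\sigma(k),\cdot}\subseteq\hat{\mathcal{D}}_{k,\cdot}$, and hence $|\mathcal{D}|=\sum_k|\mathcal{D}_{\sigma(k),\cdot}|\le\sum_k|\hat{\mathcal{D}}_{k,\cdot}|=|\hat{\mathcal{D}}|$. Since $|\mathcal{D}|=|\hat{\mathcal{D}}|$ from the previous step, every containment is an equality, so $\hat{\mathcal{D}}_{k,\cdot}=\mathcal{D}_{\sigma(k),\cdot}$, i.e. $\hat{M}_{k,\cdot}=M_{\sigma(k),\cdot}$ for all $k$, which is precisely $\hat{M}=PM$ with $P$ the permutation matrix of $\sigma$. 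I would stress that this argument never invokes Assump.~\ref{assum:structure}: structural diversity is what later pins down \emph{which} concept each row represents (element-wise identifiability), whereas recovering the structure only up to a row permutation needs just the nondegeneracy/spanning conditions of Thm.~\ref{thm:pair}.

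The step I expect to be the main obstacle is the rigorous treatment of supports as matrix-valued objects rather than fixed index sets. The counting above is clean for $\mathcal{D}$ and $\hat{\mathcal{D}}$ read as supports, but $\hat{\mathcal{D}}=\operatorname{supp}(\mathbf{T}(\cdot)D_{\mathbf{c}} g)$ must be interpreted through the matrix-valued support definition, ruling out accidental cancellations in which $\sum_j \mathbf{T}(\cdot)_{k,j}[D_{\mathbf{c}} g]_{j,i}$ vanishes identically even though some summand is generically nonzero. I would use the non-degenerate sample space assumption to guarantee $\hat{\mathcal{D}}_{k,\cdot}=\bigcup_{j:\mathrm{T}_{k,j}\neq0}\mathcal{D}_{j,\cdot}$, making the combinatorial count exact. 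A secondary point to confirm is that $\mathbf{T}(\cdot)$ is genuinely square and invertible (equivalently $\hat{n}_A=n_A$), which follows from the matched latent dimensions under observational equivalence and the invertibility of the global reparametrization's Jacobian.
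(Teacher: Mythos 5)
Your core argument is, in substance, the paper's own proof: construct one-hot vectors from the linearly independent sampled columns, use assumption~(2) to obtain the support containment ($\mathcal{D}_{j,\cdot}\subseteq\hat{\mathcal{D}}_{k,\cdot}$ whenever $\mathrm{T}_{k,j}\neq 0$, which is the paper's relation $\mathcal{T}_{\cdot,j}\times\{i\}\subset\hat{\mathcal{D}}$ for $(j,i)\in\mathcal{D}$), extract a permutation inside $\operatorname{supp}(\mathbf{T})$ from the Leibniz expansion of the nonzero determinant, and pinch with the $\ell_0$ bound $|\hat{\mathcal{D}}|\leq|\mathcal{D}|$ to force equality of the permuted supports, hence $\hat{M}=PM$. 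However, there is one genuine gap: you dismiss the invertibility of $\mathbf{T}(\cdot)$ as a ``secondary point'' that ``follows from the matched latent dimensions \ldots and the invertibility of the global reparametrization's Jacobian.'' That inference is false. Here $\mathbf{T}=\partial\hat{\mathbf{z}}_A/\partial\mathbf{z}_A$ is the top-left $n_A\times n_A$ block of $D_{\mathbf{z}}h$, and invertibility of the full $n\times n$ Jacobian does not imply invertibility of a principal block (e.g.\ $\bigl(\begin{smallmatrix}0 & I\\ I & 0\end{smallmatrix}\bigr)$ with $n_A=n_B$ is invertible with a singular top-left block). This is exactly where the paper spends the first half of its proof: using the conditional-independence structure $p(\mathbf{z}\mid\mathbf{c})=p(\mathbf{z}_A\mid\mathbf{c})\,p(\mathbf{z}_B)$ of \emph{both} models (via steps 1--3 of Theorem 4.2 in \citet{kong2022partial}) to show $\partial\hat{\mathbf{z}}_B/\partial\mathbf{z}_A=\mathbf{0}$, so that $\operatorname{det}(D_{\mathbf{z}}h)=\operatorname{det}(\partial\hat{\mathbf{z}}_A/\partial\mathbf{z}_A)\operatorname{det}(\partial\hat{\mathbf{z}}_B/\partial\mathbf{z}_B)\neq 0$ forces the block to be invertible. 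Without this distributional argument your permutation step has no foundation, since the Leibniz expansion is applied precisely to $\operatorname{det}(\mathbf{T})$.

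A second, repairable slip: your dimension-counting step invokes injectivity of $\mathrm{T}$, but $\mathrm{T}$ is only a fixed matrix sharing the support of the matrix-valued function $\mathbf{T}(\cdot)$; pointwise invertibility of $\mathbf{T}(\theta)$ does not make such a fixed representative injective, so $|\mathcal{D}_{\cdot,i}|\leq|\hat{\mathcal{D}}_{\cdot,i}|$ is not justified as written. Fortunately this step is redundant: in your final count you may replace ``$|\mathcal{D}|=|\hat{\mathcal{D}}|$ from the previous step'' by the $\ell_0$ inequality directly, since $|\mathcal{D}|=\sum_k|\mathcal{D}_{\sigma(k),\cdot}|\leq|\hat{\mathcal{D}}|\leq|\mathcal{D}|$ already forces every containment $\mathcal{D}_{\sigma(k),\cdot}\subseteq\hat{\mathcal{D}}_{k,\cdot}$ to be an equality, which is precisely how the paper concludes. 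Your closing observation that Assump.~\ref{assum:structure} is never needed is correct and matches the paper.
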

\vspace{-0.5em}

\begin{tcolorbox}
    \textbf{Insights.} \ 
    In Example \ref{exam:running}, the theory from previous sections ensures the identifiability of all latent concepts. With Prop. \ref{prop:structure}, we can now provably identify the connective structure $M$ between classes and concepts remains unknown, aligning the identified concepts with the corresponding classes.
\end{tcolorbox}

All assumptions have been discussed in the previous sections. Compared to the previous theories on the identifiability of latent concepts, the recovery of the hidden connective structure does not necessitate the structural diversity assumption (Assump. \ref{assum:structure}). This allows us to uncover the structure in even more general scenarios, if the identification of latent concepts might not be of particular interest.

\textbf{Connection with structure learning.} \ \ In addition to revealing the hidden class-concept structure, if we consider the class variables $\mathbf{c}$ as exogenous to the system and the underlying concept variables $\mathbf{z}$ as general hidden variables, the dependency structure between exogenous noises and hidden variables encodes most of the structural information in the system, even if dependencies exist among hidden variables (e.g., a hidden directed acyclic graph (DAG)). In structure learning, similar strategies have been applied to recover the DAG among observed variables by first recovering the structure of how exogenous noises influence the system in both linear \citep{shimizu2006linear} and nonlinear \citep{reizinger2022jacobian} cases---the DAG constraint ensures the correspondence between the Jacobian of the mixing function \citep{zheng2022identifiability} and the adjacency matrix. It is worth noting that identifying the hidden structure in a general nonlinear system from purely observational data (i.e., without interventions) is a challenging problem that has been open for decades \citep{spirtes2000causation, zheng2023generalized, zheng2024detecting, zheng2024causal}. Although this is not the focus of our work, the insights provided here are of independent interest to researchers exploring this longstanding challenge.

% \vspace{-0.7em}
\section{Experiments} \label{sec:ex}
% \vspace{-0.7em}

In order to show the identification of hidden concepts based on the proposed nonparametric identifiability theory, we conduct experiments on both synthetic and real-world datasets. It is noteworthy that an extensive body of research has empirically verified the ability to learn hidden concepts from various data modalities \citep{bau2017network, radford2017learning, alvarez2018towards, kim2018interpretability, yeh2020completeness, koh2020concept, bai2022concept, achtibat2022towards, crabbe2022concept, liu2023unsupervised}. Furthermore, the application range of concept learning is expanding significantly with recent advancements in foundation models \citep{park2023linear, rajendran2024causal, jiang2024origins}. Our results complement previous empirical findings by verifying the proposed theory, and we refer to the extensive previous research outlined above for more applications of concept learning across various scenarios.

% \begin{wrapfigure}{r}{0.47\textwidth}
% \vspace{-2.5em}
%   \begin{center}
%     \subfloat[{\centering Ours}]{\includegraphics[width=0.23\textwidth]{figures/mixed_cor.pdf}\label{fig:mixed_cor}}
%     \hfill
%     \subfloat[{\centering Base}]{\includegraphics[width=0.23\textwidth]{figures/base_cor.pdf}\label{fig:base_cor}}
%   \end{center}
%   \vspace{-0.8em}
%   \caption{Pearson correlation matrices between the ground-truth and estimated concepts.}
%   \label{fig:correlation_matrices}
% \vspace{-1.3em}
% \end{wrapfigure}

\textbf{Setup.} \
In the considered setting, different samples may correspond to different classes selected by a mask. We structure the dataset as $\{ (\mathbf{x}, \mathbf{c})^{(i)} \}_{i=1}^N$, where $N$ denotes the sample size, and $\mathbf{c}^{(i)}$ is a multi-hot vector representing the classes for the data point $\mathbf{x}^{(i)}$. A mask $\mathcal{M}_{i,:} \odot \mathbf{c}^{(i)}$ is applied to account for the specific class for each sample. We employ a regularized maximum-likelihood method during estimation, following the standard approach in \citep{sorrenson2020disentanglement}. The objective function is defined as $\mathcal{L}(\theta) = \mathbb{E}_{(\mathbf{x}, \mathbf{c})} [ - \log p_{\hat{f}^{-1}}(\mathbf{x} \mid \mathcal{M}_{i,:} \odot \mathbf{c}) + \lambda \mathbf{R} ]$, where $\lambda$ is the regularization parameter, and $\mathbf{R}$ represents the $\ell_1$ norm applied to $\hat{M}$ and, if estimating class-independent concepts, also to $D_{\hat{\mathbf{z}}} \hat{f}$. Following previous work, we use Mean Correlation Coefficient (MCC) to measure the alignment between the ground-truth and the recovered latent concepts. The results are from 10 random trials. Additional details and results are provided in Appx. \ref{sec:ex_ap}, such as identification with general noises and supplementary evaluation metrics across various settings.

\begin{figure}[t]
  \begin{minipage}{0.9\columnwidth}
    \includegraphics[width=\linewidth]{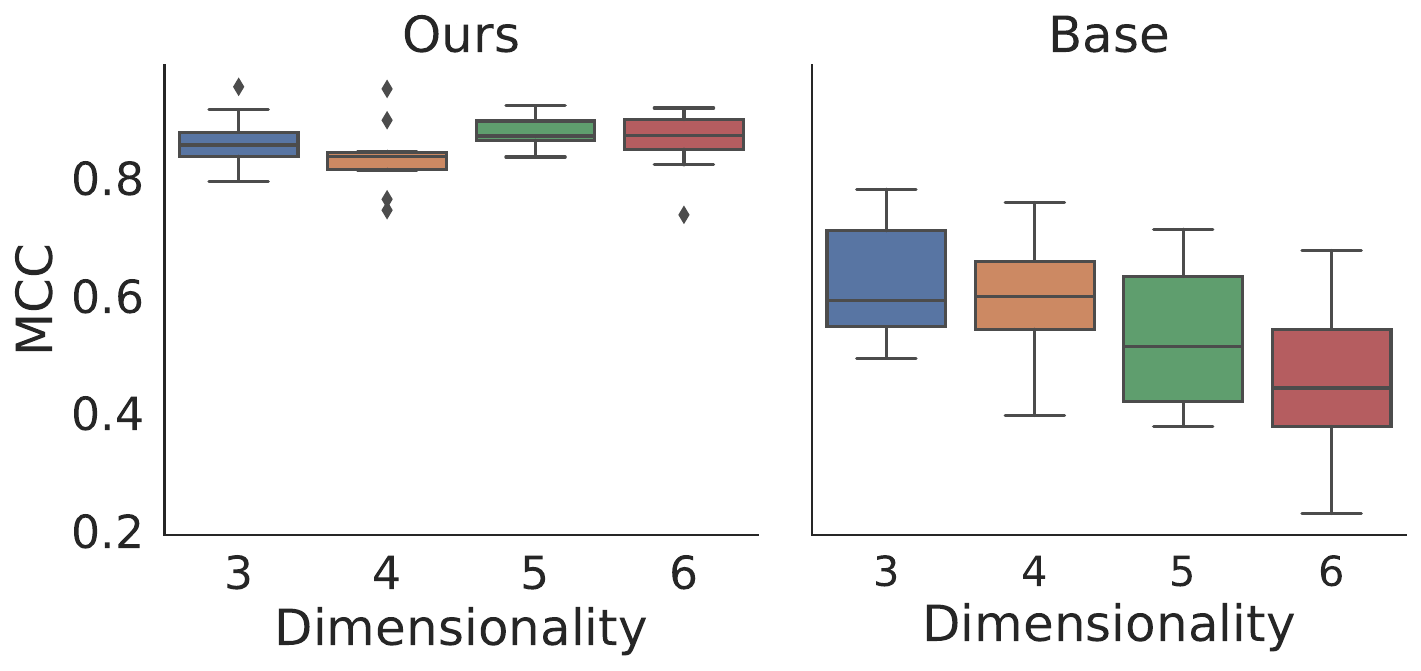}
    \vspace{-1.2em}
    \caption{Identification of class-dependent concepts w.r.t. different numbers of concepts.}
    \label{fig:violin_change_base}
  \end{minipage}
      \vspace{-0.5em}
\end{figure}

% Second violin figure
\begin{figure}[t]
  \begin{minipage}{0.9\columnwidth}
    \includegraphics[width=\linewidth]{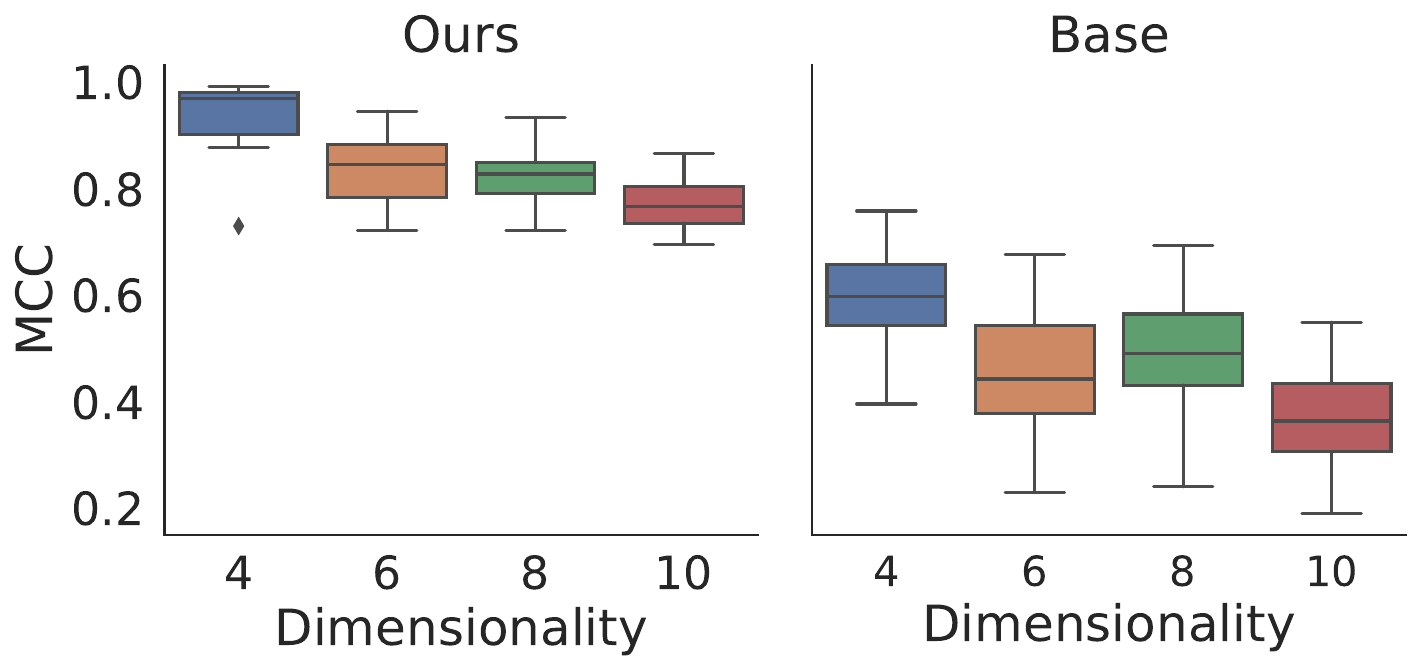}
    \vspace{-1.2em}
    \caption{Identification of all concepts w.r.t. different numbers of concepts.}
    \label{fig:violin_mixed_base}
  \end{minipage}
      \vspace{-1.5em}
\end{figure}

\looseness=-1
\textbf{Synthetic datasets.} \ \
We conduct experiments on various synthetic datasets to verify the proposed identifiability theory. Specifically, we focus on two settings: learning all class-dependent concepts (Fig.~\ref{fig:violin_change_base}) and learning all concepts including class-independent ones (Fig.~\ref{fig:violin_mixed_base}), under appropriate conditions. For \textit{Ours}, the observations are generated according to the assumptions required for the theory; while for \textit{Base}, no structural conditions on either $\mathcal{M}$ or $\mathcal{F}$ have been imposed, and the other settings stay the same. Comparison with other baselines and more details are included in Appx. \ref{sec:ex_ap}. To measure the element-wise identifiability, we use the standard Mean Correlation Coefficient (MCC) between the ground-truth and estimated concepts. 

The results (Figs. \ref{fig:violin_change_base} and \ref{fig:violin_mixed_base}) demonstrate that our models achieve higher MCCs compared to the base model in both settings. This suggests that it is possible to identify hidden concepts from purely observational data without making assumptions about the concept type, functional relationships, or parametric generative models. Meanwhile, our models also provide lower variances across different runs, which further verifies our theoretical findings. As indicated by these results, hidden concepts can be identified up to an element-wise transformation and a permutation under our conditions, while the base model fails to disentangle and recover most concepts from data, further suggesting the necessity of the proposed conditions.

% Shirt figure
\begin{figure}[t]
  \begin{minipage}{\columnwidth}
    \includegraphics[width=\linewidth]{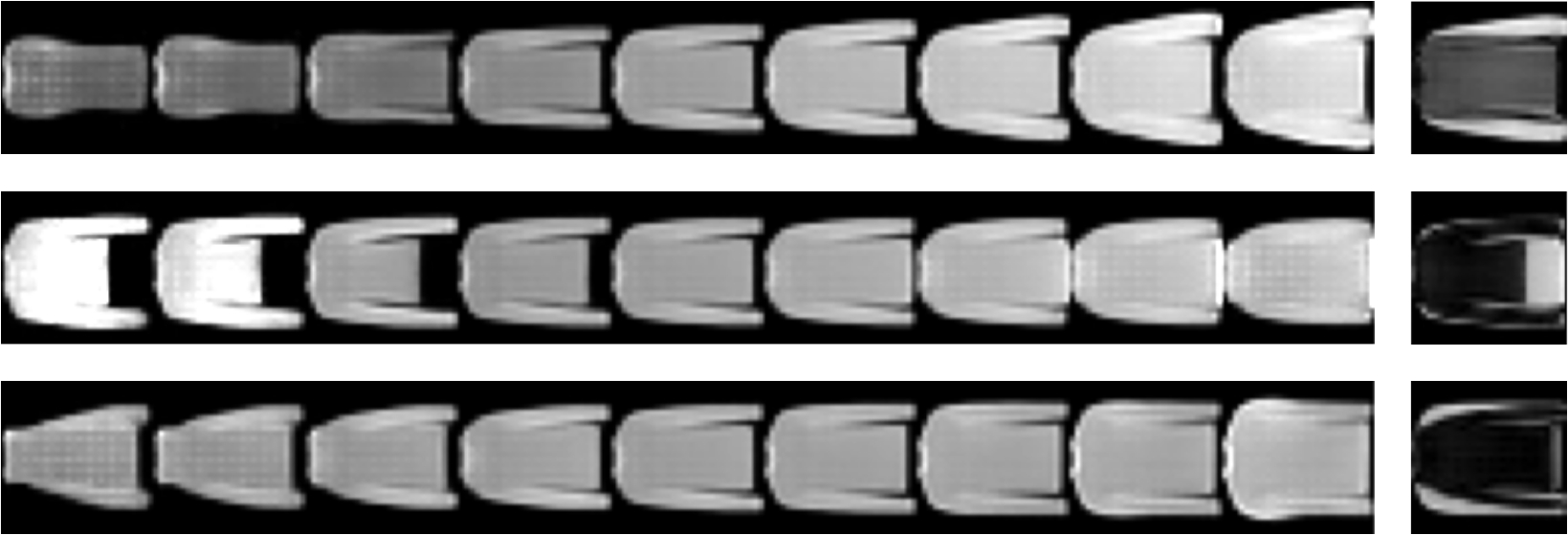}
    \vspace{-1.2em}
    \caption{Results on Fashion-MNIST. The rows correspond to different concepts of a pullover: ``sleeve length,'' ``torso length,'' and ``shoulder width,'' respectively.}
    \label{fig:shirt}
  \end{minipage}
    \vspace{-0.6em}
\end{figure}

% Shoe figure
\begin{figure}[t]
  \begin{minipage}{\columnwidth}
    \includegraphics[width=\linewidth]{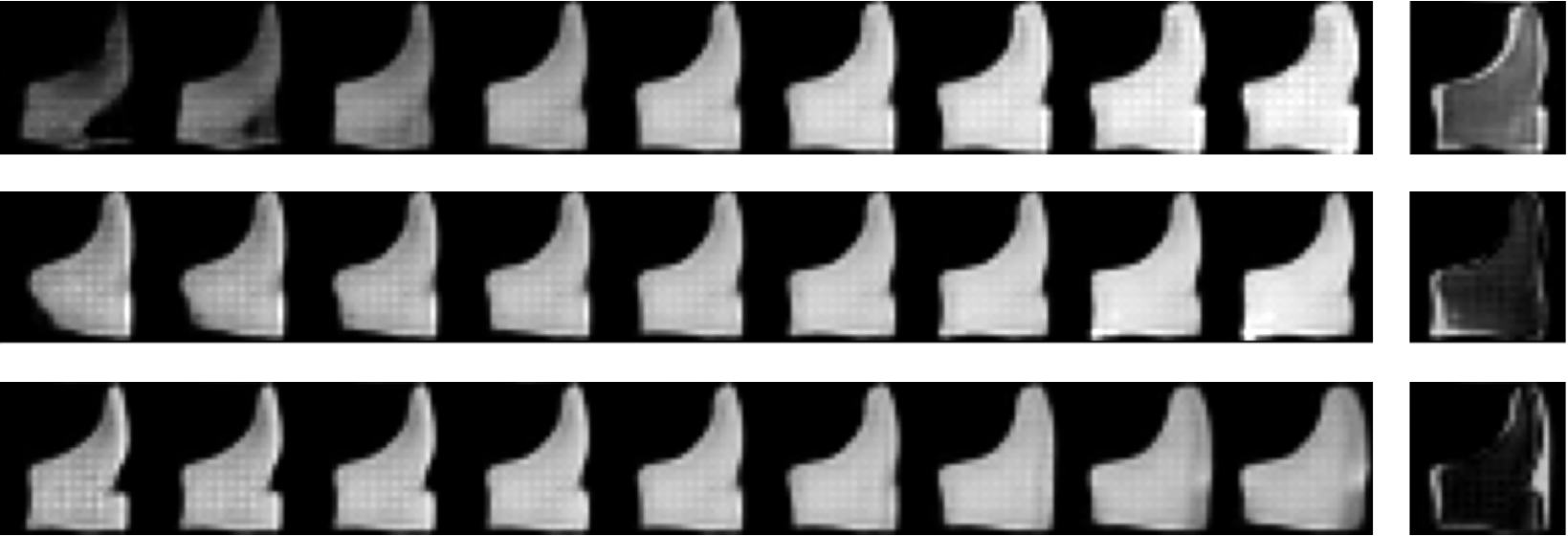}
    \vspace{-1.2em}
    \caption{Results on Fashion-MNIST. The rows correspond to different concepts of an ankle boot: ``heel height,'' ``ankle width,'' and ``toe box width,'' respectively.}
    \label{fig:shoe}
  \end{minipage}
        \vspace{-1.5em}
\end{figure}

% Panda figure
\begin{figure}[t]
  \begin{minipage}{\columnwidth}
    \includegraphics[width=\linewidth]{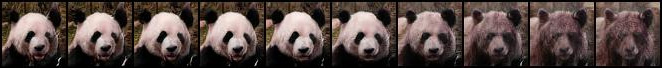} \\
    \includegraphics[width=\linewidth]{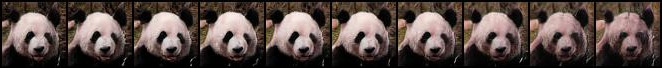}
    \vspace{-1.2em}
    \caption{Results on AnimalFace. The rows correspond to ``Ursid'' and ``Monochrome'' of a panda, respectively.}
    \label{fig:animal}
  \end{minipage}
  \vspace{-0.6em}
\end{figure}

% Flower figure
\begin{figure}[t]
  \begin{minipage}{\columnwidth}
    \includegraphics[width=\linewidth]{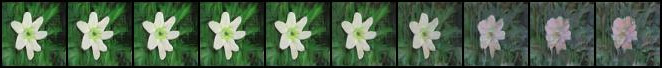} \\
    \includegraphics[width=\linewidth]{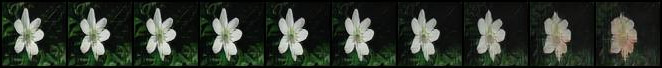}
    \vspace{-1.2em}
    \caption{The same concept (``Blooming'') consistently identified from different environments in Flower102.}
    \label{fig:flower}
  \end{minipage}
      \vspace{-1.5em}
\end{figure}

\textbf{Real-world datasets.} \ \
\looseness=-1
To assess the applicability of our proposed structural condition in complex practical contexts, we performed experiments on five different real-world datasets, i.e., the Fashion-MNIST \citep{xiao2017/online}, EMNIST \citep{cohen2017emnist}, AnimalFace \citep{si2011learning}, Flower102 \citep{nilsback2008automated} , and FFHQ \citep{karras2019style} datasets. We highlight the identified concepts with the largest standard deviations (SDs) for Fashion-MNIST (Figs. \ref{fig:shirt} and \ref{fig:shoe}), EMNIST (Fig. \ref{fig:emnist_all} in Appx. \ref{sec:ex_ap_results}), AnimalFace (Fig. \ref{fig:animal}), and FFHQ (Figs. \ref{fig:age}, \ref{fig:makeup}, and \ref{fig:gender} in Appx. \ref{sec:ex_ap_results}). Each row in the figures shows reconstructed images with the corresponding concept value varying to illustrate its effect. Additionally, the rightmost column features a heat map depicting the absolute pixel differences to visualize the influence. Clearly, the semantics of the identified concepts align with our understanding of the corresponding classes. For Flower102, we test the robustness of the recovered concept by comparing the same concept across different angles and environments. As seen in Fig. \ref{fig:flower}, the concept can be consistently identified from the same class across various conditions, further supporting our theory. 
Therefore, these results indicate that hidden concepts can be identified from observational data alone without the need to specify the generative model, underscoring the practical viability. Moreover, certain concepts are inherently entangled across classes (e.g., Figs. \ref{fig:age}, \ref{fig:makeup}, and \ref{fig:gender} in Appx. \ref{sec:ex_ap_results}), making it impossible to disentangle them individually without additional assumptions. This further highlights the practical significance of our partial identification by local comparison.

% of the proposed nonparametric identifiability in real-world scenarios.

\vspace{-0.4em}
\section{Conclusion}
\vspace{-0.2em}

\looseness=-1
Drawing inspiration from the fundamental cognitive mechanism of learning through comparison, we establish a set of theoretical guarantees for learning concepts in general nonparametric settings. We provide a theoretical framework that potentially explains the impressive empirical successes in many previous works. Specifically, we prove that hidden concepts can be identified up to trivial indeterminacy from diverse classes of observations without any assumptions on the concept types, functional relations, or parametric generating models. Interestingly, even in scenarios where the structural conditions do not universally hold, we can still provide appropriate identifiability for a subset of concepts with sufficient diversity based on the mechanism of local comparison, thereby greatly broadening the applicability of the proposed theory. Furthermore, the connective structure between classes and concepts can also be recovered in a nonparametric manner. As a current limitation, future work involves exploiting the theory to a wider range of practical problems, such as compositional generalization, decision-making, controllable generation, and foundation models.

\section*{Impact Statement}
This paper presents work whose goal is to advance the field of Machine Learning. There are many potential societal consequences of our work, none which we feel must be specifically highlighted here.

\section*{Acknowledgment}
We appreciate the anonymous reviewers and the area chair for their constructive feedback. We would also like to acknowledge the support from NSF Award No. 2229881, AI Institute for Societal Decision Making (AI-SDM), the National Institutes of Health (NIH) under Contract R01HL159805, and grants from Quris AI, Florin Court Capital, and MBZUAI-WIS Joint Program.

\bibliography{bibliography}
\bibliographystyle{icml2025}

%%%%%%%%%%%%%%%%%%%%%%%%%%%%%%%%%%%%%%%%%%%%%%%%%%%%%%%%%%%%%%%%%%%%%%%%%%%%%%%
%%%%%%%%%%%%%%%%%%%%%%%%%%%%%%%%%%%%%%%%%%%%%%%%%%%%%%%%%%%%%%%%%%%%%%%%%%%%%%%
% APPENDIX
%%%%%%%%%%%%%%%%%%%%%%%%%%%%%%%%%%%%%%%%%%%%%%%%%%%%%%%%%%%%%%%%%%%%%%%%%%%%%%%
%%%%%%%%%%%%%%%%%%%%%%%%%%%%%%%%%%%%%%%%%%%%%%%%%%%%%%%%%%%%%%%%%%%%%%%%%%%%%%%
\appendix
\onecolumn
\newpage

\addcontentsline{toc}{section}{Appendix} % Add the appendix text to the document TOC
\part{Appendices} % Start the appendix part
% \parttoc

{\hypersetup{linkcolor=black}
\parttoc % Insert the appendix TOC
}

\section{Summary of Notation}
\label{sec:notation_ap}

We summarize the key notation used throughout the paper to provide a quick reference for readers.

\subsection*{Variables and Functions}
\begin{itemize}
    \item $\mathbf{x} = (\mathbf{x}_1, \ldots, \mathbf{x}_m) \in \mathcal{X} \subseteq \mathbb{R}^m$ : Observed variables.
    \item $\mathbf{z} = (\mathbf{z}_A, \mathbf{z}_B) \in \mathcal{Z} \subseteq \mathbb{R}^n$, where $n = n_A + n_B$ : Latent concept variables.
    \item $\mathbf{z}_A \in \mathbb{R}^{n_A}$ : Class-dependent concepts influenced by the classes $\mathbf{c}$.
    \item $\mathbf{z}_B \in \mathbb{R}^{n_B}$ : Class-independent concepts, unaffected by $\mathbf{c}$.
    \item $\mathbf{c} = (\mathbf{c}_1, \ldots, \mathbf{c}_u)$ : Class variables represented as vectors, with $u$ classes.
    \item $f: \mathcal{Z} \rightarrow \mathcal{X}$ : Injective generative function mapping latent concepts to observations.
    \item $\mathbf{z}_A = g(\mathbf{c}, \theta)$ : Class-dependent concept function parameterized by $\mathbf{c}$ and $\theta$ (other factors).
    \item $\theta$ : Additional influencing factors in the function $g$.
\end{itemize}

\subsection*{Probabilities and Densities}
\begin{itemize}
    \item $p(\mathbf{z} \mid \mathbf{c}) = p(\mathbf{z}_A \mid \mathbf{c}) p(\mathbf{z}_B)$ : Conditional density of latent concepts $\mathbf{z}$ given classes $\mathbf{c}$, assuming conditional independence.
    \item $p(\mathbf{z}_A \mid \mathbf{c}) = \prod_{i=1}^{n_A} p(\mathbf{z}_i \mid M_{i,\cdot} \odot \mathbf{c})$ : Factorized density of class-dependent concepts $\mathbf{z}_A$.
    \item $\mathbb{E}[\cdot]$ : Expectation operator.
    \item $\mathbb{P}$ : Probability measure.
\end{itemize}

\subsection*{Indices and Sets}
\begin{itemize}
    \item $A_i$ : Index set of concepts corresponding to class $\mathbf{c}_i$.
    \item $\mathbf{z}_{A_i}$ : Concepts associated with class $\mathbf{c}_i$.
    \item $\mathbf{z}_{A_i \setminus A_j}$ : Difference in concept sets between classes $\mathbf{c}_i$ and $\mathbf{c}_j$.
    \item $\mathcal{I} \subset \{1, \ldots, m\} \times \{1, \ldots, n\}$ : Set of indices for matrix elements.
    \item $\mathcal{I}_{i,\cdot} = \{ j \mid (i, j) \in \mathcal{I} \}$ : Indices corresponding to row $i$ in $\mathcal{I}$.
    \item $\mathcal{I}_{\cdot,j} = \{ i \mid (i, j) \in \mathcal{I} \}$ : Indices corresponding to column $j$ in $\mathcal{I}$.
    \item $\mathcal{S} \subset \{1, \ldots, n\}$ : Subset of indices.
    \item $\mathbb{R}_{\mathcal{S}}^n = \{ s \in \mathbb{R}^n \mid s_i = 0 \text{ if } i \notin \mathcal{S} \}$ : Subspace of $\mathbb{R}^n$ where components not in $\mathcal{S}$ are zero.
\end{itemize}

\subsection*{Matrices and Operations}
\begin{itemize}
    \item $S \in \mathbb{R}^{a \times b}$ : An arbitrary matrix with the shape $(a, b)$.
    \item $S_{i,\cdot}, S_{\cdot,j}$ : $i$-th row, $j$-th column of matrix $S$.
    \item $\operatorname{supp}(S) = \{ (i,j) \mid S_{i,j} \neq 0 \}$ : Support of matrix $S$.
    \item $\operatorname{supp}(\mathbf{S}(\Theta)) = \{ (i,j) \mid \exists \theta \in \Theta, \mathbf{S}(\theta)_{i,j} \neq 0 \}$ : Support of a matrix-valued function $\mathbf{S}(\Theta)$.
    \item $D_{\mathbf{c}} g$ : Partial derivative of $g$ with respect to class labels $\mathbf{c}$.
    \item $\mathcal{D} = \operatorname{supp}(D_{\mathbf{c}} g)$ : Support of the Jacobian of $g$ with respect to $\mathbf{c}$.
    \item $\mathbf{T}$ : Matrix-valued function representing a transformation between $D_{\mathbf{c}} g$ and $D_{\hat{\mathbf{c}}} \hat{g}$.
    \item $\mathrm{T}$ : A matrix sharing the same support as $\mathbf{T}$.
    \item $M \in \{0,1\}^{n_A \times u}$ : Binary structure matrix showing connections between classes and concepts.
    \item $\odot$ : Element-wise (Hadamard) product.
    \item $\operatorname{span}\{\cdot\}$ : Linear span of a set of vectors.
    \item $\operatorname{rank}(\cdot)$ : Rank of a matrix.
\end{itemize}

\subsection*{Data and Parameters}
\begin{itemize}
    \item $\{ (\mathbf{x}, \mathbf{c})^{(i)} \}_{i=1}^N$ : Dataset of $N$ samples with observed variables and corresponding classes.
    \item $\mathcal{M}$ : Mask applied to classes in the dataset.
    \item $\lambda$ : Regularization parameter used in the estimation objective.
    \item $\mathbf{R}$ : Regularization term (e.g., $\ell_1$ norm applied to estimated supports).
    \item $\pi$ : Permutation function used to align estimated concepts.
    \item $\Theta$ : Parameter space.
\end{itemize}

\subsection*{Conventions}
\begin{itemize}
    \item Bold lowercase letters (e.g., $\mathbf{x}$) denote variables, with their realizations denoted by the plain version (e.g., $x$ for $\mathbf{x}$); uppercase letters (e.g., $S$, $M$) denote matrices.
    \item Calligraphic letters (e.g., $\mathcal{X}$, $\mathcal{Z}$) denote sets or spaces.
    \item Subscripts with dots denote slicing: $S_{i,\cdot}$ represents the $i$-th row; $S_{\cdot,j}$ represents the $j$-th column; \( S_{i, \cdot k} \) denotes the first \( k \) dimensions of the \( i \)-th row and \( S_{i, k+1\cdot} \) dentoes the remaining; \( S_{\cdot k, j} \) denotes the first \( k \) dimensions of the \( j \)-th column and \( S_{k+1\cdot, j} \) denotes the remaining.
    \item Estimated quantities are denoted with hats (e.g., $\hat{\mathbf{z}}$ for estimated latent concepts).
\end{itemize}
\section{Proofs}\label{sec:proofs}

\subsection{Proof of Theorem \ref{thm:pair}}\label{sec:proof_pair}

\Pair*

\begin{proof}
Because of the observational equivalence, we have
\begin{equation}
    p_{\hat{\mathbf{x}}|\mathbf{c}}= p_{\mathbf{x}|\mathbf{c}} \Rightarrow p_{\hat{f}(\hat{\mathbf{z}})|\mathbf{c}}= p_{f(\mathbf{z})|\mathbf{c}}.
\end{equation}
Based on the change-of-variable formula, and the invertibility of the generating function $f$, there exists an invertible function $t \coloneqq \hat{f}^{-1} \circ f$ such that $\hat{\mathbf{z}} = t(\mathbf{z})$.
% Since both ${D_{\mathbf{c}} g}$ and ${D_{\mathbf{c}} \hat{g}}$ are of full row rank, 
By taking the derivatives of both sides w.r.t. $\mathbf{c}$, we have 
\begin{equation}
    {D_{\mathbf{c}} \hat{g}} = \mathbf{T} {D_\mathbf{c} g},
\end{equation}
where $\mathbf{T}$ is matrix-value function that is invertible. According to the assumption, the span is nondegenerate in the sense that
\begin{equation} 
    \begin{bmatrix} 
        D_\mathbf{c} g ((c, \theta)^{(1)})_{\cdot,j} \\ 
        D_\mathbf{c} g ((c, \theta)^{(2)})_{\cdot,j} \\ 
        \vdots \\ 
        D_\mathbf{c} g ((c, \theta)^{(|\mathcal{D}_{\cdot,j}|)})_{\cdot,j} 
    \end{bmatrix}
\end{equation}
are linearly independent. Then we can construct an one-hot vector $e_{i_0} \in \mathbb{R}_{\mathcal{D}_{\cdot,j}}^{n_A}$ for any $i_0 \in \mathcal{D}_{\cdot,j}$ as a linear combination of vectors $\{D_\mathbf{c} g ((c, \theta)^{(\ell)})_{\cdot,j}\}_{\ell=1}^{|\mathcal{D}_{\cdot,j}|}$, i.e., $e_{i_0} = \sum_{\ell \in \mathcal{D}_{\cdot,j}} \beta_{\ell} D_\mathbf{c} g ((c, \theta)^{(\ell)})_{\cdot,j}$, where $\beta_{\ell}$ denotes some coefficient. Note that we define $\mathcal{D}$ as the support of $D_\mathbf{c} g$. We also define $\mathrm{T}$ as a matrix with the same support of $\mathbf{T}$ in ${D_{\mathbf{c}} \hat{g}} = \mathbf{T} {D_\mathbf{c} g}$. Then we have
\begin{equation} \label{eq:basis}
    \mathrm{T}_{\cdot,i_0} = \mathrm{T} e_{i_0} = \sum_{\ell \in \mathcal{D}_{\cdot,j}} \beta_{\ell} \mathrm{T} D_\mathbf{c} g ((c, \theta)^{(\ell)})_{\cdot,j}.
\end{equation}
According to the assumption, we have
\begin{equation}
    \mathrm{T} D_\mathbf{c} g ((c, \theta)^{(\ell)})_{\cdot,j} \in \mathbb{R}_{\hat{\mathcal{D}}_{\cdot,j}}^{n_A}.
\end{equation}
Therefore, Eq. \eqref{eq:basis} implies $\mathrm{T}_{\cdot,i_0} \in \mathbb{R}_{\hat{\mathcal{D}}_{\cdot,j}}^{n_A}$, which is equivalent to
\begin{equation} 
    \forall i \in \mathcal{D}_{\cdot,j}, \mathrm{T}_{\cdot,i_0}  \in \mathbb{R}_{\hat{\mathcal{D}}_{\cdot,j}}^{n_A}.
\end{equation}
This further indicates
\begin{equation} \label{eq:connection}
    \forall (i,j) \in \mathcal{D}, \mathcal{T}_{\cdot,i} \times \{j\}  \subset \hat{\mathcal{D}}.
\end{equation}
Since $\mathbf{T}$ is invertible, we have
\begin{equation}
    \operatorname{det}(\mathbf{T}) = \sum_{\sigma \in \mathcal{S}_{n_A}} \left(\operatorname{sgn}(\sigma) \prod_{j=1}^{n_A} \mathbf{T}_{\sigma(j), j}\right) \neq 0,
\end{equation}
where $\mathcal{S}_{n_A}$ is a set of $n_A$-permutations. Then there must exist at least one non-zero term in the summation, which indicates that
\begin{equation}
    \exists \sigma \in \mathcal{S}_{n_A},\ \forall j \in \{1, \ldots, n_A\},\ \operatorname{sgn}(\sigma) \prod_{j=1}^{n_A} \mathbf{T}_{\sigma(j), j} \neq 0.
\end{equation}
Clearly, there cannot be any term in the product that equals zero, so we have
\begin{equation}
    \exists \sigma \in \mathcal{S}_{n_A},\ \forall j \in \{1, \ldots, n_A\}, \mathbf{T}_{\sigma(j), j} \neq 0.
\end{equation}
Denote $\mathcal{T}$ as the support of $\mathbf{T}$. Thus, it follows that
\begin{equation} 
    \forall i \in \{1, \ldots, n_A\}, \sigma(i) \in \mathcal{T}_{\cdot, i}.
\end{equation}
Then it yields
\begin{equation}
    \forall (i,j) \in \mathcal{D}, (\sigma(i), j) \in \mathcal{T}_{\cdot,i} \times \{j\}.
\end{equation}
Because of Eq. \eqref{eq:connection}, we have
\begin{equation} \label{eq:relation_larger}
    \forall (i,j) \in \mathcal{D}, (\sigma(i), j) \in \hat{\mathcal{D}}.
\end{equation}
Let us denote $\Tilde{\pi}(\mathcal{D})$ as a row permutation of $\mathcal{D}$, where $\forall (i,j) \in \mathcal{D}$, there must be 
\begin{equation}
    (\sigma(i),j) \in \Tilde{\pi}(\mathcal{D})
\end{equation}
and
\begin{equation}
    |\Tilde{\pi}(\mathcal{D})| = |\mathcal{D}|.
\end{equation}
Furthermore, Eq. \eqref{eq:relation_larger} indicates that 
\begin{equation} \label{eq:relation_subset}
    \Tilde{\pi}(\mathcal{D}) \subset \hat{\mathcal{D}},
\end{equation}
We have the following relation based on the sparsity regularization:
\begin{equation}
    |\hat{\mathcal{D}}| \leq |\mathcal{D}|.
\end{equation}
Therefore, we have the following relation:
\begin{equation}
    |\Tilde{\pi}(\mathcal{D})| = |\mathcal{D}| \geq |\hat{\mathcal{D}}|.
\end{equation}
Together with Eq. \eqref{eq:relation_subset}, it follows that
\begin{equation}
    \hat{\mathcal{D}} = \Tilde{\pi}(\mathcal{D}).
\end{equation}
Let us denote the permutation indeterminacy in our goal as $\pi$ s.t. 
\begin{equation} \label{eq:def_perm}
    \hat{\mathcal{D}} \coloneqq \{(\pi(i), j) \mid(i, j) \in \mathcal{D}\}.
\end{equation}
Given two classes $\mathbf{c}_i$ and $\mathbf{c}_j$, for any $\mathbf{z}_k \in \mathbf{z}_{A_i}$, we have
\begin{equation}
    (k, i) \in \mathcal{D}.
\end{equation}
Because of Eq. \eqref{eq:connection}, this further implies
\begin{equation} \label{eq:contra_base}
    \mathcal{T}_{\cdot,k} \times \{i\} \in \hat{\mathcal{D}}.
\end{equation}
For any $\pi(v)$ where $\mathbf{z}_v \in \mathbf{z}_{A_j \setminus A_i}$, suppose we have
\begin{equation}
    (\pi(v), k) \in \mathcal{T},
\end{equation}
which is equivalent to
\begin{equation}
    \pi(v) \in \mathcal{T}_{\cdot,k}.
\end{equation}
Then according to Eq. \eqref{eq:contra_base}, we have
\begin{equation} \label{eq:contradiction}
    (\pi(v), i) \in \mathcal{T}_{\cdot,k} \times \{i\} \in \hat{\mathcal{D}}.
\end{equation}
Based on Eq. \eqref{eq:def_perm}, Eq. \eqref{eq:contradiction} is equivalent to
\begin{equation}
    (v,i) \in \mathcal{D},
\end{equation}
which indicates a contradiction since $\mathbf{z}_v \in \mathbf{z}_{A_j \setminus A_i}$. 

As a result, there must be $(\pi(v), k) \notin \mathcal{T}$. Similarly, for any $\mathbf{z}_u \in \mathbf{z}_{A_j}$, we can also show by contradiction that there must be $(\pi(u), j) \notin \mathcal{T}$. Therefore, for any two classes $\mathbf{c}_i$ and $\mathbf{c}_j$, there exists a permutation $\pi$ that 
\begin{equation}
    \frac{\partial \hat{\mathbf{z}}_{\pi(A_i \setminus A_j)}}{\partial \mathbf{z}_{A_j}} = 0.
\end{equation}
Similarly, there is also
\begin{equation}
    \frac{\partial \hat{\mathbf{z}}_{\pi(A_j \setminus A_i)}}{\partial \mathbf{z}_{A_i}} = 0,
\end{equation}
which is the goal.
\end{proof}

\subsection{Proof of Corollary \ref{cor:multi}}\label{sec:proof_multi}

\Multi*

\begin{proof}
Because all assumptions in Theorem \ref{thm:pair} hold, according to the proof of it, we know that, for a row permutation of $\mathcal{D}$, i.e., $\Tilde{\pi}(\mathcal{D})$ where
\begin{equation}
    \Tilde{\pi}(\mathcal{D}) \coloneqq \{(\sigma(i),j) | (i,j) \in \mathcal{D}\}.
\end{equation}
There must be a relationship that 
\begin{equation}
    \hat{\mathcal{D}} = \Tilde{\pi}(\mathcal{D}).
\end{equation}
Then we want to show that, there exists a permutation $\pi$ that 
\begin{equation}
    \frac{\partial \hat{\mathbf{z}}_{\pi(A_i \setminus A_{I \setminus i})}}{\mathbf{z}_{A_{I \setminus i}}} = 0.
\end{equation}
For any $z_k \in \mathbf{z}_{A_{I \setminus i}}$ and its corresponding class $\mathbf{c}_q \in \mathbf{c}_{I}$ and $q \neq i$, we have
\begin{equation} \label{eq:construct}
    (k, q) \in \mathcal{D}.
\end{equation}
According to the proof of Theorem \ref{thm:pair}, we have 
\begin{equation} \label{eq:connection_3}
    \mathrm{T} D_\mathbf{c} g ((c, \theta)^{(\ell)})_{\cdot,j} \in \mathbb{R}_{\hat{\mathcal{D}}_{\cdot,j}}^{n_A}.
\end{equation}
Therefore, Eq. \eqref{eq:construct} further indicates that
\begin{equation} \label{eq:contra_base_3}
    \mathcal{T}_{\cdot,k} \times \{q\} \in \hat{\mathcal{D}},
\end{equation}
where $\mathcal{T}$ denotes the support of $\mathrm{T}$. Define the permutation $\pi$ as
\begin{equation} \label{eq:def_perm_3}
    \hat{\mathcal{D}} \coloneqq \{(\pi(i), j) \mid(i, j) \in \mathcal{D}\}.
\end{equation}
Then we consider any $\pi(v)$ where we have 
\begin{equation}
    \mathbf{z}_v \in \mathbf{z}_{A_i \setminus A_{I \setminus i}}.
\end{equation}
Suppose we have
\begin{equation}
    (\pi(v), k) \in \mathcal{T}.
\end{equation}
This also implies that
\begin{equation}
    \pi(v) \in \mathcal{T}_{\cdot,k}.
\end{equation}
Based on Eq. \eqref{eq:contra_base_3}, we further have
\begin{equation}
    (\pi(v), q) \in \mathcal{T}_{\cdot,k} \times \{q\} \in \hat{\mathcal{D}}.
\end{equation}
According to the definition of $\hat{\mathcal{D}}$, this is equivalent to
\begin{equation}
    (v,q) \in \mathcal{D},
\end{equation}
Because $\mathbf{z}_v \in \mathbf{z}_{A_i \setminus A_{I \setminus i}}$, the above equation indicates that there must be $\mathbf{c}_q = \mathbf{c}_i$. which is a contradiction since $q \neq i$. Therefore, we have
\begin{equation}
    (\pi(v), k) \notin \mathcal{T}.
\end{equation}
This implies that there exists a permutation $\pi$ that 
\begin{equation}
    \frac{\partial \hat{\mathbf{z}}_{\pi(A_i \setminus A_{I \setminus i})}}{\mathbf{z}_{A_{I \setminus i}}} = 0.
\end{equation}
This completes the proof.
\end{proof}

\subsection{Proof of Theorem \ref{thm:changing}}\label{sec:proof_changing}

\Changing*

\begin{proof}
Because of the observational equivalence, we have
\begin{equation}
    p_{\hat{\mathbf{x}}|\mathbf{c}}= p_{\mathbf{x}|\mathbf{c}} \Rightarrow p_{\hat{f}(\hat{\mathbf{z}})|\mathbf{c}}= p_{f(\mathbf{z})|\mathbf{c}}.
\end{equation}
Based on the change-of-variable formula, and the invertibility of the generating function $f$, there exists an invertible function $h \coloneqq \hat{f}^{-1} \circ f$ such that $\hat{\mathbf{z}} = h(\mathbf{z})$. Using the chain rule, the derivative of $\hat{g}$ with respect to $\mathbf{c}$ can be expressed as:
\begin{equation}
    {D_{\mathbf{c}} \hat{g}} = {D_{\mathbf{z}} h} {D_{\mathbf{c}} g}.
\end{equation}
The Jacobian of $h$ can be written as:
{\hspace*{-\arraycolsep}\vline\hspace*{-\arraycolsep}}
\begin{equation}
    \scalebox{1.25}{$
        D_{\mathbf{z}}h =
        \begin{bmatrix}
        \frac{\partial \hat{\mathbf{z}}_{A} }{\partial \mathbf{z}_{A} }
        & \rvline & \frac{\partial \hat{\mathbf{z}}_{A} }{\partial \mathbf{z}_{B}} \\
        \hline
        \frac{\partial \hat{\mathbf{z}}_{B} }{\partial \mathbf{z}_{A} } & \rvline &
        \frac{\partial \hat{\mathbf{z}}_{B} }{\partial \mathbf{z}_{B} }
        \end{bmatrix}.
        $}
\end{equation}
According to steps 1, 2, and 3 in the proof of Theorem 4.2 in \citet{kong2022partial}, the bottom-left block of $D_{\mathbf{z}} h$, i.e., ${D_{\mathbf{z}} h}_{n_A+1 \cdot ,\cdot n_A}$, consists of only zero entries. As a result, the Jacobian is equivalent to:
\begin{equation}
    \scalebox{1.25}{$
        D_{\mathbf{z}}h =
        \begin{bmatrix}
        \frac{\partial \hat{\mathbf{z}}_{A} }{\partial \mathbf{z}_{A} }
        & \rvline & \frac{\partial \hat{\mathbf{z}}_{A} }{\partial \mathbf{z}_{B}} \\
        \hline
        \mathbf{0} & \rvline &
        \frac{\partial \hat{\mathbf{z}}_{B} }{\partial \mathbf{z}_{B} }
        \end{bmatrix}.
        $}
\end{equation}
Since $h$ is invertible, the determinant of $D_{\mathbf{z}}h$ is non-zero. Together with the structure of the Jacobian matrix, we have
\begin{equation}
    \operatorname{det}(D_{\mathbf{z}}h) = \operatorname{det}(\frac{\partial \hat{\mathbf{z}}_{A} }{\partial \mathbf{z}_{A} }) \operatorname{det}(\frac{\partial \hat{\mathbf{z}}_{B} }{\partial \mathbf{z}_{B} }),
\end{equation}
which further implies
\begin{align}
    \operatorname{det}(\frac{\partial \hat{\mathbf{z}}_{A} }{\partial \mathbf{z}_{A} }) &\neq 0, \\
    \operatorname{det}(\frac{\partial \hat{\mathbf{z}}_{B} }{\partial \mathbf{z}_{B} }) &\neq 0.
\end{align}
Since $\operatorname{det}(\frac{\partial \hat{\mathbf{z}}_{B} }{\partial \mathbf{z}_{B} }) \neq 0$ and $\frac{\partial \hat{\mathbf{z}}_{B} }{\partial \mathbf{z}_{A} } = 0$, there exists an invertible function $h_B: \mathbf{z}_B \rightarrow \hat{\mathbf{z}}_B$ s.t.,
\begin{equation}
    \hat{\mathbf{z}}_{B} = h_B(\mathbf{z}_B).
\end{equation}
Since $\hat{\mathbf{z}}_A$ is independent of $\hat{\mathbf{z}}_B$ and $\hat{\mathbf{z}}_{B} = h_B(\mathbf{z}_B)$, we further have
\begin{equation}
    \frac{\partial \hat{\mathbf{z}}_A}{\partial \mathbf{z}_B} = 0.
\end{equation}
Then the Jacobian can be represented as
\begin{equation} \label{eq:jacobian_zero}
    \scalebox{1.25}{$
        D_{\mathbf{z}}h =
        \begin{bmatrix}
        \frac{\partial \hat{\mathbf{z}}_{A} }{\partial \mathbf{z}_{A} }
        & \rvline & \mathbf{0} \\
        \hline
        \mathbf{0} & \rvline &
        \frac{\partial \hat{\mathbf{z}}_{B} }{\partial \mathbf{z}_{B} }
        \end{bmatrix}.
        $}
\end{equation}

Thus, $\hat{\mathbf{z}}_B$ is identifiable up to a block-wise invertible transformation, and we have
\begin{equation}
    \left\{ 
  \begin{array}{ c l }
    \frac{\partial \hat{\mathbf{z}}_i}{\partial \mathbf{z}_j} = 0 & \quad i \in \{1,\ldots,n_A\},  j \in \{n_A+1,\ldots,n\}, \\
    \frac{\partial \hat{\mathbf{z}}_k}{\partial \mathbf{z}_v} = 0 & \quad k \in \{n_A+1,\ldots,n\}, v \in \{1,\ldots,n_A\}.
  \end{array}
\right.
\end{equation}
This implies that 
\begin{equation}
    {D_{\mathbf{c}} \hat{g}} = {D_{\mathbf{z}} h}_{\cdot n_A,\cdot n_A} {D_{\mathbf{c}} g}.
\end{equation}
According to the assumption, we have a set of linearly independent vectors as follows
\begin{equation} 
    \begin{bmatrix} 
        D_\mathbf{c} g ((c, \theta)^{(1)})_{\cdot,j} \\ 
        D_\mathbf{c} g ((c, \theta)^{(2)})_{\cdot,j} \\ 
        \vdots \\ 
        D_\mathbf{c} g ((c, \theta)^{(|\mathcal{D}_{\cdot,j}|)})_{\cdot,j} 
    \end{bmatrix}
\end{equation}
% \begin{equation}
%     \operatorname{span}\{D_\mathbf{c} g ((c, \theta)^{(\ell)})_{\cdot,j}\}_{\ell=1}^{|\mathcal{D}_{\cdot,j}|} = \mathbb{R}_{\mathcal{D}_{\cdot,j}}^{n_A}.
% \end{equation}
Then we can construct an one-hot vector $e_{i_0} \in \mathbb{R}_{\mathcal{D}_{\cdot,j}}^{n_A}$ for any $i_0 \in \mathcal{D}_{\cdot,j}$ as a linear combination of vectors $\{D_\mathbf{c} g ((c, \theta)^{(\ell)})_{\cdot,j}\}_{\ell=1}^{|\mathcal{D}_{\cdot,j}|}$, i.e., $e_{i_0} = \sum_{\ell \in \mathcal{D}_{\cdot,j}} \beta_{\ell} D_\mathbf{c} g ((c, \theta)^{(\ell)})_{\cdot,j}$, where $\beta_{\ell}$ denotes some coefficient. Note that we define $\mathrm{T}$ as a matrix with the same support of $\mathbf{T}$ in ${D_{\mathbf{c}}  \hat{g}} = \mathbf{T} {D_\mathbf{c} g}$, where $\mathbf{T}$ is a matrix-valued function. Then we have
\begin{equation} \label{eq:basis_2}
    \mathrm{T}_{\cdot,i_0} = \mathrm{T} e_{i_0} = \sum_{\ell \in \mathcal{D}_{\cdot,j}} \beta_{\ell} \mathrm{T} D_\mathbf{c} g ((c, \theta)^{(\ell)})_{\cdot,j}.
\end{equation}
According to the assumption, we have
\begin{equation}
    \mathrm{T} D_\mathbf{c} g ((c, \theta)^{(\ell)})_{\cdot,j} \in \mathbb{R}_{\hat{\mathcal{D}}_{\cdot,j}}^{n_A}.
\end{equation}
Therefore, Eq. \eqref{eq:basis_2} implies $\mathrm{T}_{\cdot,i_0} \in \mathbb{R}_{\hat{\mathcal{D}}_{\cdot,j}}^{n_A}$, which is equivalent to
\begin{equation} 
    \forall i \in \mathcal{D}_{\cdot,j}, \mathrm{T}_{\cdot,i_0}  \in \mathbb{R}_{\hat{\mathcal{D}}_{\cdot,j}}^{n_A}.
\end{equation}
This further indicates
\begin{equation} \label{eq:connection_2}
    \forall (i,j) \in \mathcal{D}, \mathcal{T}_{\cdot,i} \times \{j\}  \subset \hat{\mathcal{D}},
\end{equation}
where $\mathcal{T}$ denotes the support of $\mathbf{T}$. Since $\mathbf{T}$ is invertible, we have
\begin{equation}
    \operatorname{det}(\mathbf{T}) = \sum_{\sigma \in \mathcal{S}_{n_A}} \left(\operatorname{sgn}(\sigma) \prod_{j=1}^{n_A} \mathbf{T}_{\sigma(j), j}\right) \neq 0,
\end{equation}
where $\mathcal{S}_{n_A}$ is a set of $n_A$-permutations. Then there must exist at least one non-zero term in the summation, which indicates that
\begin{equation}
    \exists \sigma \in \mathcal{S}_{n_A},\ \forall j \in \{1, \ldots, n_A\},\ \operatorname{sgn}(\sigma) \prod_{j=1}^{n_A} \mathbf{T}_{\sigma(j), j} \neq 0.
\end{equation}
Clearly, there cannot be any term in the product that equals zero, so we have
\begin{equation}
    \exists \sigma \in \mathcal{S}_{n_A},\ \forall j \in \{1, \ldots, n_A\}, \mathbf{T}_{\sigma(j), j} \neq 0.
\end{equation}
Thus, it follows that
\begin{equation} 
    \forall i \in \{1, \ldots, n_A\}, \sigma(i) \in \mathcal{T}_{\cdot, i}.
\end{equation}
Then it yields
\begin{equation}
    \forall (i,j) \in \mathcal{D}, (\sigma(i), j) \in \mathcal{T}_{\cdot,i} \times \{j\}.
\end{equation}
Because of Eq. \eqref{eq:connection_2}, we have
\begin{equation} \label{eq:relation_larger_2}
    \forall (i,j) \in \mathcal{D}, (\sigma(i), j) \in \hat{\mathcal{D}}.
\end{equation}
Let us denote $\Tilde{\pi}(\mathcal{D})$ as a row permutation of $\mathcal{D}$, where $\forall (i,j) \in \mathcal{D}$, there must be
\begin{equation}
    (\sigma(i),j) \in \Tilde{\pi}(\mathcal{D}),
\end{equation}
and
\begin{equation}
    |\Tilde{\pi}(\mathcal{D})| = |\mathcal{D}|.
\end{equation}
Eq. \ref{eq:relation_larger_2} indicates that 
\begin{equation} \label{eq:relation_subset_2}
    \Tilde{\pi}(\mathcal{D}) \subset \hat{\mathcal{D}}.
\end{equation}
According to the sparsity regularization, we have the following relation based on the sparsity regularization:
\begin{equation}
    |\hat{\mathcal{D}}| \leq |\mathcal{D}|.
\end{equation}
Therefore, we have
\begin{equation}
    |\Tilde{\pi}(\mathcal{D})| = |\mathcal{D}| \geq |\hat{\mathcal{D}}|.
\end{equation}
Together with Eq. \eqref{eq:relation_subset_2}, it follows that
\begin{equation}
    \hat{\mathcal{D}} = \Tilde{\pi}(\mathcal{D}).
\end{equation}
Let us denote the permutation indeterminacy in our goal as $\pi$ s.t. 
\begin{equation} \label{eq:def_perm_2}
    \hat{\mathcal{D}} \coloneqq \{(\pi(i), j) \mid(i, j) \in \mathcal{D}\}.
\end{equation}
For a latent concept $\mathbf{z}_i$, according to the structural diversity assumption (Assump. \ref{assum:structure}), there exists a set of column indices $J$, where $M_{i,J}$ only has one non-zero entry. Let us denote that non-zero entry as $M_{i,j}$. Since $M$ is a binary matrix with the support $\mathcal{D}$, we have $(i,j) \in \mathcal{D}$ and $(i,k) \notin \mathcal{D}$ for any $k \in J \setminus j$.

Then, according to the assumption, for any other concept $\mathbf{z}_v$ where $v \neq i$, there must be a class $\mathbf{c}_q$ s.t. $q \in J \setminus{j}$ s.t.
\begin{equation}
    (v,q) \in \mathcal{D}.
\end{equation}
Because of Eq. \eqref{eq:connection_2}, it follows that
\begin{equation} \label{eq:contra_base_2}
    \mathcal{T}_{\cdot,v} \times \{q\} \in \hat{\mathcal{D}}.
\end{equation}
For any $\pi(i)$, suppose we have
\begin{equation}
    (\pi(i), v) \in \mathcal{T},
\end{equation}
which is equivalent to
\begin{equation}
    \pi(i) \in \mathcal{T}_{\cdot,v}.
\end{equation}
Then according to Eq. \eqref{eq:contra_base_2}, we have
\begin{equation} \label{eq:contradiction_2}
    (\pi(i), q) \in \mathcal{T}_{\cdot,v} \times \{q\} \in \hat{\mathcal{D}}.
\end{equation}
Based on Eq. \eqref{eq:def_perm_2}, Eq. \eqref{eq:contradiction_2} is equivalent to
\begin{equation}
    (i,q) \in \mathcal{D}.
\end{equation}
This is a contradiction since $(i,q) \notin \mathcal{D}$ for any $q \in J \setminus j$. Thus, for any $i \in \{1,\ldots,n_A\}$ and $v \in \{1,\ldots,n_A\} \setminus \{i\} $, there must be 
\begin{equation} \label{eq:not_depend}
    (\pi(i), v) \notin \mathcal{T}.
\end{equation}
Because $\mathcal{T}$ is invertible, all row must have at least one non-zero entry. Thus, Eq. \eqref{eq:not_depend} further implies
\begin{equation} \label{eq:depend}
    (\pi(i), i) \in \mathcal{T}.
\end{equation}
Combining both Eqs. \eqref{eq:not_depend} and \eqref{eq:depend} for each $i \in \{1,\ldots,n_A\}$, the transformation between $\hat{\mathbf{z}}_A$ and $\mathbf{z}_A$ must be a composition of an element-wise invertible transformation and a permutation, which is our goal.
\end{proof}

\subsection{Proof of Proposition \ref{prop:complete}}\label{sec:proof_complete}

\Complete*

We first present its formal version. For brevity, let $\mathcal{F}$ and $\hat{\mathcal{F}}$ denote the support of the Jacobian $ D_{\mathbf{z}} f$ and $D_{\hat{\mathbf{z}}} \hat{f}$, respectively. Additionally, $\mathrm{T}_f$ refers to a matrix with the same support of $\mathbf{T}_f$ in $D_{\hat{\mathbf{z}}} \hat{f} = D_{\mathbf{z}} f \mathbf{T}_f$, where $\mathbf{T}_f$ is a matrix-valued function. Generally, the condition on the structure $\operatorname{supp}(D_{\mathbf{z}} f)$ encourages sparsity in the Jacobian of the generating function $f$. As verified empirically in previous work \citep{zheng2023generalizing}, this condition is likely to hold in our setting where the number of observed variables $\mathbf{x}$ exceeds the number of class-independent concepts $\mathbf{z}_B$.

\addtocounter{proposition}{-2}
\begin{restatable}[Learning class-independent concepts; \textbf{Formal}]{proposition}{CompleteFormal}
\label{prop:complete_formal}
Consider two \textbf{observationally equivalent} (Defn. \ref{def:obs}) models \((g, p_{\mathbf{z}}, M)\) and \((\hat{g}, p_{\hat{\mathbf{z}}}, \hat{M})\) generated as in Sec. \ref{sec:prelim}. Under the assumptions in Thm. \ref{thm:changing}, further suppose that, for all $\mathbf{z}_i \in \mathbf{z}_B$, with an $\ell_0$ regularization on $D_{\hat{\mathbf{z}}} \hat{f}$ ($|\hat{\mathcal{F}}| \leq |\mathcal{F}|$), there exists $\mathcal{C}_{i}$ s.t. $\bigcap_{k \in \mathcal{C}_{i}} \operatorname{supp}(D_{\mathbf{z}_i} f)_{i,n_A+1 \cdot }=\{i\}.$ Meanwhile, for each $i \in \{n_A+1, \ldots, n\}$, there exist $ \{\mathbf{z}^{(\ell)}\}_{\ell=1}^{|\mathcal{F}_{i,n_A+1 \cdot }|}$ s.t. $\{ D_{\mathbf{z}} f (\mathbf{z}^{(\ell)})_{i,n_A+1 \cdot }\}_{\ell=1}^{|\mathcal{F}_{i,n_A+1 \cdot }|}$ are linearly independent, and $\left[ {D_{\mathbf{z}} f (\mathbf{z}^{(\ell)})}\mathrm{T}_f \right]_{i,n_A+1 \cdot } \in \mathbb{R}_{\hat{\mathcal{F}}_{i,n_A+1 \cdot }}^{n_B}.$ Then for any $\mathbf{z}_i \in \mathbf{z}$, there exists a permutation \(\pi\) and invertible functions \(h_i: \mathbb{R} \to \mathbb{R}\) such that \(\hat{\mathbf{z}}_i = h_i(\mathbf{z}_{\pi(i)})\) for all \(i\).
\end{restatable}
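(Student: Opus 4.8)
The plan is to build directly on Thm.~\ref{thm:changing}, which already delivers element-wise identifiability of $\mathbf{z}_A$ together with the block-diagonal structure of $D_{\mathbf{z}}h$ (where $\hat{\mathbf{z}}=h(\mathbf{z})$, $h\coloneqq\hat{f}^{-1}\circ f$), in particular $\hat{\mathbf{z}}_B=h_B(\mathbf{z}_B)$ for an invertible $h_B$. Hence the only remaining task is to refine this \emph{block-wise} identifiability of $\mathbf{z}_B$ into an \emph{element-wise} one, i.e.\ to show $h_B$ acts coordinate-wise. The whole argument is a transcription of the Jacobian-support machinery from the proof of Thm.~\ref{thm:pair}, but now applied to the generating map $f$ and its $\mathbf{z}_B$-columns rather than to $g$ and its class-columns.

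First I would differentiate $f=\hat{f}\circ h$ with respect to $\mathbf{z}$ to obtain $D_{\hat{\mathbf{z}}}\hat{f}=D_{\mathbf{z}}f\,\mathbf{T}_f$ with $\mathbf{T}_f=(D_{\mathbf{z}}h)^{-1}$. Since Thm.~\ref{thm:changing} has already forced $D_{\mathbf{z}}h$, and therefore $\mathbf{T}_f$, to be block diagonal with an invertible lower-right block $\mathbf{T}_{f,B}$ indexed by $\mathbf{z}_B$, restricting the factorization to columns $n_A+1,\ldots,n$ gives $[D_{\hat{\mathbf{z}}}\hat{f}]_{i,n_A+1\cdot}=[D_{\mathbf{z}}f]_{i,n_A+1\cdot}\,\mathbf{T}_{f,B}$ for every observed row $i$. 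This isolates the $\mathbf{z}_B$ subproblem and decouples it from $\mathbf{z}_A$.

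Next I would run the support-propagation step exactly as in Thm.~\ref{thm:pair}. Using the linear-independence assumption on $\{[D_{\mathbf{z}}f(\mathbf{z}^{(\ell)})]_{i,n_A+1\cdot}\}$, each one-hot vector $e_{j_0}$ (for $j_0\in\mathcal{F}_{i,n_A+1\cdot}$) can be written as a linear combination of these rows; applying the assumption $[D_{\mathbf{z}}f(\mathbf{z}^{(\ell)})\,\mathrm{T}_f]_{i,n_A+1\cdot}\in\mathbb{R}_{\hat{\mathcal{F}}_{i,n_A+1\cdot}}^{n_B}$ then transfers the support of $D_{\mathbf{z}}f$ onto $\mathrm{T}_f$, yielding the inclusion $\{i\}\times(\mathcal{T}_f)_{j,n_A+1\cdot}\subset\hat{\mathcal{F}}$ for all $(i,j)\in\mathcal{F}$ in the $\mathbf{z}_B$ block. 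A determinant expansion of the invertible $\mathbf{T}_{f,B}$ selects one nonzero per row, and combining this with the $\ell_0$ constraint $|\hat{\mathcal{F}}|\le|\mathcal{F}|$ forces $\hat{\mathcal{F}}=\tilde{\pi}(\mathcal{F})$ to be an exact column permutation of $\mathcal{F}$ on the $\mathbf{z}_B$ columns, fixing a candidate permutation $\pi$.

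The crux---and the step I expect to be the main obstacle---is upgrading this permutation \emph{of supports} into genuine element-wise identifiability, i.e.\ showing $\operatorname{supp}(\mathbf{T}_{f,B})$ is exactly the graph of $\pi$ (one nonzero per row and column). This is where the structural intersection condition $\bigcap_{k\in\mathcal{C}_i}\operatorname{supp}(D_{\mathbf{z}}f)_{k,n_A+1\cdot}=\{i\}$ plays the role that Assump.~\ref{assum:structure} played in Thm.~\ref{thm:changing}, and getting the index bookkeeping right (the factorization now has $\mathbf{T}_f$ on the \emph{right}, so the permutation acts on concept columns rather than rows) is the delicate part. I would argue by contradiction: suppose $(j,\ell)\in\mathcal{T}_f$ with $j,\ell$ in the $\mathbf{z}_B$ range and $\ell\neq\pi(j)$. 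Since $j\in\operatorname{supp}(D_{\mathbf{z}}f)_{k,n_A+1\cdot}$ for every $k\in\mathcal{C}_j$, the support-transfer inclusion places $(k,\ell)\in\hat{\mathcal{F}}$ for all such $k$; rewriting through $\hat{\mathcal{F}}=\tilde{\pi}(\mathcal{F})$ then forces $\pi^{-1}(\ell)\in\bigcap_{k\in\mathcal{C}_j}\operatorname{supp}(D_{\mathbf{z}}f)_{k,n_A+1\cdot}=\{j\}$, i.e.\ $\ell=\pi(j)$, a contradiction. Hence $\mathbf{T}_{f,B}$ is a scaled permutation, so $h_B$ acts coordinate-wise; composing with the element-wise identifiability of $\mathbf{z}_A$ from Thm.~\ref{thm:changing} (and merging the two permutations into a single global $\pi$, which is legitimate because $h$ is block diagonal) yields $\hat{\mathbf{z}}_i=h_i(\mathbf{z}_{\pi(i)})$ for all $i$, as claimed.
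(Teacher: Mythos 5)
Your proposal reproduces the paper's proof essentially step for step: the same reduction via Thm.~\ref{thm:changing} to a block-diagonal $D_{\mathbf{z}}h$, the same factorization $D_{\hat{\mathbf{z}}}\hat{f} = D_{\mathbf{z}}f\,\mathbf{T}_f$ restricted to the $\mathbf{z}_B$ columns, the same one-hot/linear-independence support-transfer step, the same determinant-plus-$\ell_0$ argument forcing $\hat{\mathcal{F}}$ to be a column permutation of $\mathcal{F}$, and the same use of the intersection condition $\bigcap_{k\in\mathcal{C}_i}\mathcal{F}_{k,n_A+1\cdot}=\{i\}$ to conclude that $\mathbf{T}_{f,B}$ is a scaled permutation. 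Your final contradiction is organized marginally more directly (assuming a single off-permutation entry of $\mathcal{T}_f$ rather than two rows with overlapping supports, as the paper does), but it uses exactly the same ingredients, so this is the paper's argument.
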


\begin{proof}

Because of the observational equivalence, we have
\begin{equation}
    p_{\hat{\mathbf{x}}|\mathbf{c}}= p_{\mathbf{x}|\mathbf{c}} \Rightarrow p_{\hat{f}(\hat{\mathbf{z}})|\mathbf{c}}= p_{f(\mathbf{z})|\mathbf{c}}.
\end{equation}
Based on the change-of-variable formula, and the invertibility of the generating function $f$, there exists an invertible function $h \coloneqq \hat{f}^{-1} \circ f$ such that $\hat{\mathbf{z}} = h(\mathbf{z})$. According to the proof in Theorem \ref{thm:changing}, the Jacobian of $h$ w.r.t. $\mathbf{z}$ is as follows:
\begin{equation} \label{eq:jacobian_zero_3}
    \scalebox{1.25}{$
        D_{\mathbf{z}}h =
        \begin{bmatrix}
        \frac{\partial \hat{\mathbf{z}}_{A} }{\partial \mathbf{z}_{A} }
        & \rvline & \mathbf{0} \\
        \hline
        \mathbf{0} & \rvline &
        \frac{\partial \hat{\mathbf{z}}_{B} }{\partial \mathbf{z}_{B} }
        \end{bmatrix}.
        $}
\end{equation}
At the same time, by using the chain rule on $h = \hat{f}^{-1} \circ f$, we have
\begin{equation}
    D_{\hat{\mathbf{z}}} \hat{f} = D_{\mathbf{z}} f D_{\hat{\mathbf{z}}} h^{-1},
\end{equation}
which is equivalent to
\begin{equation}
    {D_{\hat{\mathbf{z}}} \hat{f}}_{\cdot,n_A+1 \cdot } = D_{\mathbf{z}} f {D_{\hat{\mathbf{z}}} h^{-1}}_{\cdot,n_A+1 \cdot }.
\end{equation}
Based on Eq. \ref{eq:jacobian_zero_3}, this further indicates that
\begin{equation}
    {D_{\hat{\mathbf{z}}} \hat{f}}_{\cdot,n_A+1 \cdot } = {D_{\mathbf{z}} f}_{\cdot,n_A+1 \cdot } {D_{\hat{\mathbf{z}}} h^{-1}}_{n_A+1 \cdot ,n_A+1 \cdot }.
\end{equation}
Then, according to the assumption, there must be a set of vectors
\begin{equation}
    \begin{bmatrix} 
        D_{\mathbf{z}} f (\mathbf{z}^{(1)})_{i,n_A+1 \cdot } \\ 
        D_{\mathbf{z}} f (\mathbf{z}^{(2)})_{i,n_A+1 \cdot } \\ 
        \vdots \\ 
        D_{\mathbf{z}} f (\mathbf{z}^{(|\mathcal{F}_{i,n_A+1 \cdot }|)})_{i,n_A+1 \cdot } 
    \end{bmatrix}
\end{equation}
that are linearly independent. Then we can construct an one-hot vector $e_{j_0} \in \mathbb{R}_{\mathcal{F}_{i,n_A+1 \cdot }}^{n_B}$ for any $j_0 \in \mathcal{F}_{i,n_A+1 \cdot }$ as a linear combination of vectors $\{ D_{\mathbf{z}} f (\mathbf{z}^{(\ell)})_{i,n_A+1 \cdot }\}_{\ell=1}^{|\mathcal{F}_{i,n_A+1 \cdot }|}$, i.e., 
\begin{equation}
    e_{j_0} = \sum_{\ell \in \mathcal{F}_{i,n_A+1 \cdot }} \beta_{\ell}  D_{\mathbf{z}} f (\mathbf{z}^{(\ell)})_{i,n_A+1 \cdot },
\end{equation}
where $\beta_{\ell}$ denotes some coefficient. Then we have
\begin{equation} \label{eq:basis_3}
    {\mathrm{T}_f}_{j_0,n_A+1 \cdot } = e_{j_0} {\mathrm{T}_f}_{\cdot,n_A+1 \cdot } = \sum_{\ell \in \mathcal{D}_{\cdot,j}} \beta_{\ell} D_{\mathbf{z}} f (\mathbf{z}^{(\ell)})_{i,n_A+1 \cdot } {\mathrm{T}_f}_{\cdot,n_A+1 \cdot } \in \mathbb{R}_{\hat{\mathcal{F}}_{i,n_A+1 \cdot }}^{n_B}.
\end{equation}
This further implies that, for any $j \in \mathcal{F}_{i,n_A+1 \cdot }$, we always have ${\mathrm{T}_f}_{j,\cdot} \in \mathbb{R}_{\hat{\mathcal{F}}_{i,n_A+1 \cdot }}^{n_B}$. Thus, we have the connection between support as follows:
\begin{equation} \label{eq:connection_3}
    \forall (i,j) \in \mathcal{F}_{\cdot,n_A+1 \cdot }, \{i\} \times {\mathcal{T}_f}_{j,\cdot} \subset \hat{\mathcal{F}}_{\cdot,n_A+1 \cdot }.
\end{equation}
Then, because of the invertibility of $\mathbf{T}_f$, its determinant must not equal to zero, i.e.,
\begin{equation}
    \sum_{\sigma \in \mathcal{S}_{n}} \left(\operatorname{sgn}(\sigma) \prod_{i=1}^{n_B} \mathbf{T}_f(\mathbf{z}^{(\ell)})_{i, \sigma(i)}\right) \neq 0,
\end{equation}
where $\mathcal{S}$ is the set of $n$-permutations. Therefore, there must be at least one term in the summation that does not equal to zero, i.e.,
\begin{equation}
    \exists \sigma \in \mathcal{S}_{n},\ \forall i \in \{1, \ldots, n_B\},\ \operatorname{sgn}(\sigma) \prod_{i=1}^{n_B} \mathbf{T}_f(\mathbf{z}^{(\ell)})_{i, \sigma(i)} \neq 0.
\end{equation}
Because $\operatorname{sgn}(\sigma) \neq 0$, every term in the production must not equal to zero, i.e.,
\begin{equation}
    \exists \sigma \in \mathcal{S}_{n},\ \forall i \in \{1, \ldots, n_B\}, \mathbf{T}_f(\mathbf{z}^{(\ell)})_{i, \sigma(i)} \neq 0.
\end{equation}
This follows that
\begin{equation} \label{eq:connection_det_3}
    \forall j \in \{1, \ldots, n_B\},\ \sigma(j) \in {\mathcal{T}_f}_{j,n_A+1 \cdot }.
\end{equation}
Based on Eq. \eqref{eq:connection_3}, Eq. \eqref{eq:connection_det_3} further implies that, for any $(i,j) \in \mathcal{F}_{\cdot,n_A+1 \cdot }$, we have $(i, \sigma(j)) \in \hat{\mathcal{F}}_{\cdot,n_A+1 \cdot }$. Let us denote $\sigma(\mathcal{F}) = \{(i, \sigma(j)) \mid(i, j) \in \mathcal{F}\}$, the above connection implies $\sigma(\mathcal{F}) \subset \hat{\mathcal{F}}$. Together with the sparsity regularization on the estimated Jacobian, we have
\begin{equation}
    |\hat{\mathcal{F}}| \leq |\mathcal{F}|
\end{equation}
Because of the definition of $\sigma(\mathcal{F})$, there must be
\begin{equation}
    |\mathcal{F}| = |\sigma(\mathcal{F})|,
\end{equation}
which follows that
\begin{equation}
    |\sigma(\mathcal{F})| \geq |\hat{\mathcal{F}}|.
\end{equation}
Together with the relation that $\sigma(\mathcal{F}) \subset \hat{\mathcal{F}}$,there must be
\begin{equation} \label{eq:connection_hat_perm_3}
    \hat{\mathcal{F}} = \sigma(\mathcal{F}).
\end{equation}

Suppose $\mathbf{T}_{\cdot,n_A+1 \cdot }$ is not a composition of a permutation matrix and a diagonal matrix, then
\begin{equation}
    \exists j_1 \neq j_2,\  \mathcal{T}_{j_1, n_A+1 \cdot } \cap \mathcal{T}_{j_2, n_A+1 \cdot } \neq \emptyset.
\end{equation}
Additionally, consider $j_3 \in \{1, \ldots, n_B\}$ for which
\begin{equation}
    \sigma(j_3) \in \mathcal{T}_{j_1, n_A+1 \cdot } \cap \mathcal{T}_{j_2, n_A+1 \cdot }.
\end{equation}
Since $j_1 \neq j_2$, we can assume $j_3 \neq j_1$ without loss of generality. Based on assumption, there exists $\mathcal{C}_{j_1} \ni j_1$ such that $\bigcap_{i \in \mathcal{C}_{j_1}} \mathcal{F}_{i,n_A+1 \cdot }=\{j_1\}$. Because 
\begin{equation}
    j_3 \not\in \{j_1\} =  \bigcap_{i \in \mathcal{C}_{j_1}} \mathcal{F}_{i, n_A+1 \cdot },
\end{equation}
there must exists $i_3 \in \mathcal{C}_{j_1}$ such that
\begin{equation} \label{eq:uc_29_3}
    j_3 \not \in \mathcal{F}_{i_3, n_A+1 \cdot }.
\end{equation}
Since $j_1 \in \mathcal{F}_{i_3, n_A+1 \cdot }$, it follows that $(i_3, j_1) \in \mathcal{F}_{\cdot,n_A+1 \cdot }$. Therefore, according to Eq. \eqref{eq:connection_3}, we have
\begin{equation} \label{eq:uc_30_3} 
    \{i_3\} \times \mathcal{T}_{j_1, n_A+1 \cdot } \subset \hat{\mathcal{F}}_{\cdot,n_A+1 \cdot }.
\end{equation}
Notice that $\sigma(j_3) \in \mathcal{T}_{j_1, n_A+1 \cdot } \cap \mathcal{T}_{j_2, n_A+1 \cdot }$ implies 
\begin{equation} \label{eq:uc_31_3}
    (i_3, \sigma(j_3)) \in \{i_3\} \times \mathcal{T}_{j_1, n_A+1 \cdot }.
\end{equation}
Then by Eqs. \eqref{eq:uc_30_3} and \eqref{eq:uc_31_3}, we have
\begin{equation}
    (i_3, \sigma(j_3)) \in \hat{\mathcal{F}}_{\cdot,n_A+1 \cdot }.
\end{equation}
This further implies $(i_3, j_3) \in \mathcal{F}_{\cdot,n_A+1 \cdot }$ by Eq. \eqref{eq:connection_hat_perm_3}, which contradicts Eq. \eqref{eq:uc_29_3}. Therefore, we have proven by contradiction that $\mathbf{T}_{\cdot,n_A+1 \cdot }$ is a composition of a permutation matrix and a diagonal matrix, which means that the invariant part $\mathbf{z}_B$ is identifiable up to an element-wise invertible transformation and a permutation. Together with the element-wise identifiability for concepts in the changing part $\mathbf{z}_A$ given by Theorem \ref{thm:changing}, we have proved that all latent concepts $\mathbf{z} = (\mathbf{z}_A, \mathbf{z}_B)$ are identifiable up to an element-wise invertible transformation and a permutation.
\end{proof}

\subsection{Proof of Proposition \ref{prop:structure}}\label{sec:proof_structure}

\Structure*

\begin{proof}
Because of the observational equivalence, we have
\begin{equation}
    p_{\hat{\mathbf{x}}|\mathbf{c}}= p_{\mathbf{x}|\mathbf{c}} \Rightarrow p_{\hat{f}(\hat{\mathbf{z}})|\mathbf{c}}= p_{f(\mathbf{z})|\mathbf{c}}.
\end{equation}
Based on the change-of-variable formula, and the invertibility of the generating function $f$, there exists an invertible function $h \coloneqq \hat{f}^{-1} \circ f$ such that $\hat{\mathbf{z}} = h(\mathbf{z})$.

Using the chain rule, the derivative of both sides with respect to $\mathbf{c}$ can be expressed as:
\begin{equation}
    {D_{\mathbf{c}} \hat{g}} = {D_{\mathbf{z}} h} {D_{\mathbf{c}} g}.
\end{equation}
The Jacobian of $h$ can be written as:
\begin{equation}
    \scalebox{1.25}{$
        D_{\mathbf{z}}h =
        \begin{bmatrix}
        \frac{\partial \hat{\mathbf{z}}_{A} }{\partial \mathbf{z}_{A} }
        & \rvline & \frac{\partial \hat{\mathbf{z}}_{A} }{\partial \mathbf{z}_{B}} \\
        \hline
        \frac{\partial \hat{\mathbf{z}}_{B} }{\partial \mathbf{z}_{A} } & \rvline &
        \frac{\partial \hat{\mathbf{z}}_{B} }{\partial \mathbf{z}_{B} }
        \end{bmatrix}.
        $}
\end{equation}
According to steps 1, 2, and 3 in the proof of Theorem 4.2 in \citet{kong2022partial}, the bottom-left block of $D_{\mathbf{z}} h$, i.e., ${D_{\mathbf{z}} h}_{n_A+1 \cdot ,\cdot n_A}$, consists of only zero entries. As a result, the Jacobian is equivalent to:
\begin{equation}
    \scalebox{1.25}{$
        D_{\mathbf{z}}h =
        \begin{bmatrix}
        \frac{\partial \hat{\mathbf{z}}_{A} }{\partial \mathbf{z}_{A} }
        & \rvline & \frac{\partial \hat{\mathbf{z}}_{A} }{\partial \mathbf{z}_{B}} \\
        \hline
        \mathbf{0} & \rvline &
        \frac{\partial \hat{\mathbf{z}}_{B} }{\partial \mathbf{z}_{B} }
        \end{bmatrix}.
        $}
\end{equation}
Since $h$ is invertible, the determinant of $D_{\mathbf{z}}h$ is non-zero. Together with the structure of the Jacobian matrix, we have
\begin{equation}
    \operatorname{det}(D_{\mathbf{z}}h) = \operatorname{det}(\frac{\partial \hat{\mathbf{z}}_{A} }{\partial \mathbf{z}_{A} }) \operatorname{det}(\frac{\partial \hat{\mathbf{z}}_{B} }{\partial \mathbf{z}_{B} }),
\end{equation}
which further implies
\begin{align}
    \operatorname{det}(\frac{\partial \hat{\mathbf{z}}_{A} }{\partial \mathbf{z}_{A} }) &\neq 0, \\
    \operatorname{det}(\frac{\partial \hat{\mathbf{z}}_{B} }{\partial \mathbf{z}_{B} }) &\neq 0.
\end{align}
Since $\operatorname{det}(\frac{\partial \hat{\mathbf{z}}_{B} }{\partial \mathbf{z}_{B} }) \neq 0$ and $\frac{\partial \hat{\mathbf{z}}_{B} }{\partial \mathbf{z}_{A} } = 0$, there exists an invertible function $h_B: \mathbf{z}_B \rightarrow \hat{\mathbf{z}}_B$ s.t.,
\begin{equation}
    \hat{\mathbf{z}}_{B} = h_B(\mathbf{z}_B).
\end{equation}
Since $\hat{\mathbf{z}}_A$ is independent of $\hat{\mathbf{z}}_B$ and $\hat{\mathbf{z}}_{B} = h_B(\mathbf{z}_B)$, we further have 
\begin{equation}
    \frac{\partial \hat{\mathbf{z}}_A}{\partial \mathbf{z}_B} = 0.
\end{equation}
Therefore, the Jacobian of $h$ is
\begin{equation} 
    \scalebox{1.25}{$
        D_{\mathbf{z}}h =
        \begin{bmatrix}
        \frac{\partial \hat{\mathbf{z}}_{A} }{\partial \mathbf{z}_{A} }
        & \rvline & \mathbf{0} \\
        \hline
        \mathbf{0} & \rvline &
        \frac{\partial \hat{\mathbf{z}}_{B} }{\partial \mathbf{z}_{B} }
        \end{bmatrix}.
        $}
\end{equation}

Note that we have
\begin{equation}
    D_{\mathbf{c}} \hat{g} = D_{\mathbf{z}} h D_{\mathbf{c}} g,
\end{equation}
which is equivalent to
\begin{equation}
    {D_{\mathbf{c}} \hat{g}} = {(D_{\mathbf{z}} h D_{\mathbf{c}} g)}_{\cdot n_A,\cdot} = {D_{\mathbf{z}} h}_{\cdot n_A,\cdot} D_{\mathbf{c}} g.
\end{equation}
Because $\frac{\partial \hat{\mathbf{z}}_i}{\partial \mathbf{z}_k} = 0$ for $ i \in \{1,\ldots,n_A\}$ and $ k \in \{n_A+1,\ldots,n\}$, the upper-right block of $D_{\mathbf{z}} h$, i.e., ${D_{\mathbf{z}} h}_{\cdot n_A,n_A+1 \cdot }$, consists of only zero entries. It further indicates that
\begin{equation}
    {D_{\mathbf{c}} \hat{g}} = {D_{\mathbf{z}} h}_{\cdot n_A,\cdot n_A} {D_{\mathbf{c}} g}.
\end{equation}
According to the assumption, we have
\begin{equation}
    \begin{bmatrix} 
        D_\mathbf{c} g ((c, \theta)^{(1)})_{\cdot,j} \\ 
        D_\mathbf{c} g ((c, \theta)^{(2)})_{\cdot,j} \\ 
        \vdots \\ 
        D_\mathbf{c} g ((c, \theta)^{(|\mathcal{D}_{\cdot,j}|)})_{\cdot,j} 
    \end{bmatrix}
\end{equation}
as a set of linearly independent vectors.

% \begin{equation}
%     \operatorname{span}\{D_\mathbf{c} g ((c, \theta)^{(\ell)})_{\cdot,j}\}_{\ell=1}^{|\mathcal{D}_{\cdot,j}|} = \mathbb{R}_{\mathcal{D}_{\cdot,j}}^{n_A}.
% \end{equation}
Then we can construct an one-hot vector $e_{i_0} \in \mathbb{R}_{\mathcal{D}_{\cdot,j}}^{n_A}$ for any $i_0 \in \mathcal{D}_{\cdot,j}$ as a linear combination of vectors $\{D_\mathbf{c} g ((c, \theta)^{(\ell)})_{\cdot,j}\}_{\ell=1}^{|\mathcal{D}_{\cdot,j}|}$, i.e., $e_{i_0} = \sum_{\ell \in \mathcal{D}_{\cdot,j}} \beta_{\ell} D_\mathbf{c} g ((c, \theta)^{(\ell)})_{\cdot,j}$, where $\beta_{\ell}$ denotes some coefficient. Then we have
\begin{equation} \label{eq:basis_1}
    \mathrm{T}_{\cdot,i_0} = \mathrm{T} e_{i_0} = \sum_{\ell \in \mathcal{D}_{\cdot,j}} \beta_{\ell} \mathrm{T} D_\mathbf{c} g ((c, \theta)^{(\ell)})_{\cdot,j}.
\end{equation}
Note that we define $\mathcal{D}$ as the support of $D_\mathbf{c} g$. Additionally, we define $\mathrm{T}$ as a matrix that share the same support as $\mathbf{T}$ in the equation ${D_{\mathbf{c}} \hat{g}} = \mathbf{T} {D_\mathbf{c} g}$, where $\mathbf{T}$ is a matrix-valued function and $\mathrm{T} \in \mathcal{T}$.

According to the assumption, we have
\begin{equation}
    \mathrm{T} D_\mathbf{c} g ((c, \theta)^{(\ell)})_{\cdot,j} \in \mathbb{R}_{\hat{\mathcal{D}}_{\cdot,j}}^{n_A}.
\end{equation}
Therefore, Eq. \eqref{eq:basis_1} implies $\mathrm{T}_{\cdot,i_0} \in \mathbb{R}_{\hat{\mathcal{D}}_{\cdot,j}}^{n_A}$, which is equivalent to
\begin{equation} 
    \forall i_0 \in \mathcal{D}_{\cdot,j}, \mathrm{T}_{\cdot,i_0}  \in \mathbb{R}_{\hat{\mathcal{D}}_{\cdot,j}}^{n_A}.
\end{equation}
This further indicates
\begin{equation} \label{eq:connection_1}
    \forall (i,j) \in \mathcal{D}, \mathcal{T}_{\cdot,i} \times \{j\}  \subset \hat{\mathcal{D}}.
\end{equation}
Since $\mathbf{T}$ is invertible, we have
\begin{equation}
    \operatorname{det}(\mathbf{T}) = \sum_{\sigma \in \mathcal{S}_{n_A}} \left(\operatorname{sgn}(\sigma) \prod_{j=1}^{n_A} \mathbf{T}_{\sigma(j), j}\right) \neq 0,
\end{equation}
where $\mathcal{S}_{n_A}$ is a set of $n_A$-permutations. Then there must exist at least one non-zero term in the summation, which indicates that
\begin{equation}
    \exists \sigma \in \mathcal{S}_{n_A},\ \forall j \in \{1, \ldots, n_A\},\ \operatorname{sgn}(\sigma) \prod_{j=1}^{n_A} \mathbf{T}_{\sigma(j), j} \neq 0.
\end{equation}
Clearly, there cannot be any term in the product that equals zero, so we have
\begin{equation}
    \exists \sigma \in \mathcal{S}_{n_A},\ \forall j \in \{1, \ldots, n_A\}, \mathbf{T}_{\sigma(j), j} \neq 0.
\end{equation}
Thus, it follows that
\begin{equation} 
    \forall i \in \{1, \ldots, n_A\}, \sigma(i) \in \mathcal{T}_{\cdot, i}.
\end{equation}
Then it yields
\begin{equation}
    \forall (i,j) \in \mathcal{D}, (\sigma(i), j) \in \mathcal{T}_{\cdot,i} \times \{j\}.
\end{equation}
Because of Eq. \eqref{eq:connection_1}, we have
\begin{equation} \label{eq:relation_larger_1}
    \forall (i,j) \in \mathcal{D}, (\sigma(i), j) \in \hat{\mathcal{D}}.
\end{equation}
Let us denote $\pi(\mathcal{D})$ as a row permutation of $\mathcal{D}$, where $\forall (i,j) \in \mathcal{D}$, there must be 
\begin{equation}
    (\sigma(i),j) \in \pi(\mathcal{D}).
\end{equation}
And it also implies
\begin{equation}
    |\pi(\mathcal{D})| = |\mathcal{D}|.
\end{equation}
Furthermore, Eq. \ref{eq:relation_larger_1} indicates that 
\begin{equation} \label{eq:relation_subset_1}
    \pi(\mathcal{D}) \subset \hat{\mathcal{D}},
\end{equation}
We have the following relation based on the sparsity regularization:
\begin{equation}
    |\hat{\mathcal{D}}| \leq |\mathcal{D}|.
\end{equation}
Therefore, we have
\begin{equation}
    |\pi(\mathcal{D})| = |\mathcal{D}| \geq |\hat{\mathcal{D}}|.
\end{equation}
Together with Eq. \eqref{eq:relation_subset_1}, it follows that
\begin{equation}
    \hat{\mathcal{D}} = \pi(\mathcal{D}).
\end{equation}
Thus, we have proved the identifiability of $\mathcal{D}$ up to a permutation on the row indices. Since $M$ is a binary matrix with the support of $\mathcal{D}$, we have proved the connective structure between classes and concepts up to a row permutation.
\end{proof}

\newpage

\section{Experiments} \label{sec:ex_ap}

In this section, we provide more details regarding the experimental setup as well as additional experimental results to further support our theoretical findings.

\subsection{Supplementary Experimental Setup} \label{sec:ex_ap_setup}

We generate the data following the process outlined in our theorems. For our model that identifies only class-dependent concepts (Fig. \ref{fig:violin_change_base}), the connective structure between classes and concepts is generated according to the \textit{Structural Diversity} condition. For class-dependent concepts, we sample from two multivariate Gaussian distributions with zero means and variances drawn from a uniform distribution on $[0.5, 3]$, consistent with parameters used in previous work \citep{khemakhem2020ice, sorrenson2020disentanglement}. For our model that identifies all hidden concepts, including class-independent ones (Fig. \ref{fig:violin_mixed_base}), the connective structure between class-independent concepts and observed variables follows the structural condition in Prop. \ref{prop:complete}. These class-independent concepts are sampled from a single multivariate Gaussian distribution with zero means and variances drawn from a uniform distribution on $[0.5, 3]$. In the base model, we remove the structural constraints on both types of connective structures to verify the necessity of the proposed conditions. All other settings remain the same as ours.

In our model evaluation, we employ the Mean Correlation Coefficient (MCC) to measure the alignment between the ground-truth and the recovered latent concepts, which is standard in the literature \citep{hyvarinen2016unsupervised}. To calculate MCC, we first compute the pairwise correlation coefficients between the true concepts and the recovered concepts after applying a component-wise transformation via regression. Following this, we solve an assignment to match each recovered concept to the corresponding ground-truth concept with the highest correlation.

We use Generative Flow \citep{kingma2018glow} as the nonlinear generating function. For synthetic settings, the sample size is set as $10,000$. Experiments are conducted using the official implementation of GIN\footnote{https://github.com/VLL-HD/GIN} \citep{sorrenson2020disentanglement} with an additional $\ell_1$ regularization on the Jacobians and FrEIA\footnote{https://github.com/vislearn/FrEIA} \citep{freia} for the flow-based generative function. The regularization parameters $\lambda$ is set according to a search in $\lambda \in \{0.01, 0.1, 1\}$, and we select $\lambda = 0.1$ according to the average MCCs of experiments conducted on synthetic datasets. Moreover, all experiments are conducted on 12 CPU cores with 16 GB RAM.

\subsection{Supplementary Experimental Results} \label{sec:ex_ap_results}

\begin{figure}
    \centering    \includegraphics[width=0.8\linewidth]{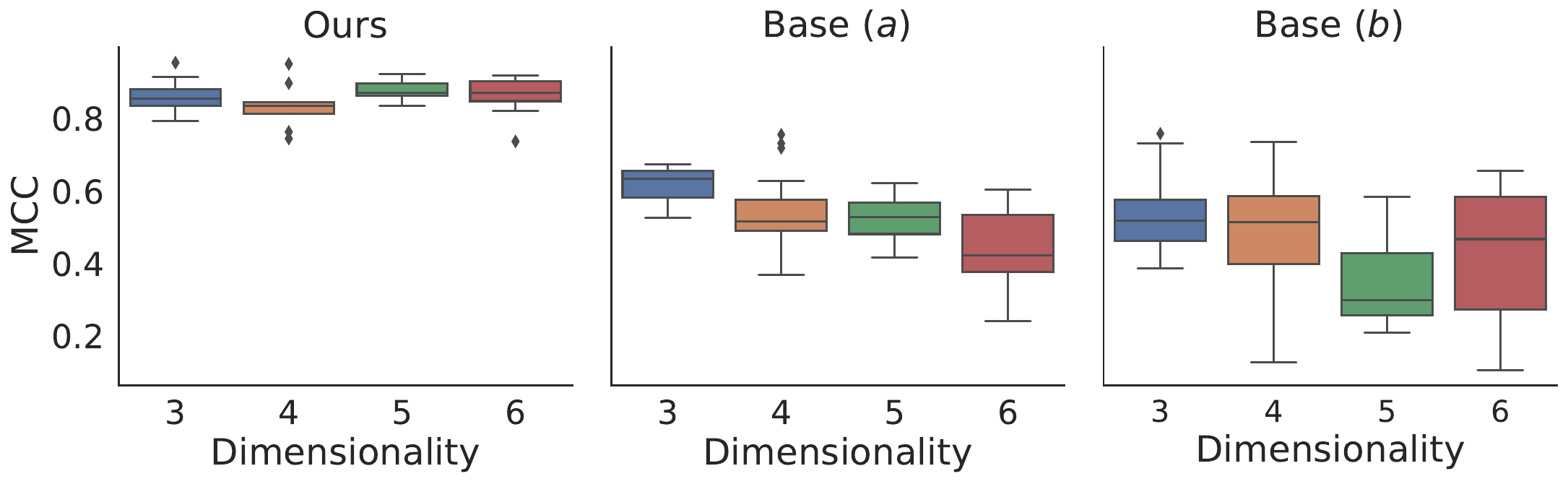}
    \caption{Identification of concepts w.r.t. different numbers of concepts and different settings.}
    \label{fig:other_conditions}
\end{figure}

\textbf{Partial violation of previous conditions.} \ \ 
We also conduct experiments to evaluate the identification under partial violations of previously established assumptions in the literature of latent variable models. Specifically, we generated datasets with the following conditions:

\begin{enumerate}
    \item \textit{Base $(a)$}: The structural sparsity assumption on the mixing structure between latent concepts and observed variables, as outlined in \citep{zheng2022identifiability}, is partially violated for a subset of concepts, with the size randomly selected from all integers in the range $1$ to $n/2$.

    \item \textit{Base $(b)$}: The $2n+1$ domain requirement in \citep{khemakhem2020ice,kong2022partial} is partially violated. Instead, latent concepts are generated from $n+1$ multivariate Gaussian distributions, each with zero mean and variances drawn from a uniform distribution over $[0.5, 3]$.

    \item \textit{Ours}: The data-generating process adheres to our proposed structural diversity condition. While there are no constraints on the mixing structure between latent concepts and observed variables, the structure between classes and concepts satisfies the required structural diversity.
\end{enumerate}

The results, shown in Fig.~\ref{fig:other_conditions}, indicate that when assumptions from previous works are partially violated, the recovery of latent concepts becomes unreliable, shedding light on the necessity of the proposed flexible guarantees based on learning by comparison. All results are from $10$ runs with different random seeds.

\begin{figure}[t]
    \centering
    \subfloat[Angle]{\includegraphics[width=0.33\textwidth]{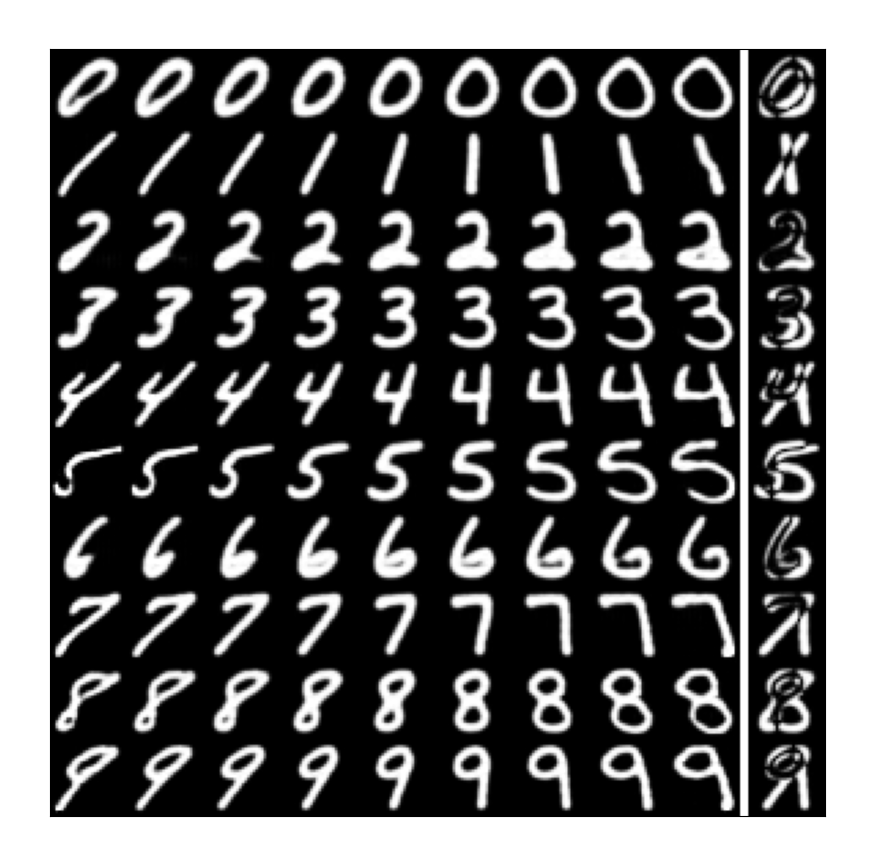}}
    \subfloat[Height]{\includegraphics[width=0.33\textwidth]{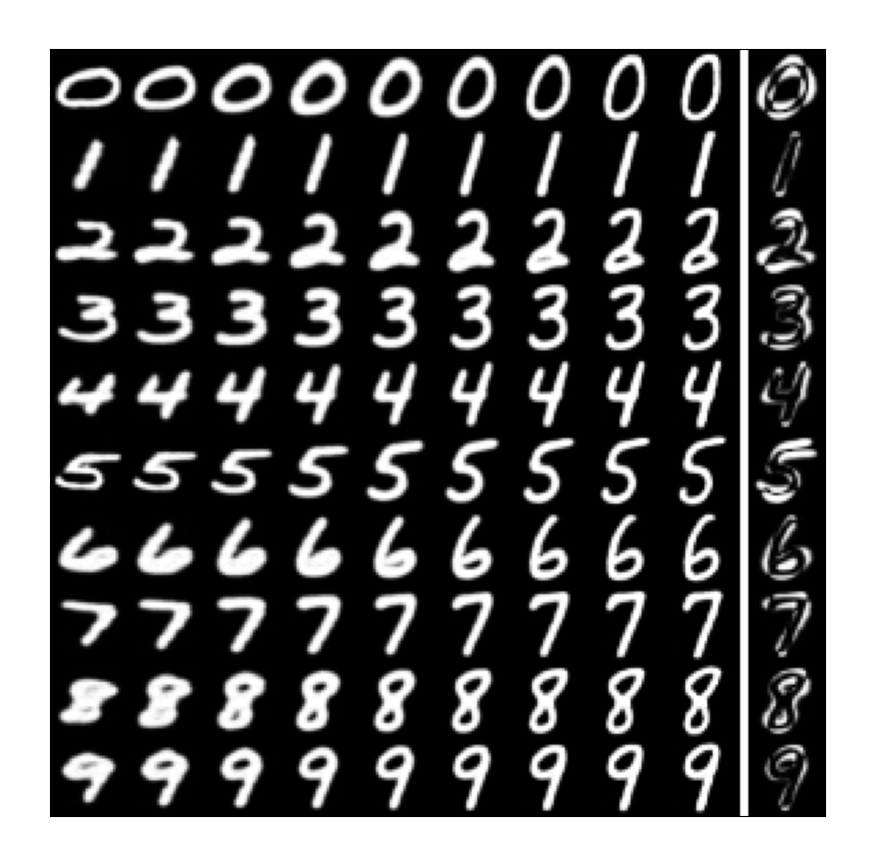}}
    \subfloat[Thickness]{\includegraphics[width=0.33\textwidth]{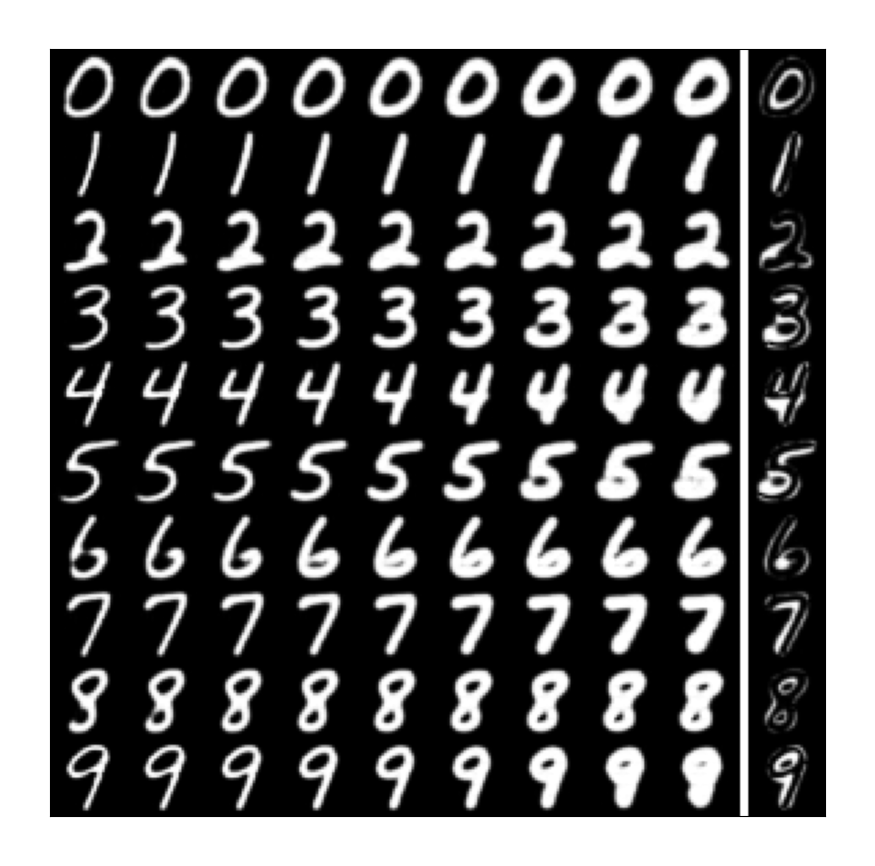}}
    \caption{Results for each digit class in the EMNIST dataset, showing the identified concepts with the top three standard deviations (SDs). Each subfigure represents a concept identified by our model, with values ranging from $-4$ to $+4$ SDs to demonstrate their impact. The rightmost column features a heat map of the absolute pixel differences between $-1$ and $+1$ SDs. These concepts can be interpreted as variations in angle, height, and thickness.}
    \label{fig:emnist_all}
\end{figure}

\begin{figure}[!t]
    \centering
    \begin{tabular}{c}
       \includegraphics[scale=0.07]{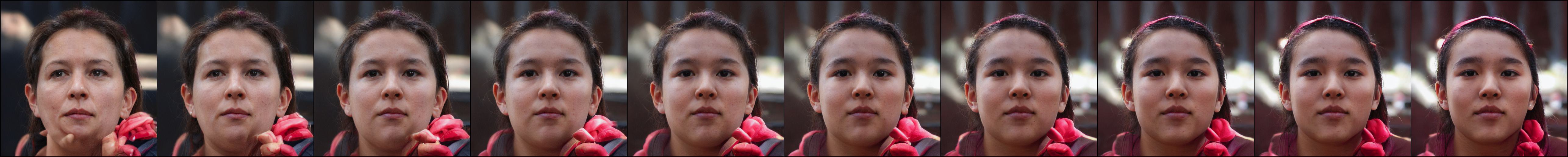}\\
        \includegraphics[scale=0.07]{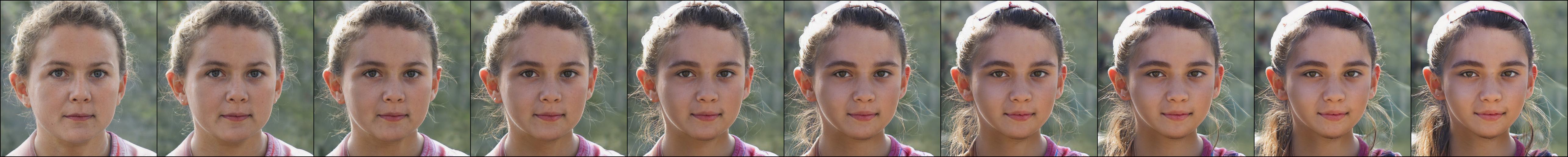} \\
        \includegraphics[scale=0.07]{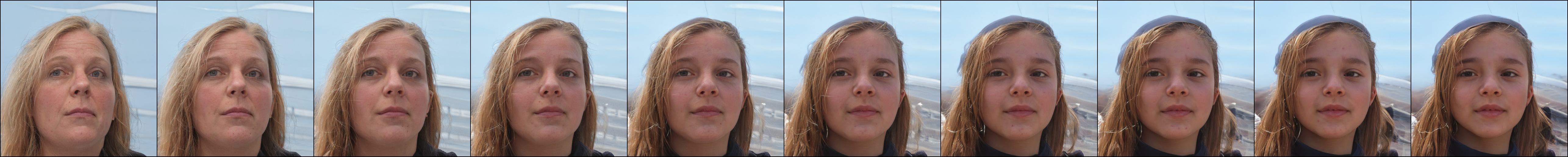} \\
    \end{tabular}
    \caption{Multiple concepts (e.g., skin, eyes, face shape, etc.) corresponding to ``Age'' are entangled.}
    \label{fig:age}
\end{figure}

\begin{figure}[!t]
    \centering
    \begin{tabular}{c}
       \includegraphics[scale=0.07]{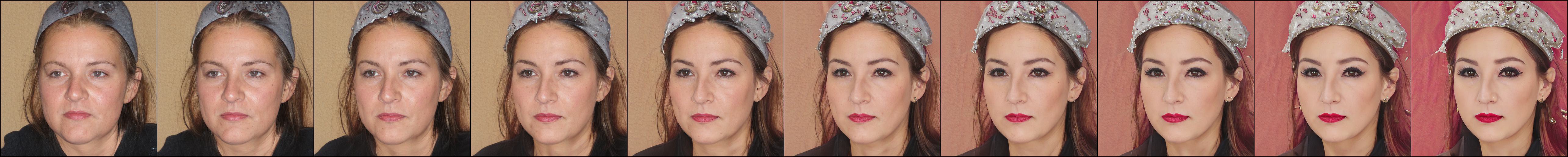}\\
        \includegraphics[scale=0.07]{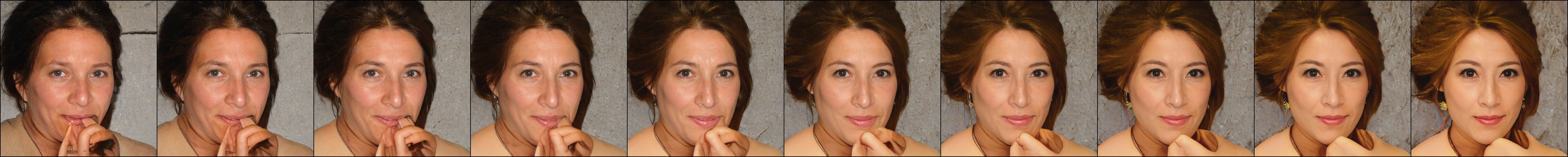} \\
        \includegraphics[scale=0.07]{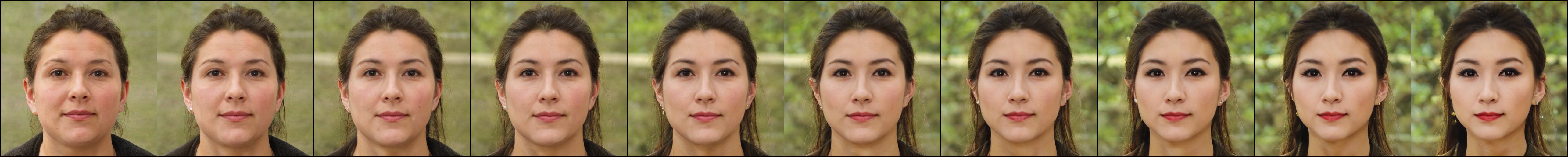} \\
    \end{tabular}
    \caption{Multiple concepts (e.g., lipstick, eye shadow, powder, etc.) corresponding to ``Makeup'' are entangled.}
    \label{fig:makeup}
\end{figure}

\begin{figure}[!t]
    \centering
    \begin{tabular}{c}
       \includegraphics[scale=0.07]{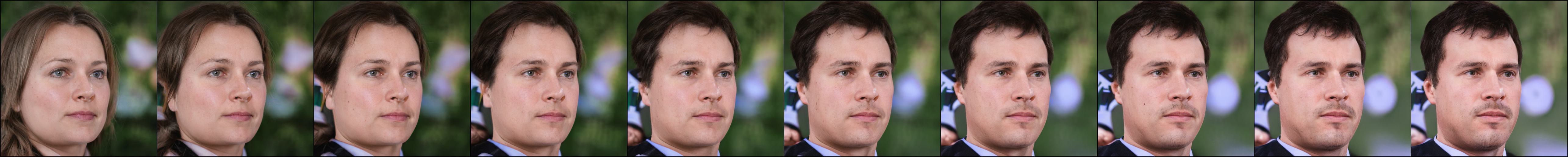}\\
        \includegraphics[scale=0.07]{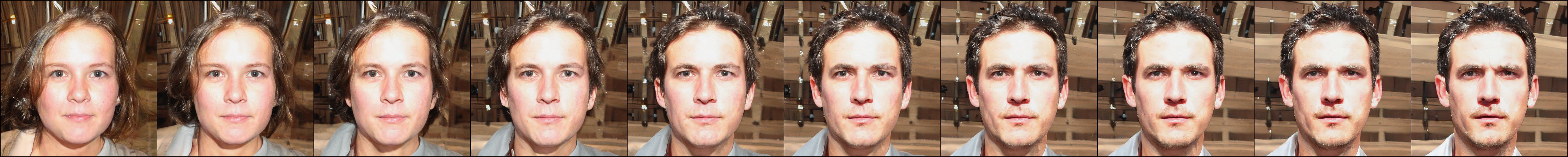}\\
        \includegraphics[scale=0.07]{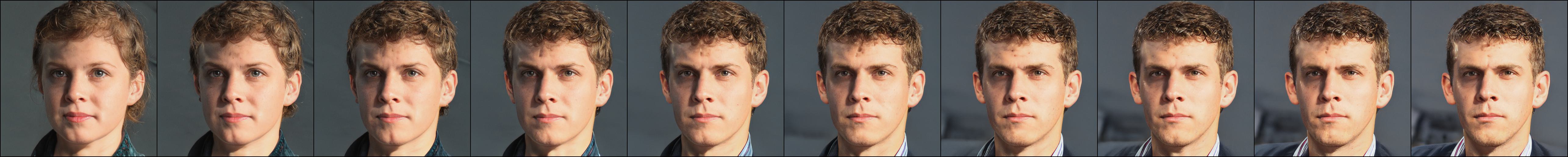} \\
    \end{tabular}
    \caption{Multiple concepts (e.g., hairstyle, head shape, eye, etc.) corresponding to ``Gender'' are entangled.}
    \label{fig:gender}
\end{figure}

\textbf{Some concepts are naturally entangled.} \ \ 
As discussed in Sec. \ref{sec:ex}, we include the results on EMNIST and FFHQ datasets here in the appendix. The EMNIST dataset \citep{cohen2017emnist} is an extension of the classical MNIST, which consists of a much larger set of handwritten digits derived from the NIST Special Database 19 \citep{grother1995nist}. 

The results are shown in Fig. \ref{fig:emnist_all}. Similar to the other datasets, we select the identified components with the top three standard deviations and vary the value of the identified components to visualize their potential semantics. According to the results, it is clear that the hidden concepts can be identified by only learning from diverse classes of observations. This further indicates that the proposed nonparametric identifiability, which is based on the basic cognitive mechanism of learning by comparison, has potential applicability in real-world scenarios.

To explore scenarios where not all concepts can be identified element-wise, we conduct additional real-world experiments on a more complex scenario, i.e., the FFHQ dataset \citep{karras2019style}. The dataset contains $70,000$ human face images, which is more complicated than the datasets in our other experiments. 
% In addition to the estimation method introduced before, we incorporate a sparsity regularization ($\ell_1$ norm as an approximation of $\ell_0$ norm) on the Jacobian of the mixing function $f$, as required by Prop. \ref{prop:complete}. 

From Figs. \ref{fig:age}, \ref{fig:makeup}, and \ref{fig:gender},  it is evident that some concepts remain entangled and cannot be fully recovered. For instance, for the class ``Age'', concepts like ``skin,'' ``eye,'' and ``face shape'' are all entangled together, suggesting that assumptions for component-wise identifiability may not be fully satisfied in this scenario. However, these class-dependent concepts can still be identified as a group, consistent with our theorem based on local or pairwise comparisons. This suggests that, even in complex scenarios where theories fail to guarantee identifiability for all individual concepts due to assumption violations, our alternative identifiability framework based on pairwise comparisons may still provide an alternative theoretical basis for recovering class-dependent concepts collectively, even if they remain entangled. This sheds light on the necessity of our alternative identifiability guarantees in some complicated real-world scenarios.

\begin{figure}[!t]
    \centering
    \includegraphics[width=0.35\textwidth]{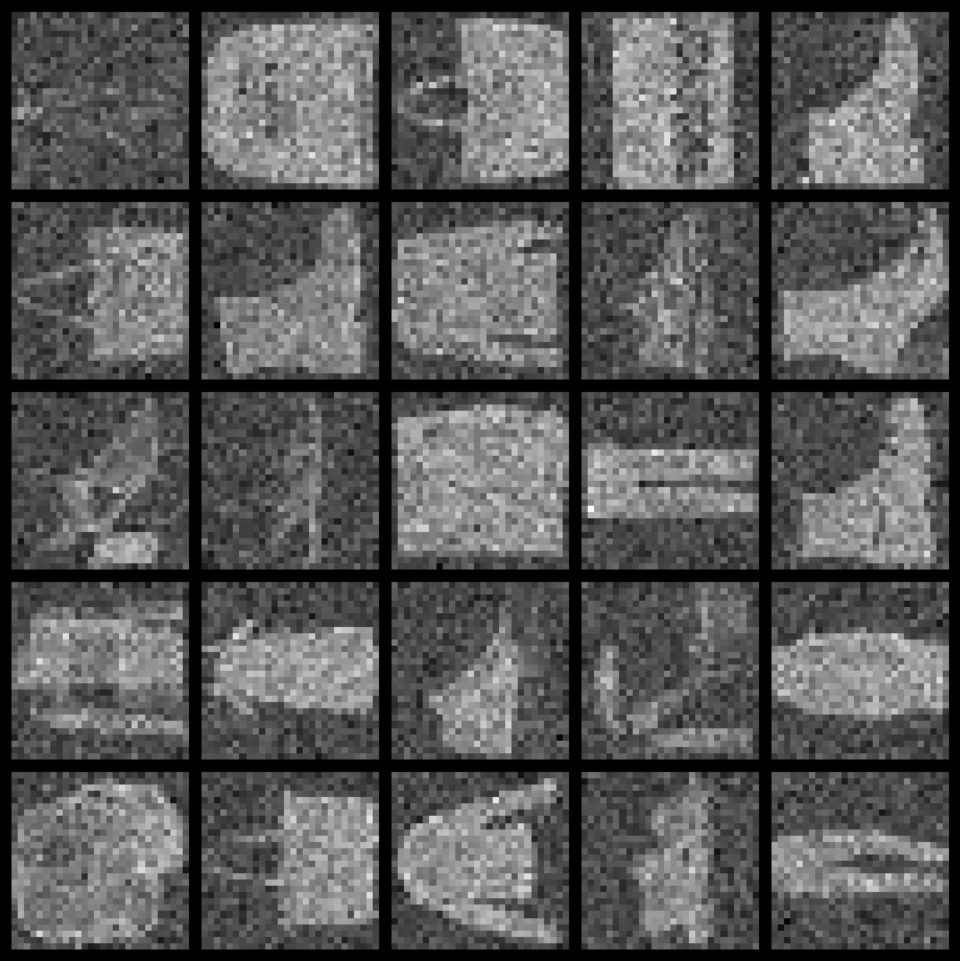} % Replace with your image file
    \caption{Samples of Fashion-MNIST with noise.}
    \label{fig:noisy_samples}
\end{figure}

\begin{figure}[!t]
    \centering
    \includegraphics[width=0.6\textwidth]{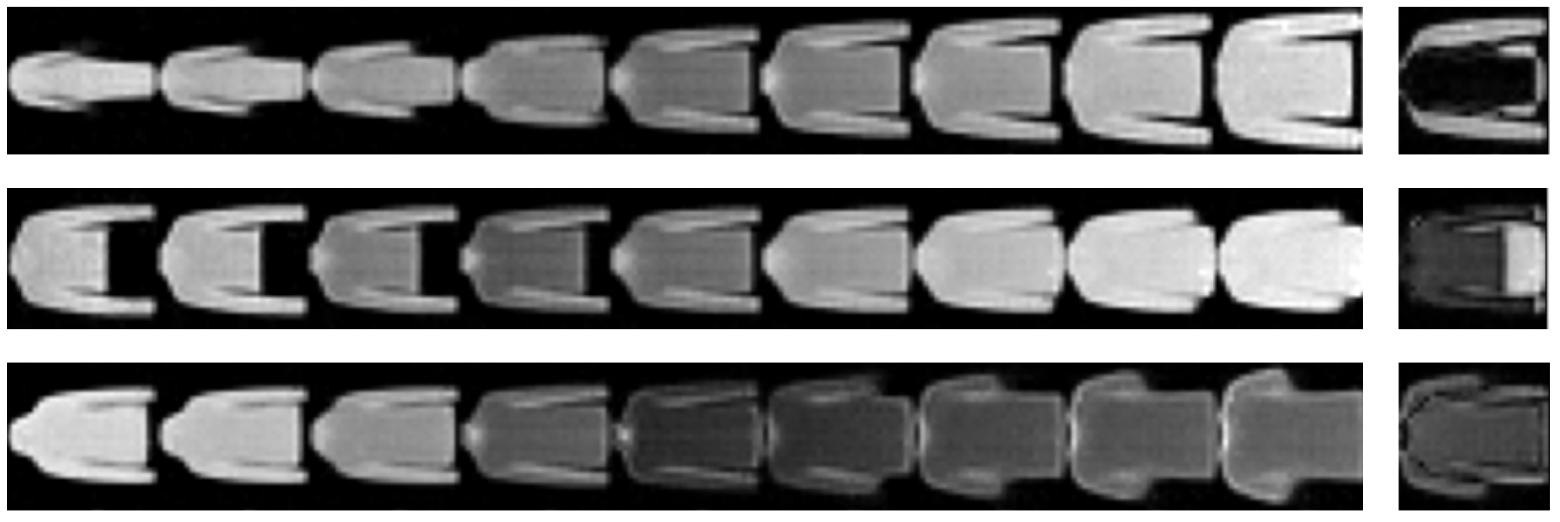} % Replace with your image file
    \caption{Results on \textit{noisy} Fashion-MNIST. The rows correspond to different identified concepts: ``sleeve length,'' ``torso length,'' and ``shoulder width,'' respectively.}
    \label{fig:noisy_concepts}
\end{figure}

\textbf{Learning concepts with noise.} \ \ 
We evaluate the robustness of concept recovery in noisy environments. Specifically, we introduce non-additive random noise to the Fashion-MNIST images, since additive noise has been extensively studied and can often be removed via deconvolution. The more challenging case with non-additive noise is also shown to be identifiable with rather general conditions \citep{zheng2025nonparametric}. Example noisy samples are shown in Fig. \ref {fig:noisy_samples}. Despite the corruption, our method still recovers meaningful concepts (Fig. \ref{fig:noisy_concepts}), demonstrating the framework’s robustness to general noise.

\begin{figure}[!t]
    \centering
    \includegraphics[width=0.8\textwidth]{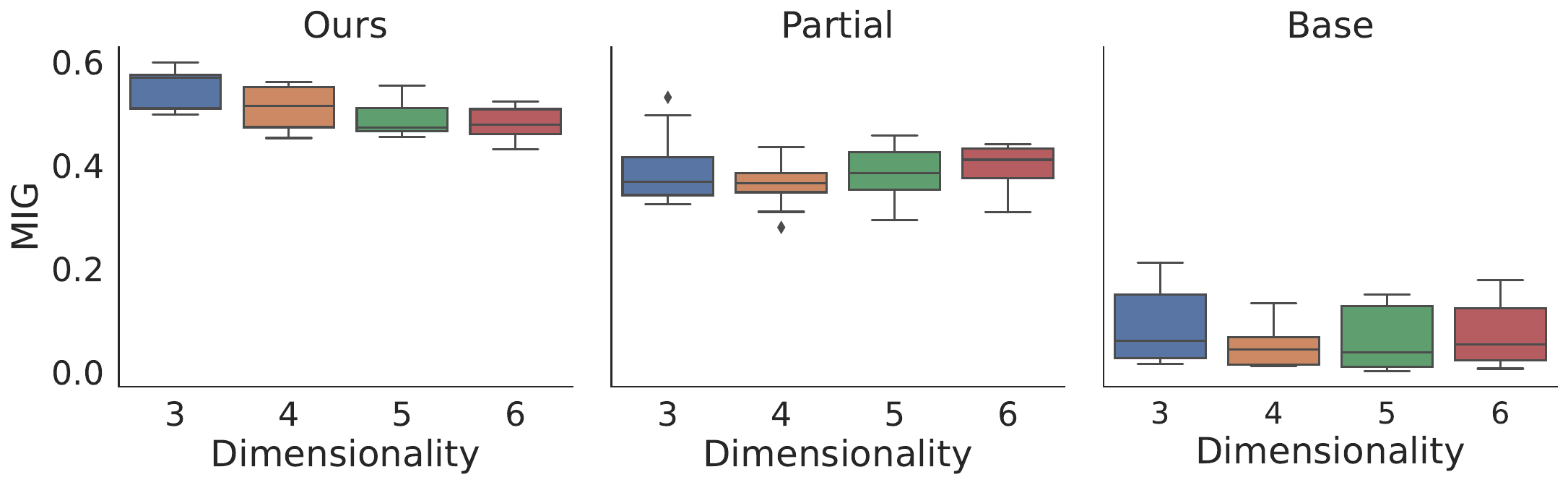} % Replace with your image file
    \caption{Mutual Information Gap (MIG) w.r.t. $10,000$ samples and different number of concepts. All results are from $10$ runs with different random seeds. The settings for \textit{Ours} and \textit{Base} follow those used in the main manuscript. For \textit{Partial}, we intentionally violate the structural diversity assumption for one randomly selected concept and vary the total number of concepts to control the proportion of the violated concept.}
    \label{fig:mig}
\end{figure}

\begin{figure}[!t]
    \centering
    \includegraphics[width=0.8\textwidth]{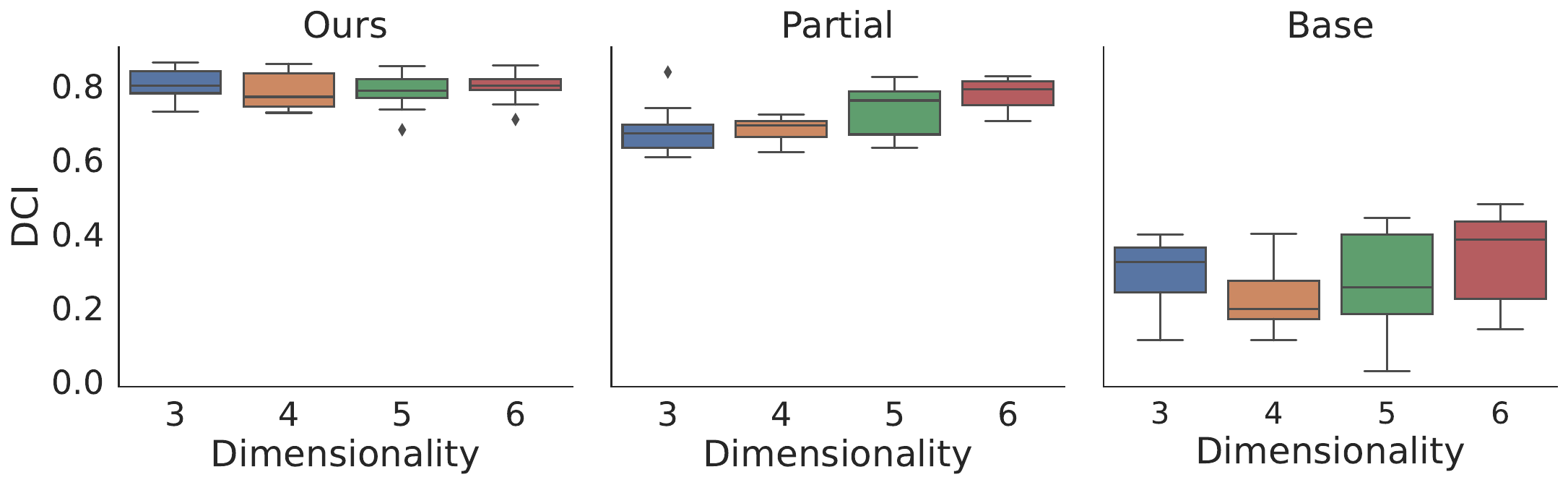} % Replace with your image file
    \caption{Disentanglement, Completeness, and Informativeness (DCI) Disentanglement score w.r.t. $10,000$ samples and different number of concepts. All results are from $10$ runs with different random seeds. The settings for \textit{Ours} and \textit{Base} follow those used in the main manuscript. For \textit{Partial}, we intentionally violate the structural diversity assumption for one randomly selected concept and vary the total number of concepts to control the proportion of the violated concept.}
    \label{fig:dci}
\end{figure}

\begin{figure}
    \centering
   \begin{tabular}{c}
        \includegraphics[scale=0.5]{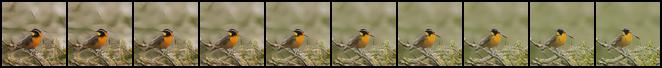} \\
            \includegraphics[scale=0.5]{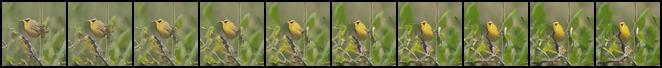}\\
                \includegraphics[scale=0.5]{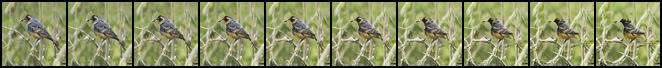}

   \end{tabular}
    \caption{Results on CUB-200-2011. The rows correspond to different identified concepts: ``color,'' ``orientation,'' and ``size,'' respectively.}
    \label{fig:cub_concept}
\end{figure}

\begin{figure}
    \centering
   \begin{tabular}{c}
        \includegraphics[scale=0.5]{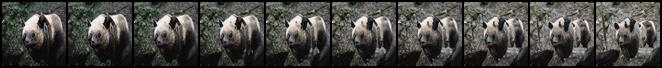} \\
          \includegraphics[scale=0.5]{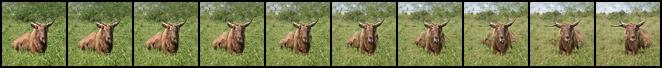} \\
           \includegraphics[scale=0.5]{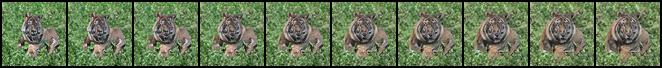} \\
   \end{tabular}
    \caption{Results on AWA2. The rows correspond to different identified concepts: ``number of animals,'' ``orientation,'' and ``size,'' respectively.}
    \label{fig:awa_concept}
\end{figure}

\textbf{Quantitative evaluation on images.} \ \ 
To quantitatively evaluate our framework on images, we conducted a series of experiments. Specifically, we report Mutual Information Gap (MIG) \citep{chen2018isolating} and DCI scores \citep{eastwood2018a} for synthetic datasets (Figures~\ref{fig:mig} and~\ref{fig:dci}), and Fréchet Inception Distance (FID) \citep{heusel2017gans} and Perceptual Path Length (PPL) \citep{karras2019style} for real-world datasets, CUB-200-2011 \cite{wah2011cub} and AWA2 \cite{xian2018zero}. CUB-200-2011 contains 200 bird classes with 11,788 images; AWA2 includes 50 animal categories with 37,322 images. We report a Mutual Information Gap (MIG) of $8.135$, a DCI score of $16.533$, a Fréchet Inception Distance (FID) of $8.11$, and a Perceptual Path Length (PPL) of $30.938$, indicating strong disentanglement and high-quality generation under our framework. We also visualize the recovered concepts from CUB-200-2011 and AWA2 (Figs. \ref{fig:cub_concept} and \ref{fig:awa_concept}).

\section{Supplementary Discussion} \label{sec:diss_ap}

\subsection{Challenges, Motivations, and Implications}

\textbf{Challenges of nonparametric identifiability.} \ \ 
Theoretical and empirical evidence consistently demonstrates that, without additional assumptions, achieving identifiability in nonparametric generative processes is nearly impossible \citep{hyvarinen1999nonlinear, locatello2019challenging}. This challenge extends beyond concept learning to fields such as independent component analysis and causal representation learning. Without additional conditions, it is impossible to ensure that the recovered concepts align with the ground truth, as the solution space is vastly underconstrained, allowing non-trivial mixtures of the true concepts to generate observationally equivalent data. Parametric assumptions, such as linearity, disjointness, or additivity, constrain the solution space, reducing ambiguity and identifying the true generative process up to certain indeterminacies. In contrast, our focus is on understanding what concepts can be recovered in the general setting. Our nonparametric identifiability results provide insights into this general setting and suggest a promising direction.

\textbf{Learning by pairwise comparison.} \ \ Theorem \ref{thm:pair} demonstrates that for any given pair of classes and their corresponding sets of hidden concepts, the unique concepts in each class can be disentangled from all the remaining concepts. This process is fundamental to the cognitive mechanism of learning through comparison. Consider an infant with no prior experience of the world: when presented with two classes, such as a cat and a dog, the infant learns and memorizes the unique concepts associated with each class, such as "meows" for the cat and "barks" for the dog. The invariant concepts, like "furry" or "four-legged," cannot be distinctly learned because they do not provide distinguishing information between the classes. From a cognitive science perspective, infants and young learners rely heavily on contrastive features to form distinct categories and concepts \citep{eimas1971speech}. For instance, if an infant repeatedly hears a cat meow and a dog bark, they begin to associate these unique sounds with the respective animals. In contrast, shared attributes like fur or four legs do not stand out because they do not help in differentiating between the two animals. This emphasizes the role of unique concepts in early learning and memory, highlighting how pair-wise comparisons are essential in the process of discovering the new world. For machines to learn without prior knowledge, we argue that similar mechanisms also help.

\textbf{Learning by local comparison.} \ \ Corollary \ref{cor:multi} extends these theoretical guarantees from pair-wise comparisons to local comparisons among multiple classes. Although pairwise comparison is fundamental to the learning mechanism, local comparison is more efficient in complex scenarios. For instance, when an infant is exposed to a variety of stimuli, they do not learn by isolating pairs indefinitely. Instead, they begin to discern patterns and unique features within a broader context, comparing multiple classes simultaneously. For example, a child distinguishing between a cat, a dog, and a bird must identify unique concepts such as "meows," "barks," and "chirp." As the child interacts with these animals in different contexts—perhaps hearing a bird chirp in the park, a dog bark at home, and a cat meow in the neighbor's yard—they learn to associate specific sounds and behaviors with each animal. This local comparison ensures that even as the number of classes increases, the child can efficiently disentangle and learn the unique concepts of each class, providing a more complete understanding of the new environment.

\textbf{Class-concept alignment.} \ \ 
Proposition \ref{prop:structure} indicates that, the recovered hidden structure between classes and concepts is an isomorphism of the ground-truth structure. Intuitively, this helps the machine understand which concepts correspond to a given class of observations. While this process may seem straightforward to us, it can be challenging for infants or machines without prior experience, as it aligns with an essential step of learning through comparison. For instance, consider an infant presented with a set of objects like a cat, a dog, and a bird (the classes) and a set of concepts like "furry," "barks," and "flies." Without proper knowledge, the infant might incorrectly assign "barks" to the cat or "flies" to the dog, lacking the experience to accurately match these concepts with the correct classes. The concept of "furry" might also be mistakenly assigned to the bird, despite its inapplicability. Therefore, to distinguish different classes by their concepts and learn unique concepts through comparison, the machine must first recover the underlying connective structure. This is essential for provably learning from multiple classes of observations.

\subsection{Supplementary Examples for Assumptions}\label{sec:diss_ap_exa}

We first include a concrete example for assumptions in Thm. \ref{thm:pair} and Cor. \ref{cor:multi} as follows:

\begin{example} \label{exa:support}
    Suppose there exist two samples with their corresponding Jacobians given by $D_\mathbf{c} g ((c, \theta)^{(1)})_{:, i} = (0, 1, 2)$ and $D_\mathbf{c} g ((c, \theta)^{(2)})_{:,i} = (0, 3, 4)$. Clearly, these two vectors span a $2$-dimensional subspace. We can also find a matrix $\mathrm{T}$ (e.g., a binary matrix with the same support as $\mathbf{T}$) s.t. $\left[\mathrm{T} {D_\mathbf{c} g((c, \theta)^{(\ell)})} \right]_{:,i} \in \mathbb{R}_{\hat{\mathcal{D}}_{:,i}}^{n_A}$ for $\ell \in \{1, 2\}$. Since identifiability theory considers an infinite number of samples, the requirement for several non-degenerate samples is almost always satisfied asymptotically.
\end{example}

Then we introduce an illustrative example for the distributional variability condition in Thm. \ref{thm:changing}. Intuitively, the condition is a generic faithfulness assumption, ruling out special parameter combinations that would make the two integrals equal.

\begin{example}\label{exa:vari}
    Consider $\mathbf{c}$ as a 2-dimensional vector with $c^{(k)} = [1, 0]$ and $c^{(v)} = [0, 1]$. Let $\mathcal{Z} = \mathbb{R}^2$, and $A_{\mathbf{z}} = \{(z_1, z_2) \in \mathbb{R}^2 : 0 \leq z_1 \leq 2, 0 \leq z_2 \leq 1\}$. The conditional densities are $p(\mathbf{z} \mid \mathbf{c} = [1, 0]) = \frac{1}{2\pi} e^{-\frac{(z_1-1)^2 + (z_2-0)^2}{2}}$ and $p(\mathbf{z} \mid \mathbf{c} = [0, 1]) = \frac{1}{2\pi} e^{-\frac{(z_1-0)^2 + (z_2-1)^2}{2}}$. Evaluating the integrals over $A_{\mathbf{z}}$, we have 
    $$\int_{0}^{1} \int_{0}^{1} \frac{1}{2\pi} e^{-\frac{(z_1-1)^2 + (z_2-0)^2}{2}} dz_1 dz_2 \neq \int_{0}^{1} \int_{0}^{1} \frac{1}{2\pi} e^{-\frac{(z_1-0)^2 + (z_2-1)^2}{2}} dz_1 dz_2.$$
    Note that $(k,v)$ can even be different for different $A_{\mathbf{z}}$, which further weakens the assumption.
\end{example}

% \section{You \emph{can} have an appendix here.}

% You can have as much text here as you want. The main body must be at most $8$ pages long.
% For the final version, one more page can be added.
% If you want, you can use an appendix like this one.  

% The $\mathtt{\backslash onecolumn}$ command above can be kept in place if you prefer a one-column appendix, or can be removed if you prefer a two-column appendix.  Apart from this possible change, the style (font size, spacing, margins, page numbering, etc.) should be kept the same as the main body.
%%%%%%%%%%%%%%%%%%%%%%%%%%%%%%%%%%%%%%%%%%%%%%%%%%%%%%%%%%%%%%%%%%%%%%%%%%%%%%%
%%%%%%%%%%%%%%%%%%%%%%%%%%%%%%%%%%%%%%%%%%%%%%%%%%%%%%%%%%%%%%%%%%%%%%%%%%%%%%%

\end{document}